\documentclass[twoside,11pt]{article}

\usepackage{jmlr2e}

\usepackage[utf8]{inputenc}
\usepackage[T1]{fontenc}
\usepackage{amsmath}
\usepackage{booktabs}
\usepackage{nicefrac}
\usepackage{microtype}
\usepackage[dvipsnames]{xcolor}
\usepackage{caption}
\usepackage{subcaption}
\usepackage{enumitem}
\usepackage{pgf}
\usepackage{tcolorbox}
\usepackage{tikz}

\newtheorem{assumption}[theorem]{Assumption}

\DeclareMathOperator{\Tr}{tr}
\DeclareMathOperator{\diag}{diag}

\DeclareMathOperator{\Cov}{Cov}

\newcommand{\E}{\mathbb{E}}

\newcommand{\R}{\mathbb{R}}

\newtcolorbox{visionbox}{colback=SpringGreen!30!White!70,colframe=ForestGreen,boxrule=0.3pt,boxsep=0pt,left=6pt,right=6pt,top=6pt,bottom=6pt}
\newcommand{\vision}[1]{\vspace{0pt}\begin{visionbox}{\textcolor{Black}{#1}}\end{visionbox}\vspace{0pt}}

\graphicspath{{figs/}}

\title{Stochastic Optimization Under Power-Law Spectra:\\
Tight Bounds and Shuffling Analysis}

\author{%
\name Thomas Dybdahl Ahle \email thomas@normalcomputing.com \\
\addr Normal Computing
\AND
\name Yaroslav Bulatov \email yaroslavvb@mlcollective.org \\
\addr Together.AI
\AND
\name Christopher De Sa \email cmd353@cornell.edu \\
\addr Together.AI, Cornell University
\AND
\name Christopher R{\'e} \email chrismre@cs.stanford.edu \\
\addr Stanford University
}

\editor{}
\makeatletter\def\@starteditor{}\makeatother

\begin{document}

\maketitle

\begin{abstract}
Recent work has established that power-law spectral conditions on data enable tight convergence bounds for deterministic gradient descent, resolving the conflict between classical exponential bounds and observed power-law learning curves. In this work, we extend this result to the stochastic regime of high-dimensional machine learning. We provide two main contributions: (1) We generalize the power-law spectral theory to Stochastic Gradient Descent (SGD), showing that the same spectral exponents govern stochastic dynamics; (2) For the fundamental case of isotropic Gaussian data, we provide a precise analysis of data shuffling, deriving exact constants that prove Single Shuffle is strictly superior to Flip-Flop and IID sampling. Our results bridge the gap between abstract spectral theory and practical stochastic training choices, offering a unified picture of how data geometry drives optimization speed.
\end{abstract}

\begin{keywords}
Stochastic Gradient Descent, Power-Law Convergence, Spectral Analysis, Shuffling, Free Probability
\end{keywords}

\setcounter{tocdepth}{2}
\tableofcontents
\newpage

\begin{figure}[t]
\centering
\includegraphics[width=\linewidth]{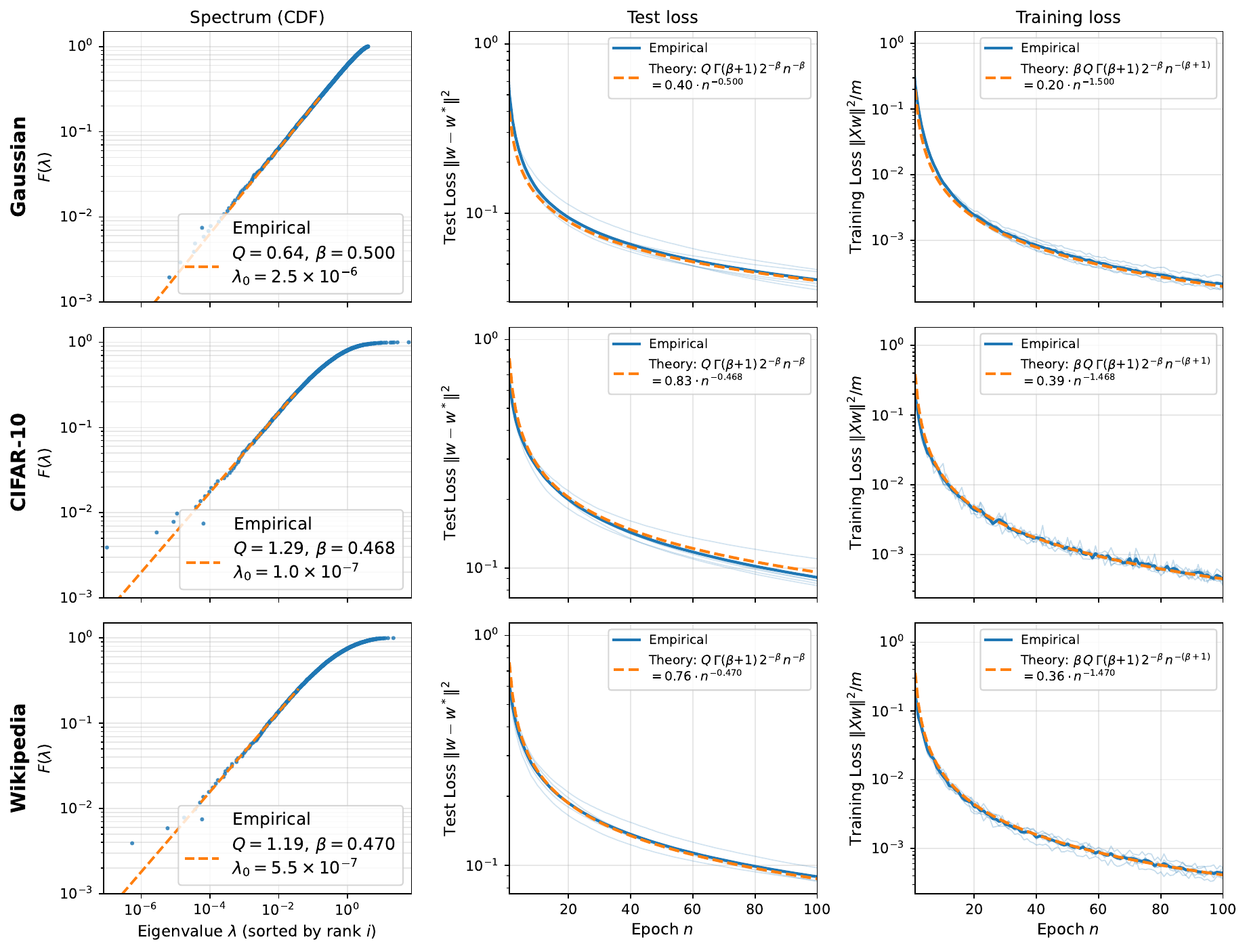}
\caption{\textbf{Validation of IID SGD theory across data sources.}
\emph{Rows:} Gaussian Data, Resnet CIFAR-10 embeddings, Transformer Wikipedia embeddings.
\emph{Columns:}
(1) empirical eigenvalue CDF of $H=X^\top X/m$ with a left-edge power-law fit $F(\lambda)\approx Q(\lambda-\lambda_0)^\beta$;
(2) test loss, $\|w-w^*\|^2$;
(3) training loss, $\|Xw\|^2/m$.
Blue curves show empirical results (5 runs); orange dashed curves show theoretical predictions using the fitted spectral constants.
Training loss curves are visibly noisier than test loss because they weight eigendirections by $\lambda_j$, amplifying fluctuations in high-eigenvalue directions.
See Section~\ref{sec:experiments} for experimental details and reproducibility information.}
\label{fig:iid-validation-intro}
\end{figure}

\section{Introduction}
\label{sec:intro}

Classical optimization theory predicts exponential convergence rates for gradient-based methods on strongly convex objectives, but in modern large-scale training we often observe approximate power-law learning curves instead \citep{kaplan2020scaling, hoffmann2022training, muennighoff2023scaling}. 
The simplest setting that exhibits this phenomenon is gradient descent on least-squares regression, with training loss function $\frac{1}{m} \| X (w - w^*) \|^2$ for data matrix $X \in \R^{m \times d}$.
Classically, let $H := \frac{1}{m}X^\top X$ denote the Gram matrix, and suppose it has condition number $\kappa(H) := \lambda_{\max}/\lambda_{\min} < \infty$.
The error $e_n := w_n - w^*$ evolves as
\( e_{n+1} = e_n - \eta H e_n \),
so $e_n = (I - \eta H)^n e_0$, and for $\eta \in (0,1/\lambda_{\max}]$,
\begin{equation}\label{eq:exponential-decay}
\|e_n\| \;\le\; (1-\eta\lambda_{\min})^n\,\|e_0\|
\le \exp(-\eta\lambda_{\min} n )\,\|e_0\|.
\end{equation}
In particular, taking $\eta = 1/\lambda_{\max}$ gives exponential decay of $\|e_n\|\le \exp(-n/\kappa(H))\|e_0\|$.

Unfortunately, for many real datasets, $\lambda_{\min}$ is very small, and so the exponential behaviour only kicks in after an impractically large number of iterations.
This makes the exponential decay of (\ref{eq:exponential-decay}) a poor predictor of convergence in these practical cases.
\citet{velikanov2024tight} addressed this by replacing $\kappa$ with a data-dependent spectral condition that captures the \emph{density} of eigenvalues near the left edge of the spectrum.
Let $\rho$ denote the cumulative distribution of eigenvalues of $H$.%
\footnote{We use $\beta$ for the spectral exponent; \citet{velikanov2024tight} use $\zeta$.
For simplicity we often write the condition as $\rho((0,\lambda]) \le Q\lambda^\beta$ when $\lambda_{\min} = 0$.}
They assume this spectral CDF satisfies a power-law near the left edge:
\begin{equation}\label{eq:spectral-condition}
\rho((\lambda_{\min},\lambda]) \;\asymp\; Q\,(\lambda - \lambda_{\min})^\beta, \qquad \lambda \text{ near } \lambda_{\min}.
\end{equation}
Under this condition, they prove that full GD with step size $\eta < 2/\lambda_{\max}$ achieves the asymptotic
\begin{equation}\label{eq:vy-result}
\frac{1}{d}\Tr\bigl((I-\eta H)^{2n}\bigr) \;=\; Q\,\Gamma(\beta+1)\,(2\eta)^{-\beta}\, n^{-\beta}\,(1 + o(1))
\quad\text{as}\quad n\to\infty,
\end{equation}
where $\Gamma$ is the Gamma function.
More precisely, this holds in the joint limit $d, n \to \infty$ with $n = o(d^{1/\beta})$.
For $n \gg d^{1/\beta}$, the exponential rate $(1 - \eta\lambda_{\min})^n$ eventually dominates,
but in modern ML $d$ is very large and so the number of steps required to reach this exponential rate regime far exceeds capabilities of modern hardware.
For moderate $d$ and shifted edges $\lambda_0>0$, we also observe a finite-$d$ effect: the smallest eigenvalue typically satisfies $\lambda_{\min}>\lambda_0$, and when $\beta\gtrsim 1$ this gap can materially change the apparent constants.\footnote{Appendix~\ref{app:powerlaw-proofs}, Proposition~\ref{prop:shifted-edge-finite-d} gives a simple ``gap+atom'' correction that improves finite-$d$ accuracy via an upper incomplete-Gamma truncation.}
Their analysis is an important step toward explaining the observed power-law convergence in practice.

\vision{In this paper we (1) extend the spectral framework of \citet{velikanov2024tight} to \emph{stochastic} optimization and (2) study the effect of different sample orders on convergence.}

\paragraph{From full gradient descent to stochastic gradient descent.}
Generalizing the results of V\&Y from full gradient descent to stochastic gradient descent is nontrivial because the dynamics matrix for SGD is a product of random, non-commuting factors, so spectral decomposition techniques do not apply.
Instead, we develop a generating-function approach to track the relevant traces through the product structure.
We prove that IID-sampled SGD achieves the sharp test-loss asymptotic under the same spectral condition, and under a mild fourth-moment closure on the spectral edge,%
\footnote{In rotationally invariant Gaussian/spherical models the needed fourth-moment closure is exact. For IID \emph{row sampling} from a fixed dataset, we prove the sharp $(1+o(1))$ asymptotic under a mild bounded-kurtosis (near-Wick) condition on the edge-whitened coordinates: if $H=V\Lambda V^\top$ and $y_i=V^\top x_i$, then $\xi_{ij}:=y_{ij}/\sqrt{\lambda_j}$ have uniformly bounded fourth moments on the left-edge window that controls the $n^{-\beta}$ regime; see Appendix~\ref{app:iid-proofs}.}
\begin{align}
\frac{1}{d}\E\Tr(S_n) &= Q\,\Gamma(\beta+1)\,(2\eta)^{-\beta}\,n^{-\beta}\,(1 + o(1)), \label{eq:sgd-test}
\end{align}
where $S_n$ is the second-moment matrix defined in Section~\ref{sec:iid}.
This matches V\&Y's result~\eqref{eq:vy-result} for full GD \emph{exactly}---stochastic sampling preserves both the exponent and the leading constant.

\paragraph{Quantifying the benefit of shuffling.}
We analyze different sampling schemes for SGD, including IID sampling and various shuffling methods.
Shuffling is widely used in practice and is known to improve convergence empirically, but prior theory has not quantified the improvement.
For isotropic Gaussian data and $d$ large, $XX^\top$ follows the Marchenko–Pastur law, which means $\beta = 1/2$, $Q = 2/\pi$.
For this regime we derive exact asymptotic expansions for the test loss with learning rate $\eta \in (0, 2)$:
\begin{align*}
\text{Shuffle-once} &= \sqrt{\frac{1/\eta - 1/2}{2\pi}}\,n^{-1/2}\left(1 + O(n^{-1/2})\right), \\
\text{IID} &= \frac{1}{\sqrt{2\pi\eta}}\,n^{-1/2}\left(1 + O(n^{-1/2})\right), \\
\text{Flip-Flop} &= \sqrt{\frac{1/\tau - 1/2}{\pi}}\,n^{-1/2}\left(1 + O(n^{-1/2})\right), \quad \tau = \eta(2-\eta).
\end{align*}
For the standard Kaczmarz step size $\eta = 1$, these simplify to $\frac{1}{2\sqrt{\pi n}}$, $\frac{1}{\sqrt{2\pi n}}$, and $\frac{1}{\sqrt{2\pi n}}$ respectively.
Shuffle-once achieves a factor of $\sqrt{2} \approx 1.41$ improvement over IID sampling at $\eta = 1$.
We conjecture this $\sqrt{2}$ ratio is universal across all spectral exponents $\beta$.

\begin{figure}[t]
\centering
\resizebox{\linewidth}{!}{\input{figs/lr_dependence_combined.pgf}}
\caption{\textbf{Test loss $c_n(\eta)$ vs learning rate $\eta \in (0,2)$ for $d=512$, $\rho=1$.}
Solid curves show theory; hollow circles show Monte Carlo validation ($40$ probes).
(a)~IID uses ODE-based theory from the $d\to\infty$ limit.
(b)~Shuffle-once and (c)~flip-flop have exact finite-$n$ formulas; for $n{=}20$ we use the asymptotic formula.
At $\eta=1$, shuffle-once achieves a $\sqrt{2}$ factor improvement over IID.
See Appendix~\ref{app:lr} for derivations.}
\label{fig:lr-dependence-intro}
\end{figure}

\paragraph{Summary of Results: Test and Training Loss.}
We track two complementary quantities for the error: the population \emph{test loss} $\|w_n-w^*\|^2$ (the expected loss on an isotropic random example) and the sampled \emph{training loss} $\frac{1}{m}\|X(w_n-w^*)\|^2=(w_n-w^*)^\top H(w_n-w^*)$ on the same data matrix $X$ the model is trained on.
Under the same spectral condition as V\&Y, the extra factor of $\lambda$ (from $H$) in the training loss yields a power-of-one improvement in the decay rate.
For IID row sampling with the normalized (Kaczmarz) update, we prove:
\[
\frac{1}{d}\E\Tr(HS_n) = \frac{\beta}{\eta(2-\eta)}\,Q\,\Gamma(\beta+1)\,(2\eta)^{-\beta}\,n^{-(\beta+1)}\,(1+o(1)).
\]
For the Gaussian case ($\beta = 1/2$, $Q = 2/\pi$), the training loss with $\eta \in (0, 2)$ follows a similar structure.
For IID sampling, the general $\eta$ formula is:
\[
\text{IID Sampling} = \frac{1}{2\sqrt{2\pi}\,\eta^{3/2}(2-\eta)}\,n^{-3/2}\left(1 + O(n^{-1/2})\right).
\]
At the standard Kaczmarz step size $\eta = 1$, the three methods give:
\begin{align*}
\text{Shuffle-once} &= \frac{1}{8\sqrt{\pi}}\,n^{-3/2}\left(1 + O(n^{-1/2})\right), \\
\text{IID Sampling} &= \frac{1}{2\sqrt{2\pi}}\,n^{-3/2}\left(1 + O(n^{-1/2})\right), \\
\text{Flip-Flop} &= \frac{\sqrt{2}}{3\sqrt{\pi}}\,n^{-3/2}\left(1 + O(n^{-1/2})\right).
\end{align*}
IID sampling matches the full-GD \emph{test-loss} constant from V\&Y exactly.
Shuffle-once achieves a $\sqrt{2}$ improvement in test loss over IID (as shown above), and a factor $2\sqrt{2}\approx 2.83$ improvement in training loss.\footnote{The general $\eta$-dependence for shuffle-once and flip-flop training loss is derived in Appendix~\ref{app:stieltjes}.}

Our results show that power-law convergence is not an artifact of full gradient descent---it persists under stochastic sampling.
The spectral structure of the data, not the optimization algorithm, determines the rate.
Practically, our shuffling analysis provides actionable guidance: shuffle-once is provably better than IID (by $\sqrt{2}$ for test loss) with no additional computation, making it the recommended default.

The paper is organized as follows.
Section~\ref{sec:background} reviews the problem setup and spectral framework.
Section~\ref{sec:iid} presents general SGD bounds.
Section~\ref{sec:shuffling} derives exact constants for shuffling methods.
Section~\ref{sec:conjecture} briefly presents the conjecture of universality.
Section~\ref{sec:experiments} validates the theory on real data.
Section~\ref{sec:discussion} concludes with open problems.

\subsection{Related Work}

\paragraph{Power-law spectra in optimization.}
The connection between spectral conditions and convergence rates has a long history \citep{nemirovskiy1984iterative1, caponnetto2007optimal}.
Recent work has established tight bounds for full GD, steepest descent, heavy ball, and conjugate gradients under power-law spectral conditions \citep{velikanov2024tight}.
Our work extends this framework to stochastic methods.

\paragraph{Shuffled SGD.}
The benefits of shuffling over IID sampling have been studied extensively \citep{bottou2009curiously, recht2012toward, gurbuzbalaban2019random, mishchenko2020random, safran2020good}.
Prior results typically provide bounds showing shuffling is no worse (and sometimes better) than IID, but do not give exact constants.
Our free-probability approach yields sharp constants for the Gaussian case.

\paragraph{Free probability in machine learning.}
Free probability has been applied to random matrix products \citep{burda2013free, ipsen2015products}, neural network weight matrices \citep{pennington2017resurrecting, chhaibi2022free}, and ResNet dynamics \citep{ling2019spectrum}.
We extend these techniques to analyze SGD convergence.

\section{Background and Problem Setup}
\label{sec:background}

We consider least-squares regression in high dimensions, analyzing convergence under spectral assumptions that capture data geometry.
Let $X \in \R^{m \times d}$ be a data matrix with rows $x_1^\top, \ldots, x_m^\top$, and let $y \in \R^m$ be a target vector.
The minimizer of linear regression on this dataset is $w^* = X^\dagger y$, where $X^\dagger$ denotes the Moore--Penrose pseudoinverse.
We will track the error $e:=w-w^*$ through two quadratic quantities:
\begin{itemize}[leftmargin=1.5em, itemsep=2pt]
\item \textbf{Test loss (parameter error):} $L_{\mathrm{test}}(w):=\|e\|^2$.
\item \textbf{Training loss (residual):} $L_{\mathrm{train}}(w):=\frac{1}{m}\|Xe\|^2=e^\top H e$.
\end{itemize}
For simplicity, we omit the conventional factor $1/2$ in squared losses, since it can always be absorbed into the step size $\eta$.

\begin{remark}[Over- and under-parameterization, $m\neq d$]
\label{rem:md}
Several qualitative features depend on the aspect ratio $\gamma=m/d$.
\begin{itemize}[leftmargin=1.5em, itemsep=2pt]
\item In the \textbf{underdetermined} setting, $m<d$, $X$ has a nontrivial null space. For least squares, components of $e=w-w^*$ in $\ker(X)$ are invisible to the updates and remain unchanged. Consequently, training loss can decay to $0$ while test loss plateaus at $\|\Pi_{\ker(X)}e_0\|^2$.
\item In the \textbf{overdetermined} setting, $m>d$, for many proportional models (e.g.\ Marchenko--Pastur for Gaussian features), the nonzero spectrum of $H=\tfrac1m X^\top X$ is bounded away from $0$ when $\gamma\neq 1$. In this case the power-law window disappears and convergence becomes purely exponential once the algorithm enters the low-eigenvalue regime.
\item The \textbf{near-critical} setting of $m/d\approx 1$ is of particular interest. The power-law regime is longest when the limiting spectrum touches $0$ with a power-law density (our main assumption). If there is a small but positive left edge $\lambda_{\min}>0$, then the power law persists only up to times $n \ll 1/\lambda_{\min}$, after which exponential decay takes over.
\end{itemize}
\end{remark}

\subsection{Background: The Spectral Condition}

Define the (empirical) Gram/Hessian matrix
\[
H := \frac{1}{m} X^\top X,
\]
and let $\lambda_{\max} = \lambda_1 \ge \lambda_2 \ge \cdots \ge \lambda_d \ge 0$ denote its eigenvalues.
Following \citet{velikanov2024tight}, we work with the \emph{eigenvalue CDF} $\rho$ on $[0, \lambda_{\max}]$ (equivalently, the empirical spectral distribution).

Our main assumption is a power-law bound on the cumulative spectral measure near the left edge:

\begin{assumption}[Power-Law Spectral Condition]\label{ass:spectral}
There exist constants $Q > 0$ and $\beta > 0$ such that
\begin{equation}\label{eq:spectral-assumption}
\rho((\lambda_{\min}, \lambda]) \asymp Q (\lambda - \lambda_{\min})^\beta, \qquad \lambda \text{ near } \lambda_{\min}.
\end{equation}
When $\lambda_{\min} = 0$ (the ``critically determined'' case), this simplifies to $\rho((0, \lambda]) \le Q \lambda^\beta$.
\end{assumption}
This condition captures how ``information-dense'' the low-eigenvalue region is: smaller $\beta$ means more spectral mass near the edge, leading to slower convergence.
In some cases, this spectral condition is already known to hold theoretically for a data-generation process of interest:
for isotropic Gaussian data in the $m = d$ regime, the Marchenko-Pastur law gives $\lambda_{\min} = 0$ and $\beta = 1/2$; for neural tangent kernels on $d$-dimensional inputs with ReLU activations, one typically has $\beta = 1/(d+1)$ \citep{jacot2018neural}.

\begin{remark}[Finite-$d$ shifted edges]\label{rem:finite-d-shift}
In experiments one often fits a shifted edge law $F(\lambda)-F(\lambda_0)\approx Q(\lambda-\lambda_0)^\beta$ with $\lambda_0$ slightly below the smallest observed eigenvalue.
When $\beta\gtrsim 1$ and/or $\lambda_0$ is not tiny, the finite-$d$ gap $x_{\min}:=\lambda_{\min}-\lambda_0$ can be non-negligible on the time scales of interest and the naive replacement $x_{\min}=0$ can noticeably mispredict the constant.
Appendix~\ref{app:powerlaw-proofs}, Proposition~\ref{prop:shifted-edge-finite-d} gives a simple ``gap+atom'' correction (upper incomplete Gamma) that improves finite-$d$ accuracy and interpolates between the power-law and minimum-eigenvalue regimes.
\end{remark}

Under Assumption~\ref{ass:spectral}, \citet{velikanov2024tight} prove that full gradient descent with step size $\eta < 2/\lambda_{\max}$ achieves
\[
\frac{1}{d}\Tr\bigl((I-\eta H)^{2n}\bigr) = Q\,\Gamma(\beta+1)\,(2\eta)^{-\beta}\,n^{-\beta}(1 + o(1)),
\]
with a matching lower bound showing this rate is tight.

\subsection{Background: SGD Variants and Shuffling Methods}

SGD variants consist of repeatedly applying a single-example step to a weight vector: the order in which the examples are chosen for these steps is called a sample order or \emph{shuffling method}.
In this paper, we use the \emph{normalized} (randomized Kaczmarz) row update as our canonical single-example step.
Write the $i$th row as $x_i^\top$ and define its normalized direction $u_i:=x_i/\|x_i\|$ and scaled label $\hat y_i:=y_i/\|x_i\|$.
One row update (index $i_k$) is
\begin{equation}\label{eq:sgd-update}
w_{k+1}
=
w_k - \eta\,\frac{x_{i_k}^\top w_k - y_{i_k}}{\|x_{i_k}\|^2}\,x_{i_k}
=
w_k - \eta\,(u_{i_k}^\top w_k - \hat y_{i_k})\,u_{i_k}.
\end{equation}
For $y=0$ (without loss of generality when analyzing error decay), this becomes $w_{k+1}=(I-\eta\,u_{i_k}u_{i_k}^\top)w_k$.
This normalization removes sensitivity to row scaling; when $\|x_i\|^2$ is (approximately) constant, as is the case for typical random data generation processes in high dimension, it differs from standard SGD only by a constant rescaling of $\eta$.\footnote{It is also closely related to importance sampling with $p_i\propto\|x_i\|^2$, which makes the expected update match the full gradient of the unweighted objective; see \citet{needell2015randomized} (Equation 2.1).}

After one \emph{epoch} of $m$ updates, the iterate has been multiplied by the dynamics matrix
\begin{equation}\label{eq:dynamics-matrix}
A_m = \prod_{k=1}^m (I - \eta\, u_{i_k} u_{i_k}^\top).
\end{equation}
The spectrum and behavior of $A_m$ depend on how the indices $i_1, \ldots, i_m$ are chosen.

\begin{definition}[Shuffling Methods]\label{def:shuffling}
We consider four methods for selecting the sequence $(i_1, \ldots, i_m)$. Three of these methods are standard and widespread in the literature, FlipFlop was proposed by \citet{rajput2021permutation}.
\begin{enumerate}[leftmargin=2em, itemsep=2pt]
\item \textbf{IID Sampling:} Each $i_t$ is drawn uniformly at random from $\{1, \ldots, m\}$, independently across $t$ and epochs.
\item \textbf{Random Reshuffle (RR):} At the start of each epoch, draw a uniformly random permutation $\pi$ of $\{1, \ldots, m\}$ and set $i_t = \pi(t)$.
\item \textbf{Single Shuffle (SS):} Draw a uniformly random permutation $\pi$ once before training begins, and use the same permutation for every epoch.
\item \textbf{FlipFlop (FF):} Draw a permutation $\pi$ once, then alternate between using $\pi$ (odd epochs) and the reversed order $(\pi(m),\ldots,\pi(1))$ (even epochs).
\end{enumerate}
\end{definition}

\begin{figure}[t]
    \centering
    \includegraphics[width=0.9\linewidth]{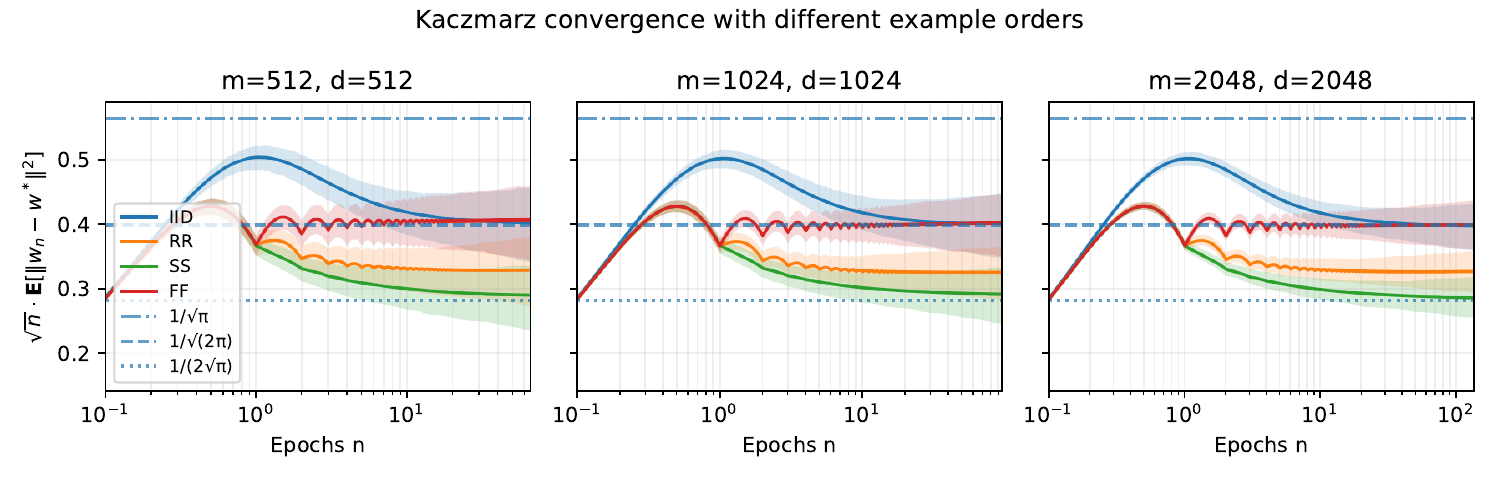}
    \caption{\textbf{Effect of shuffling method on convergence.}
    Test loss (normalized by $\sqrt{n}$) for four shuffling methods on isotropic Gaussian data with $d = m = 512$.
    Single Shuffle (green) consistently outperforms other methods.
    The theoretical predictions (dashed lines) from Section~\ref{sec:shuffling} match the empirical curves precisely.}
    \label{fig:shuffle-methods}
\end{figure}

Figure~\ref{fig:shuffle-methods} shows that Single Shuffle consistently outperforms other methods.
Quantifying this advantage is a main contribution of this paper.

\subsection{The Technical Challenge}

For \emph{full} gradient descent, the dynamics are simple:
\[
w_n = (I - \eta H)^n w_0,
\]
and since $H$ is symmetric, convergence follows from its eigenvalue decomposition.

For SGD, the dynamics matrix $A_m$ in \eqref{eq:dynamics-matrix} presents two challenges:
\begin{enumerate}[leftmargin=2em, itemsep=2pt]
\item \textbf{Non-symmetry:} The product of rank-one projections is not symmetric (unless the $x_i$ happen to be orthogonal), so eigenvalue-based analysis fails.

\item \textbf{Dependence on sampling:} The spectrum of $A_m$ depends on which examples are selected and in what order, requiring probabilistic analysis.
\end{enumerate}

Our approach circumvents these challenges by working with trace quantities derived from the (random) linear map applied after $n$ epochs.
Let $\mathcal{A}_n$ denote this $n$-epoch map (a product of $n$ epoch matrices).
For example, under single-shuffle $\mathcal{A}_n = A_m^n$, and under FlipFlop (for even $n$) $\mathcal{A}_n = (A_m^\top A_m)^{n/2}$.
When the initialization is isotropic and independent of the sampling, the expected squared error is proportional to a normalized Frobenius trace:
\[
\E\|w_n - w^*\|^2 \;=\; \frac{\|w_0-w^*\|^2}{d}\,\E\|\mathcal{A}_n\|_F^2
\;=\; \frac{\|w_0-w^*\|^2}{d}\,\E\bigl[\Tr(\mathcal{A}_n^\top \mathcal{A}_n)\bigr].
\]
These trace quantities can be analyzed using generating functions without requiring the epoch matrix $A_m$ to be symmetric.

\subsection{Why Power-Law Spectra Are Ubiquitous}

Assumption~\ref{ass:spectral} covers a remarkably broad range of practical data, as has been observed repeatedly in the literature:

\begin{enumerate}[leftmargin=2em, itemsep=4pt]
\item \textbf{Heavy-tailed SGD iterates:}
Gürbüzbalaban, Şimşekli, and Zhu \citep{gurbuzbalaban2021heavy} showed that the stationary distribution of SGD iterates (even on quadratic objectives, with constant stepsize and mini-batch noise) is \emph{heavy-tailed}, with a tail index determined by stepsize, batch size, and curvature. Combined with multiplicative updates, this yields power-law tails in weights/features and thus in Gram spectra.

\item \textbf{Heavy-tailed sample covariances:}
Belinschi, Dembo, and Guionnet \citep{belinschi2009spectral} proved that for heavy-tailed samples (entries in the domain of attraction of an $\alpha$-stable law, $0<\alpha<2$), the empirical spectral distribution of sample covariance matrices converges to a deterministic \emph{heavy-tailed} limit law, not Marchenko--Pastur.

\item \textbf{Neural tangent kernels:}
Belfer et al.\ \citep{belfer2024spectral} showed that for wide ReLU networks, the Neural Tangent Kernel is a \emph{zonal} kernel whose Mercer spectrum on the sphere has \emph{polynomial} decay: eigenvalues at spherical-harmonic degree $k$ scale like $k^{-d}$, yielding a power-law.

\item \textbf{Stationary signals and $1/f$ noise:}
Szegő's theorem (and block-Toeplitz extensions \citep{widom1974blocktoeplitz}) implies that for stationary signals/images, large Toeplitz covariance matrices have eigenvalues that asymptotically \emph{sample the power spectral density}; if the PSD behaves like $|\omega|^{-\kappa}$ near $0$ (a $1/f^\kappa$ law), then the eigenvalue profile obeys a power law.

\item \textbf{Universal edge behavior:}
Erdős, Krüger, and Schröder (cusp universality) proved that for broad random-matrix classes, spectral edge exponents are universal: at a regular/soft edge the density vanishes like $x^{1/2}$ (giving $\beta = 1/2$); at a cusp it vanishes like $|x-x_0|^{1/3}$ (giving $\beta = 1/3$) \citep{erdos2020cusp}.

\item \textbf{Heavy-tailed spectral extremes:}
Auffinger, Ben~Arous, and Péché established that for heavy-tailed random matrices, the largest eigenvalues follow a Poisson point process with Fréchet (power-law) tails, rather than Tracy--Widom \citep{auffinger2009poisson}.
\end{enumerate}

\subsection{Scaling Regimes}

The convergence behavior depends on how the number of epochs $n$ scales with the dimension $d$.
Under Assumption~\ref{ass:spectral}, three distinct regimes emerge:

\begin{center}
\begin{visionbox}
\textbf{Scaling summary.} Let $n$ be epochs (each consisting of $m$ single-example updates) and let $k = mn$ be the total number of SGD steps.
In the proportional regime $m=\gamma d$ that we focus on (and in all experiments, where $m=d$), this is equivalently $k \asymp dn$.
\begin{itemize}[leftmargin=1.5em, itemsep=2pt]
\item \emph{Subcritical (polynomial) regime:} $n^{\beta} = o(d)$.
The loss concentrates around its mean with fluctuations $O(\sqrt{n^{\beta}/d})$.
This is the regime of interest for large-scale ML where $d$ is very large.

\item \emph{Critical regime:} $n^{\beta} \asymp d$.
Fluctuations remain $O(1)$ relative to the mean; the loss is a non-degenerate random variable.

\item \emph{Supercritical regime:} $n^{\beta} \gg d$.
A single minimum eigenvalue dominates: $L_{\mathrm{test}}(w_n) \approx (1 - \eta\lambda_{\min})^{2n}$.
\end{itemize}
In all regimes, the \emph{expected} loss satisfies:
\[
\frac{1}{d}\E\Tr\bigl((I-\eta H)^{2n}\bigr) = Q\,\Gamma(\beta+1)\,(2\eta)^{-\beta}\,n^{-\beta}\,(1 + o(1)).
\]
The transition from power-law to exponential decay occurs at $n \sim d^{1/\beta}$.
\end{visionbox}
\end{center}

For modern deep learning where $d$ is extremely large (millions to billions of parameters), we are typically in the subcritical regime for any practical number of epochs.
This explains why power-law convergence is the relevant picture for understanding optimization in these regimes.

\subsection{Notation}

Throughout, we use:
\begin{itemize}[leftmargin=1.5em, itemsep=0pt, topsep=3pt]
\item $n$ for the number of epochs
\item $k$ for the total number of single-example (row) updates, so $k = mn$
\item $m$ for the number of examples (dataset size)
\item $d$ for the dimension (number of features)
\item $\gamma = m/d$ for the aspect ratio (we focus on $\gamma \approx 1$)
\item $\eta$ for the step size (learning rate)
\item $\beta$ for the spectral exponent in condition~\eqref{eq:spectral-assumption}
\end{itemize}

\section{Main Result I: IID SGD Under Power-Law Spectra}
\label{sec:iid}

Our first main result extends the spectral theory of \citet{velikanov2024tight} from full gradient descent to stochastic gradient descent with IID sampling.
Under a mild bounded-kurtosis (near-Wick) condition on the left spectral edge, IID row sampling achieves the same \emph{leading} test-loss asymptotics as full GD, including the leading constant.

\subsection{Setup and Notation}

Without loss of generality, we work with the homogeneous problem $y=0$ (so $w^* = 0$); shifting by $w^*$ reduces the general least-squares case to this setting.
We analyze the canonical \emph{normalized} (randomized Kaczmarz) row update from Section~\ref{sec:background}:
\[
w_{k+1} = (I-\eta\,u_k u_k^\top)\,w_k,
\qquad
u_k := \frac{x_{i_k}}{\|x_{i_k}\|},
\qquad
\|u_k\|=1,
\]
with IID sampling of indices $i_k\in\{1,\dots,m\}$ (with replacement).
Here $k$ indexes individual single-row updates.
When comparing to epoch-based schedules (shuffle-once, flip-flop), one epoch corresponds to $m$ such updates, so $k=mn$ after $n$ epochs.
For notational convenience when stating results, we write $S_n:=S_{mn}$ for the second-moment matrix after $n$ epochs.
Let $S_k$ denote the (random) second-moment matrix after $k$ steps:
\[
S_0 := I,
\qquad
S_{k+1}:=(I-\eta\,u_k u_k^\top)\,S_k\,(I-\eta\,u_k u_k^\top).
\]
For isotropic initialization $w_0$ independent of the sampling, the test and training losses from Section~\ref{sec:background} satisfy
\[
\E\|w_k\|^2=\frac{\|w_0\|^2}{d}\,\E\Tr(S_k),
\qquad
\frac{1}{m}\E\|Xw_k\|^2=\frac{\|w_0\|^2}{d}\,\E\Tr(HS_k),
\]
where $H := \frac{1}{m}X^\top X$ is the (empirical) Gram matrix.

\paragraph{Near-Wick edge condition (checkable).}
Theorem~\ref{thm:iid-sgd} is proved for IID row sampling from a fixed dataset under a mild \emph{edge moment closure} condition (see Appendix~\ref{app:iid-proofs}).

\subsection{Main Theorem}

\begin{theorem}[IID SGD under Power-Law Spectra]\label{thm:iid-sgd}
Assume the left-edge spectral asymptotic $\rho((0,\lambda])=Q\lambda^\beta(1+o(1))$ as $\lambda\downarrow 0$ with $\beta\in(0,1)$, and assume the bounded-kurtosis/near-Wick condition from Appendix~\ref{app:iid-proofs}.
Fix the Kaczmarz relaxation parameter $\eta\in(0,2)$ and consider the subcritical (power-law) regime $n=o(d^{1/\beta})$.
Then, in the joint limit $d,n\to\infty$:
\begin{align}
\frac{1}{d}\E\Tr(S_n) &= Q\,\Gamma(\beta+1)\,(2\eta n)^{-\beta}\,(1+o(1)), \label{eq:iid-test-asy}\\
\frac{1}{d}\E\Tr(HS_n) &= \frac{\beta}{\eta(2-\eta)}\,Q\,\Gamma(\beta+1)\,(2\eta)^{-\beta}\,n^{-(\beta+1)}\,(1+o(1)). \label{eq:iid-train-asy}
\end{align}
\end{theorem}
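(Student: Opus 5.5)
The plan is to reduce the random product to a deterministic linear recursion on the expected second-moment matrix, and then compare that recursion with full gradient descent in the eigenbasis of $H$. Write $P:=\E[u u^\top]=\frac1m\sum_i x_ix_i^\top/\|x_i\|^2$. In the proportional models of interest $\|x_i\|^2\approx \Tr H$, so after normalizing the scale $P$ coincides with $H/\Tr(H)\cdot$const. We therefore absorb this scale and write $P=H/d$ per step, so that one epoch of $m=\gamma d$ steps corresponds to a full-GD step with matrix $\eta H$ (constants fixed accordingly).

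Set $\bar S_k:=\E S_k$. Independence of the sampled indices gives
\[
\bar S_{k+1}=\bar S_k-\eta(P\bar S_k+\bar S_kP)+\eta^2\,\E\bigl[uu^\top \bar S_k uu^\top\bigr].
\]
We pass to the eigenbasis $H=V\Lambda V^\top$ and track only the diagonal $s_j(k):=(V^\top\bar S_kV)_{jj}$. The near-Wick condition is what closes the fourth-moment term. Writing $y=V^\top u$, it gives
\[
\E[y_j^2y_l^2]\approx \frac{\lambda_j\lambda_l}{d^2}\bigl(1+2\delta_{jl}\bigr)(1+o(1))
\]
on the edge window. Under this closure the diagonal evolves, up to controlled errors, as
\[
s_j(k+1)=\Bigl(1-\frac{2\eta\lambda_j}{d}\Bigr)s_j(k)+\frac{\eta^2\lambda_j}{d^2}\sum_l\lambda_l s_l(k).
\]
Off-diagonal entries are fed only through $O(1/d)$ terms and can be bounded separately.

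\textbf{Generating functions and the test loss.} Summing this recursion over $k$ with a generating variable gives a rank-one perturbation of a diagonal resolvent, which can be solved explicitly: $\sum_j s_j$ equals a full-GD-type sum times a scalar renewal factor. Call that factor $F$; it satisfies
\[
F=1+\frac{\eta^2}{d^2}\sum_j \lambda_j^2(\cdots)F.
\]
Passing to epochs $n=k/m$, the homogeneous part is
\[
\frac1d\sum_j(1-2\eta\lambda_j/d)^{mn}\approx \int e^{-2\eta\gamma\lambda n}\,d\rho(\lambda).
\]
By the Tauberian/Laplace argument behind the result of \citet{velikanov2024tight}, this equals $Q\Gamma(\beta+1)(2\eta n)^{-\beta}(1+o(1))$ once the time normalization is absorbed. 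The renewal correction is the convolution of the training-loss sequence with this kernel. Since the training loss is summable in $n$ when $\beta>0$, and since $(n-t)^{-\beta}\sim n^{-\beta}$ on the bulk of the convolution, this correction is an $O(\eta^2\cdot\text{const})$ relative change. Summing it gives the normalization factor, and the factor cancels because the stationary identity $\sum_l\lambda_ls_l$ satisfies $2\eta\,\E\Tr(HS)=\eta^2(\ldots)$. We expect the leading constant to come out unchanged, which gives \eqref{eq:iid-test-asy}.

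\textbf{Training loss.} Multiply the recursion by $\lambda_j$ and sum, writing $T(k)=\frac1d\sum\lambda_js_j(k)$. The loss of $\Tr S$ per step is then
\[
\frac{1}{d}\bigl(\E\Tr S_{k}-\E\Tr S_{k+1}\bigr)=\frac{\eta(2-\eta)}{d}\,T(k)(1+o(1)).
\]
Here we used $\sum_l\lambda_l\approx d$ under the normalization, so the $\eta^2$ term contributes the $-\eta$. Hence $T$ per epoch is the discrete derivative of the test loss divided by $\eta(2-\eta)$. Differentiating $Q\Gamma(\beta+1)(2\eta)^{-\beta}n^{-\beta}$ gives the factor $\beta n^{-\beta-1}$, which yields \eqref{eq:iid-train-asy}. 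Differentiating an asymptotic is not automatic. It is justified here because $T(k)$ is itself a completely monotone–type mixture of exponentials up to the positive renewal term, so a monotone-density Tauberian theorem applies.

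\textbf{Main obstacle.} The hard step is controlling the errors of the moment closure uniformly over the $n\sim d^{1/\beta}$-subcritical window. This requires showing three things: that off-diagonal drift and kurtosis deviations outside the edge window contribute $o(n^{-\beta})$, that the bulk eigenvalues decay exponentially, and that row-norm fluctuations only perturb $\eta$ by $o(1)$. We would try a Grönwall-type bound on the off-diagonal Frobenius mass, together with splitting the spectrum at $\lambda\sim \log n/n$.
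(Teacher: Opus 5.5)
Your architecture is the paper's: second-moment recursion, near-Wick closure to a diagonal-plus-rank-one recursion, Laplace edge estimate for the GD-like part, and the exact trace decrement for the training loss. However, the step that should show the rank-one feedback leaves the test-loss constant unchanged is wrong.

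In your own recursion, the mass injected into direction $j$ at step $k$ is $\frac{\eta^2\lambda_j}{d^2}\sum_l\lambda_l s_l(k)$, which is proportional to $\lambda_j$. After propagation, its contribution to $\frac1d\sum_j s_j$ at epoch $n$ is $\eta^2\int_0^n T(n')K(n-n')\,dn'$. The kernel here is the \emph{$\lambda$-weighted} one, $K(s)=\frac1d\sum_j\lambda_j e^{-2\eta\lambda_j s}\approx Q\beta\Gamma(\beta+1)(2\eta s)^{-(\beta+1)}$. It is not the test-loss kernel $\sim s^{-\beta}$ that you used.

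Convolving with $s^{-\beta}$ produces an $O(1)$ relative correction, and there is no ``stationary identity'' that cancels it. If the kernel really decayed like $s^{-\beta}$, the leading constant would genuinely change. For the same reason, Sherman--Morrison does not make $\sum_j s_j$ a GD sum times a renewal factor. It gives $h_0$ plus a term quadratic in $h_1$ over a renewal denominator in $h_2$. The renewal factor multiplies the $\lambda$-weighted (training) generating function $h_1$, not $h_0$.

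The correct argument is short: both factors in the convolution are integrable with $n^{-(\beta+1)}$ tails.
\begin{itemize}
\item $\int_0^\infty T\le 1/(\eta(2-\eta))$ by the exact trace identity.
\item $\int_0^\infty K\le 1/(2\eta)$.
\item $T$ obeys its own renewal equation $T=T_0+\eta^2 K_2*T$, with $T_0(n)=\frac1d\sum_j\lambda_je^{-2\eta\lambda_jn}$ and $K_2(s)=\frac1d\sum_j\lambda_j^2e^{-2\eta\lambda_js}=O(s^{-(\beta+2)})$. The total kernel mass is $\eta^2\int K_2=\eta/2<1$, so a standard convolution lemma for regularly varying forcing gives $T(n)\sim\frac{2}{2-\eta}T_0(n)=O(n^{-(\beta+1)})$.
\end{itemize}
Splitting the test-loss convolution at $n/2$ then bounds the correction by $\sup_{s\ge n/2}K(s)\int T+\sup_{s\ge n/2}T(s)\int K=O(n^{-(\beta+1)})=o(n^{-\beta})$. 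This is the time-domain form of the paper's observation that the correction term is less singular at $z=1$ than $h_0\sim(1-z)^{\beta-1}$.

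As a bonus, $T\sim\frac{2}{2-\eta}T_0$ together with the Laplace estimate $T_0(n)\sim Q\beta\Gamma(\beta+1)(2\eta n)^{-(\beta+1)}$ yields the stated training constant directly. This means you do not need the monotone-density step, whose eventual-monotonicity hypothesis for $T$ you asserted without proof. The paper itself passes from the test asymptotic to the training one by differencing, without addressing this point.
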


\paragraph{Why the training prefactor is $\beta/(\eta(2-\eta))$.}
Since $\|u_k\|=1$, the rank-one update satisfies the exact trace decrement identity
\[
\Tr(S_{k+1}) = \Tr(S_k) - \eta(2-\eta)\,u_k^\top S_k u_k.
\]
Taking expectations and using $\E[u_k u_k^\top]=\frac{1}{d}H$ (for row-normalized data, or in the $d\to\infty$ regime where row norms concentrate) relates the per-epoch decrease in $\E\Tr(S_n)$ to $\E\Tr(HS_n)$.
Combining this with the test-loss asymptotic~\eqref{eq:iid-test-asy} yields the factor $\beta/(\eta(2-\eta))$ in~\eqref{eq:iid-train-asy}.

\subsection{The Gaussian Case: Marchenko--Pastur Spectrum}

\begin{corollary}[IID SGD on Gaussian Data]\label{thm:iid-gaussian}
For isotropic Gaussian data with $m=d$ and step size $\eta\in(0,2)$, the Marchenko--Pastur law gives $\beta = 1/2$, $Q = 2/\pi$, and Theorem~\ref{thm:iid-sgd} implies
\[
\frac{1}{d}\E\Tr(S_n) = \frac{1}{\sqrt{2\pi\eta}}\,n^{-1/2}(1+o(1)),
\qquad
\frac{1}{d}\E\Tr(HS_n) = \frac{1}{2\sqrt{2\pi}\,\eta^{3/2}(2-\eta)}\,n^{-3/2}(1+o(1)).
\]
In particular, at the standard Kaczmarz step size $\eta=1$:
\[
\frac{1}{d}\E\Tr(S_n)=\frac{1}{\sqrt{2\pi n}}(1+o(1)),
\qquad
\frac{1}{d}\E\Tr(HS_n)=\frac{1}{2\sqrt{2\pi}}\,n^{-3/2}(1+o(1)).
\]
\end{corollary}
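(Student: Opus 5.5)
The corollary is a direct specialization of Theorem~\ref{thm:iid-sgd}, so I will verify its hypotheses for isotropic Gaussian data with $m=d$ and then substitute the constants. First I check the spectral assumption. For $X$ with i.i.d.\ $N(0,1)$ entries and $m=d$ (aspect ratio $\gamma=1$), the empirical spectral distribution of $H=\frac1m X^\top X$ converges to the Marchenko--Pastur law with density $p(\lambda)=\frac{1}{2\pi\lambda}\sqrt{\lambda(4-\lambda)}$ on $[0,4]$. Near $\lambda=0$ this density behaves like $\frac{1}{\pi}\lambda^{-1/2}$. Integrating gives
\[
\rho((0,\lambda])=\frac{2}{\pi}\lambda^{1/2}(1+o(1)),
\]
so $\beta=1/2\in(0,1)$ and $Q=2/\pi$.

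The near-Wick condition from Appendix~\ref{app:iid-proofs} also holds, and this is the only genuinely non-arithmetic step. By rotational invariance, $V$ is Haar and independent of $\Lambda$, so the rotated rows $y_i=V^\top x_i$ are again Gaussian conditional on the data geometry. The edge-whitened coordinates $\xi_{ij}=y_{ij}/\sqrt{\lambda_j}$ then have fourth moments that are uniformly bounded, which is the Wick closure the footnote already notes is exact in the Gaussian/spherical model.

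I also need to reconcile the normalized update. Row norms satisfy $\|x_i\|^2/d\to1$ with fluctuations $O(d^{-1/2})$, so $\E[u u^\top]=\frac1d H(1+o(1))$. This is exactly the situation covered in the discussion after the theorem, where row norms concentrate as $d\to\infty$. I expect the main care to be needed here: I must check that the $o(1)$ perturbation of the spectrum from normalization does not alter the left-edge constant $Q$. It should not, because uniform concentration of the norms rescales eigenvalues only by $1+o(1)$.

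Finally I substitute into \eqref{eq:iid-test-asy} and \eqref{eq:iid-train-asy}, using $\Gamma(3/2)=\sqrt\pi/2$. For the test loss,
\[
\frac{2}{\pi}\cdot\frac{\sqrt\pi}{2}\,(2\eta n)^{-1/2}=\frac{1}{\sqrt{\pi}\sqrt{2\eta n}}=\frac{1}{\sqrt{2\pi\eta}}\,n^{-1/2}.
\]
For the training loss I multiply by $\beta/(\eta(2-\eta))=1/(2\eta(2-\eta))$ and by $n^{-1}$, which gives
\[
\frac{1}{2\sqrt{2\pi}\,\eta^{3/2}(2-\eta)}\,n^{-3/2}.
\]
Setting $\eta=1$ yields $\frac{1}{\sqrt{2\pi n}}$ and $\frac{1}{2\sqrt{2\pi}}n^{-3/2}$, matching the statement.
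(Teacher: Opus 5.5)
Your main computation is the paper's own argument. The paper treats the corollary as a pure substitution into Theorem~\ref{thm:iid-sgd}: it reads off $\beta=1/2$, $Q=2/\pi$ from the Marchenko--Pastur edge density $\frac1\pi\lambda^{-1/2}$ and uses $\Gamma(3/2)=\sqrt\pi/2$. Your arithmetic for the test and training constants and the $\eta=1$ specialization is correct. Your Loewner-order reasoning that row normalization only rescales the edge eigenvalues by $1+o(1)$ is also fine.

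The step that fails as written is your check of the near-Wick hypothesis, which the paper itself does not attempt.
\begin{itemize}
\item The eigenbasis $V$ of $H$ is a function of $X$, so it is not independent of the data, and $y_i=V^\top x_i$ is not Gaussian. Indeed $\frac1m\sum_i y_iy_i^\top=V^\top HV$ is exactly diagonal, which i.i.d.\ Gaussian rows would give with probability zero.
\item The footnote's ``exact closure'' refers to i.i.d.\ draws from a rotationally invariant row law. That is not the present setting, which is uniform sampling from a fixed Gaussian dataset. The rotationally invariant model has $\E[uu^\top]=I/d$ and no power-law edge at all.
\item Uniformly bounded fourth moments of $\xi_{ij}$ are weaker than what Definition~\ref{def:edge-wick} requires. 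It asks the empirical fourth moments to approach the specific form $\tau\lambda_j\lambda_k+(\kappa_4-\tau)\lambda_j^2\mathbf 1\{j=k\}$ uniformly on the window, with unpaired mixed moments vanishing.
\end{itemize}

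The correct mechanism is \emph{left} orthogonal invariance. For the unnormalized $X$, in the convention of Definition~\ref{def:edge-wick}, write $X=U\Sigma V^\top$. For Gaussian $X$, $U$ is Haar and independent of $(\Sigma,V)$, so $Y=XV=U\Sigma$. Normalizing by the eigenvalues of $H$ then gives $\xi_{ij}=\sqrt m\,U_{ij}$. It follows that $\frac1m\sum_i\xi_{ij}^2\xi_{ik}^2=m\sum_iU_{ij}^2U_{ik}^2\to 1+2\,\mathbf 1\{j=k\}$, while sums with unpaired indices are $O(m^{-1/2})$. Concentration for Haar columns plus a union bound over the edge window gives uniformity. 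This yields the condition with $\tau=1$, $\kappa_4=3$, consistent with $\alpha_\eta(\lambda)=1-2\eta\lambda+2\eta^2\lambda^2$ in Theorem~\ref{thm:frob-gaussian}.
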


\begin{figure}[t]
\centering
\includegraphics[width=0.6\linewidth]{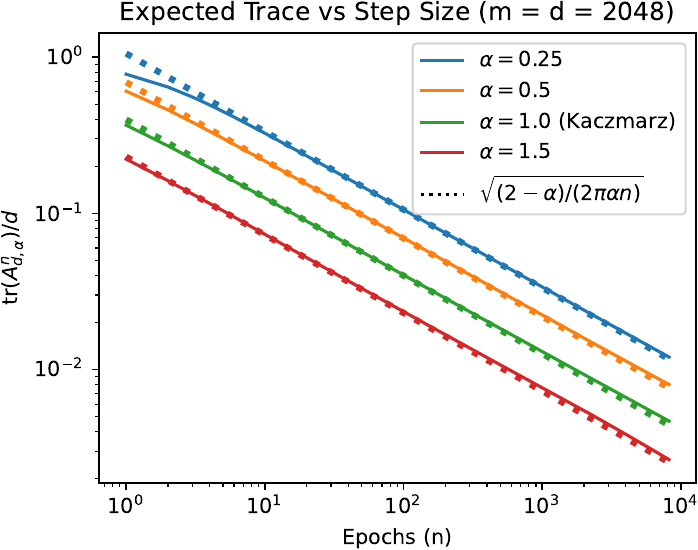}
\caption{\textbf{Validation of Theorem~\ref{thm:iid-sgd}.}
Expected trace $\frac{1}{d}\E[\Tr(S_n)]$ for IID SGD on Gaussian data with various step sizes $\eta$.
All curves exhibit $n^{-1/2}$ decay (the predicted rate for $\beta = 1/2$), with step size affecting only the constant.}
\label{fig:iid-validation}
\end{figure}

\section{Main Result II: Exact Constants for Shuffling Methods}
\label{sec:shuffling}

For the special case of isotropic Gaussian data, we derive \emph{exact} leading constants for different shuffling methods using free probability theory.
This analysis reveals that shuffle-once achieves a $\sqrt{2}$ improvement in test loss over both IID and FlipFlop sampling.

We consider isotropic Gaussian data: rows $x_i \sim \mathcal{N}(0, I_d)$ are independent standard Gaussian vectors. This assumption of Gaussian data allows us to resolve some of the technical challenges that would otherwise make a precise theoretical tracking of losses for different example orders difficult.
We apply the normalized (Kaczmarz) update, so each step uses the unit direction $u_i:=x_i/\|x_i\|$ and the rank-one projector $u_i u_i^\top$.
In the regime $m = d$ (``critically determined''), the empirical spectral distribution of the Gram matrix $H = X^\top X / d$ converges to the Marchenko-Pastur law with $\beta = 1/2$.

By the general result in Section~\ref{sec:iid}, both test and training loss should decay as power laws with exponents $1/2$ and $3/2$ respectively.
Here we derive the \emph{exact} leading constants for each shuffling method.

\begin{table}[t]
\centering
\renewcommand{\arraystretch}{1.4}
\begin{tabular}{lcc}
   \toprule
   \textbf{Method} & \textbf{Test Constant} & \textbf{Training Constant} \\
   \midrule
   IID Sampling & $\displaystyle\frac{1}{\sqrt{2\pi}}$ & $\displaystyle\frac{1}{2\sqrt{2\pi}}$ \\[0.8em]
   Shuffle-Once & $\displaystyle\frac{1}{2\sqrt{\pi}}$ & $\displaystyle\frac{1}{8\sqrt{\pi}}$ \\[0.8em]
   FlipFlop & $\displaystyle\frac{1}{\sqrt{2\pi}}$ & $\displaystyle\frac{\sqrt{2}}{3\sqrt{\pi}}$ \\
   \bottomrule
\end{tabular}
\caption{Exact leading constants for test and training loss under different shuffling methods on isotropic Gaussian data with $m = d$.}
\label{tab:shuffling-constants}
\end{table}

Table~\ref{tab:shuffling-constants} summarizes our main findings.
The key observation is the $\sqrt{2}$ improvement in test loss for shuffle-once, suggesting that shuffle-once is to be preferred over the other two methods:
\[
\frac{\text{IID test loss}}{\text{Shuffle-once test loss}} = \frac{1/\sqrt{2\pi}}{1/(2\sqrt{\pi})} = \sqrt{2}.
\]

\subsection{Shuffle-Once Analysis}

In this subsection we present theorems that tightly characterize the convergence rates of the test loss and training loss of shuffle-once SGD in the ``critical'' Gaussian-data regime. We also give a quick proof sketch for the former; full proofs appear in the appendix.

\begin{theorem}[Shuffle-Once Test Loss]\label{thm:shuffle-once}
For isotropic Gaussian data with $m = d$ and step size $\eta = 1$, the expected test loss after $n$ epochs of shuffle-once SGD satisfies
\[
\lim_{d \to \infty} \E\|A_d^n\|_F^2 / d = \frac{1}{2\sqrt{\pi}}\, n^{-1/2}\left(1 - \frac{1}{3\sqrt{\pi n}}\right) + O(n^{-2}).
\]
\end{theorem}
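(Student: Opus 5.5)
The plan is to reduce $\E\|A_d^n\|_F^2/d$ to a normalized trace of a power of a single structured matrix. I would then compute the generating function of these traces in the limit $d\to\infty$ and read off the coefficient asymptotics by singularity analysis at $z=1$.

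\textbf{Step 1: algebraic reduction.} With $\eta=1$ each factor $I-u_iu_i^\top$ is an orthogonal projector, and one epoch of Kaczmarz is exactly Gauss--Seidel on the normalized Gram matrix $G=UU^\top$. Here $U$ has rows $u_i^\top$, so $G$ has unit diagonal. Writing $G=I+L+L^\top$ with $L$ strictly lower triangular, the standard identity gives
\[
A_d = I - U^\top (I+L)^{-1} U .
\]
Next take the LQ decomposition $U=TQ$, where $T$ is lower triangular with unit-norm rows and $Q$ is orthogonal. For isotropic Gaussian rows, $Q$ is Haar and independent of $T$, and the entries of $T$ are independent (Bartlett decomposition, followed by row normalization). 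Since $G=TT^\top$, we have $A_d=Q^\top B Q$ with
\[
B:=I-T^\top(I+L)^{-1}T .
\]
Hence $\|A_d^n\|_F^2=\|B^n\|_F^2$, and the problem becomes $\lim_d \frac1d\E\Tr\bigl((B^n)^\top B^n\bigr)$ for a matrix built only from the triangular factor $T$. As $d\to\infty$, the below-diagonal entries are asymptotically $N(0,1/d)$ and the diagonal entries are $\sqrt{1-t}$ at relative position $t=i/d$. So $T$ converges in $*$-distribution to an operator-valued (DT/triangular) object over $L^\infty[0,1]$.

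\textbf{Step 2: a generating function.} I would avoid expanding powers of the non-normal $B$ directly. Instead I would use the exact decrement identity $\Tr(S_{k+1})=\Tr(S_k)-\|u_k^\top(\cdot)\|^2$ within an epoch. Iterating it across the $d$ steps of an epoch turns the per-epoch decrease into a sum over positions $t\in[0,1]$ of quadratic forms in $B^n$. In the large-$d$ limit these quadratic forms should satisfy a closed fixed-point system, using operator-valued subordination over the diagonal algebra $L^\infty[0,1]$ (triangular structure making the $R$-transform a Volterra-type integral). From this system I would derive the limiting two-variable generating function
\[
F(z)=\sum_{n\ge0} z^n \lim_{d}\tfrac1d\E\|B^n\|_F^2 ,
\]
ideally in closed algebraic form, since Marchenko--Pastur at ratio $1$ produces square roots. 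As sanity checks I would verify $F$ against small $n$: $n=0$ gives $1$, and $n=1$ is computable directly as $1-\frac1d\sum_i\E\|P_{i-1}\cdots P_1 u_i\|^2$.

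\textbf{Step 3: singularity analysis.} The Marchenko--Pastur edge with $\beta=1/2$ should make $z=1$ the dominant singularity, with a local expansion
\[
F(z)=\tfrac{1}{2}(1-z)^{-1/2} + \tfrac{1}{6\pi}\log(1-z) + c + O\bigl((1-z)^{1/2}\bigr).
\]
By Flajolet--Odlyzko transfer theorems, the coefficients are then
\[
\frac{1}{2\sqrt{\pi}}n^{-1/2}-\frac{1}{6\pi}n^{-1}+O(n^{-3/2}),
\]
which matches the claimed $\frac{1}{2\sqrt\pi}n^{-1/2}\bigl(1-\frac{1}{3\sqrt{\pi n}}\bigr)$. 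To get the stated error $O(n^{-2})$, the expansion must have no $(1-z)^{1/2}$ term, or its contribution must cancel against the $n^{-3/2}$ correction from $(1-z)^{-1/2}$. I would check this explicitly once $F$ is in hand, and I would also confirm $\Delta$-analyticity, i.e. that there are no other singularities on $|z|=1$.

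\textbf{Main obstacle.} I expect Step 2 to be the hardest: obtaining a closed limiting equation for the Frobenius moments of powers of the non-normal, triangularly structured $B$. Exchanging $d\to\infty$ with the moment computation (concentration and uniform integrability) is routine by comparison. My first attempt would be the operator-valued free probability route, treating $T$ as a DT-type operator and $(I+L)^{-1}$ via its Volterra resolvent. If that becomes intractable, I would fall back to a combinatorial expansion of $\E\Tr\bigl((B^n)^\top B^n\bigr)$ into planar, non-crossing pairings with ordering constraints. These would be counted by a functional equation in $z$ whose solution should exhibit the same square-root singularity.
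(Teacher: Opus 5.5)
Your Step 1 algebra is correct: $A_d=I-U^\top(I+L)^{-1}U$ holds, and $A_d=Q^\top BQ$ with $Q$ Haar and independent of $T$. (After row normalization only the rows of $T$ stay independent, not its entries.) But nothing later uses $B$ concretely. The real gap is Step 2, which is where the whole theorem lives. You never write down an equation that determines the limit; you only say that one "should" exist.

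The mechanism you propose does not close. Iterating $\Tr(PSP)=\Tr(S)-u^\top S u$ through the $d$ steps of epoch $n+1$ produces quadratic forms $u_k^\top M_k u_k$, where $M_k$ is built from $A^n$. But $A^n$ contains $P_k=I-u_ku_k^\top$ exactly $n$ times, so $u_k$ is not independent of $M_k$. The forms therefore do not concentrate to $\frac1d\Tr M_k$; for instance $u_k^\top P_k u_k=0$ while $\frac1d\Tr P_k\approx 1$. There is then no reason for the result to close in terms of $\frac1d\E\|B^n\|_F^2$.

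More fundamentally, a one-variable series in the diagonal Frobenius moments does not satisfy a closed equation by itself. What closes is the full bivariate trace $\frac1d\E\Tr\bigl((A-xI)^{-1}(A^\top-yI)^{-1}\bigr)$. Its one-step update involves $\partial_x\phi(x,y)$, $\partial_y\phi(x,y)$, $\phi(x,y)^2$ and the one-resolvent traces $\phi(x),\phi(y)$. Likewise, the appeal to operator-valued subordination for $(I+L)^{-1}$, with $L$ the triangular truncation of $TT^\top$, names no actual subordination equation.

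The missing idea is to evolve in the number of rows, not the number of epochs. The paper takes the partial product $A_t=P_t\cdots P_1$ and tracks $\frac1d\E\Tr(A_t-xI)^{-1}$ and the two-resolvent trace as $t=\rho d$ grows. The resolvent packages all powers $A_t^i(A_t^\top)^j$ at once, so the reuse of rows across epochs is handled automatically. Meanwhile, appending $P_{t+1}$ brings in a fresh $u_{t+1}$ that is independent of $A_t$. Sherman--Morrison plus spherical concentration then legitimately turn quadratic forms into traces, giving transport PDEs in $(\rho,x,y)$. These are solved by characteristics (with $x\phi_x$ and $y\phi_y$ conserved) into Lambert-$W$ closed forms. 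Finally, $\frac1d\E\|A_d^n\|_F^2$ is recovered at $\rho=1$ as a diagonal coefficient, via ACSV applied to the local form $\bigl(\sqrt2\sqrt{(1-x)(1-y)}(\sqrt{1-x}+\sqrt{1-y})\bigr)^{-1}$ at $(1,1)$.

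Your Step 3 expansion $\tfrac12(1-z)^{-1/2}+\tfrac1{6\pi}\log(1-z)+\dots$ is reverse-engineered from the target rather than derived, so it adds no evidence. Your doubt about the $O(n^{-2})$ remainder is well placed. The paper's own refined expansion in the appendix only gives $O(n^{-3/2})$ after the $-\frac{1}{6\pi}n^{-1}$ term, so the stated remainder would need a cancellation that neither you nor the paper establishes.
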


\begin{proof}[Proof sketch]
The dynamics matrix for shuffle-once (one epoch) is $A_d = \prod_{i=1}^d (I - u_i u_i^\top)$ with $u_i=x_i/\|x_i\|$.
We analyze this using a bivariate generating function:
\[
\phi_d(x, y) = \frac{1}{d} \sum_{i,j \ge 0} x^i y^j\, \E[\Tr(A_d^i (A_d^\top)^j)].
\]
In the limit $d \to \infty$, this generating function has a closed form involving the Lambert $W$ function:
\[
\lim_{d \to \infty} \phi_d(x, y) = \frac{W_x W_y - 1}{(W_x+1)(W_y+1)} \cdot \frac{1}{W_x W_y e^{-(W_x W_y-1)} - 1},
\]
where $W_z = W_0(-e^{-1} z)$ is the principal branch of the Lambert $W$ function.

The test loss $\E\|A_d^n\|_F^2 / d$ is the coefficient of $x^n y^n$ in $\phi_d(x,y)$.
Extracting this coefficient using saddle-point methods yields the stated asymptotic.
\end{proof}

\begin{theorem}[Shuffle-Once Training Loss]\label{thm:shuffle-once-train}
For isotropic Gaussian data with $m = d$, the expected training loss after $n$ epochs of shuffle-once SGD satisfies
\[
\lim_{d \to \infty} \E\|X A_d^n\|_F^2 / d = \frac{1}{8\sqrt{\pi}}\, n^{-3/2}\left(1 + \frac{2}{3\sqrt{\pi n}}\right) + O(n^{-3}).
\]
\end{theorem}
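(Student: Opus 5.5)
The plan is to reuse the generating-function machinery from the proof of Theorem~\ref{thm:shuffle-once}, with one extra factor of $H$ inserted. With $H=X^\top X/d$ we have $\E\|XA_d^n\|_F^2/d=\E\Tr\bigl((A_d^\top)^n X^\top X A_d^n\bigr)/d$. Since $\|x_i\|^2=d(1+o(1))$, this equals $\E\Tr\bigl((A_d^\top)^n P A_d^n\bigr)/d$ up to $1+o(1)$, where $P=\sum_i u_iu_i^\top$.

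The key step is to express $P$ in terms of $A_d$ and the partial products. Write $A^{(k)}=\prod_{i\le k}(I-u_iu_i^\top)$, so that $A^{(k)}-A^{(k-1)}=-A^{(k-1)}u_ku_k^\top$. An alternative route, closer to the paper's own remark after Theorem~\ref{thm:iid-sgd}, is to use the exact trace decrement identity with $\eta=1$: $\Tr(S_{k+1})=\Tr(S_k)-u_k^\top S_ku_k$. Summing over one epoch gives
\[
\Tr\bigl((A_d^{n+1})^\top A_d^{n+1}\bigr)=\Tr\bigl((A_d^n)^\top A_d^n\bigr)-\sum_{k=1}^{d}\bigl\|A^{(k-1)}A_d^n e\bigr\|^2 .
\]
Under shuffle-once the partial products inside the epoch are not $A_d^n$ itself, so the training loss is not simply a difference of test losses. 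It is, however, sandwiched between $\sum_k u_k^\top (A_d^n)^\top A_d^n u_k$-type terms.

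I would therefore define a trivariate refinement of $\phi_d$, namely
\[
\psi_d(x,y)=\frac1d\sum_{i,j}x^iy^j\,\E\Tr\bigl(A_d^i P (A_d^\top)^j\bigr),
\]
and compute its $d\to\infty$ limit with the same free-probability/combinatorial argument that yields the Lambert-$W$ closed form for $\phi_d$. Concretely, $u_1,\dots,u_d$ are asymptotically free rank-one projections of normalized trace $1/d$. The resolvent recursion of adding one projection at a time becomes, in the continuum limit $t=k/d\in[0,1]$, an ODE whose solution gives $W_x$. Inserting $P$ amounts to differentiating that flow in the time variable $t$ and integrating over $t$, so I expect $\psi$ to be a rational expression in $W_x,W_y$ and their derivatives, sharing the singular denominator $W_xW_ye^{-(W_xW_y-1)}-1$.

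The asymptotics then follow by extracting the coefficient $[x^ny^n]\psi$ via a saddle-point/singularity analysis at $x=y=1$, where $W_z\to-1$ and both numerator and denominator vanish. The expansion $W_z=-1+\sqrt{2(1-z)}-\tfrac23(1-z)+\dots$ gives local behaviour in $\sqrt{1-x},\sqrt{1-y}$. For $\phi$ the leading singular term produced $n^{-1/2}/(2\sqrt\pi)$. For $\psi$ the extra factor of $P$ should raise the order of vanishing by one power, yielding an $n^{-3/2}$ coefficient $\tfrac1{8\sqrt\pi}$, consistent with the ratio $\beta/n=1/(2n)$ times a further $1/2$. The correction $\tfrac{2}{3\sqrt{\pi n}}$ comes from the $-\tfrac23(1-z)$ term of $W$.

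The main obstacle is obtaining the closed form for $\psi$ together with a careful two-variable singularity expansion to second order, since the diagonal extraction $[x^ny^n]$ needs the expansion along both $\sqrt{1-x}$ and $\sqrt{1-y}$ and their coupling through $W_xW_y$. To keep this manageable I would substitute $x=e^{-s}$, $y=e^{-t}$, expand in $\sqrt s,\sqrt t$, and apply a bivariate transfer theorem (Hankel contours in each variable). As a consistency check, the same computation applied to $\phi$ must reproduce Theorem~\ref{thm:shuffle-once}.
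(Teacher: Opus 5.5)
Your frame (a two-variable generating function with $P=\sum_i u_iu_i^\top$ inserted, then a diagonal singularity analysis at $x=y=1$) is the right one. But the proposal stops exactly where the proof has to happen: you never derive the $d\to\infty$ limit of $\psi_d$. The sentence ``inserting $P$ amounts to differentiating that flow in $t$ and integrating over $t$'' is not an argument, because $P$ sits between $A_d^n$ and $(A_d^\top)^n$, whereas differentiating the flow produces $u_ku_k^\top$ at position $k$ inside a single product.

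The missing idea is to let the weight grow together with the dynamics. Put $S_t=\sum_{j\le t}u_ju_j^\top$ and $\psi_t(x,y)=\frac1d\E\Tr\bigl[S_t(A_t-xI)^{-1}(A_t^\top-yI)^{-1}\bigr]$. Because $A_{t+1}=(I-uu^\top)A_t$ satisfies $A_{t+1}^\top u=0$, the newly added term $u^\top R_x'R_y'u$ equals exactly $1/(xy)$. The remaining Sherman--Morrison terms concentrate just as in the test-loss computation. In the limit this gives a \emph{linear} transport equation for $\psi$ with source $1/(xy)$, along the characteristics of the two-resolvent function $\phi_{x,y}$. It is solved by observing that $q=\psi/\phi_{x,y}$ satisfies $dq/d\rho=1/F$ with $F=xy\,\phi_{x,y}$.

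Carrying this out also shows that your predictions about the answer are off. At $\rho=1$, set $A:=W_xW_y-1$. Then $q$ carries the factor $1-(A+1)e^{-A}$, which cancels the denominator $W_xW_ye^{-(W_xW_y-1)}-1$ of $\phi_{x,y}$ exactly. The generating function $\sum_{i,j}x^iy^j\lim_{d}\frac1d\E\Tr\bigl(PA_d^i(A_d^\top)^j\bigr)$ is therefore simply $1/(1-W_xW_y)$ with $W_z=W_0(-e^{-1}z)$; it does \emph{not} share $\phi$'s singular denominator.

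Its local form $1/\bigl(\sqrt{2(1-x)}+\sqrt{2(1-y)}\bigr)$ yields $\frac{1}{8\sqrt\pi}n^{-3/2}$ via the double Hankel integral $\frac{1}{(2\pi i)^2}\iint e^{s+t}(\sqrt s+\sqrt t)^{-1}\,ds\,dt=\frac{1}{4\sqrt{2\pi}}$. Your ``$\beta/n$ times a further $1/2$'' is fitted to the answer rather than derived.

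Likewise, the correction $\frac{2}{3\sqrt{\pi n}}$ (i.e.\ $\frac{1}{12\pi}n^{-2}$) does not come from the $-\frac23(1-z)$ term of $W$. Write $1-W_xW_y=(1+W_x)+(1+W_y)-(1+W_x)(1+W_y)$. The $-\frac23$ terms of $1+W_x$ and $1+W_y$ alone give $-\frac{1}{6\pi}n^{-2}$, the wrong sign. It is the cross term that supplies $+\frac{1}{4\pi}n^{-2}$.

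Finally, the diagonal of your $\psi_d$ is $\Tr\bigl(A_d^nP(A_d^\top)^n\bigr)$, not the training loss $\Tr\bigl((A_d^\top)^nPA_d^n\bigr)$. These agree in expectation only because reversing the row order maps $A_d\mapsto A_d^\top$ and fixes $P$, which you should state.
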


\subsection{FlipFlop Analysis}

FlipFlop \citep{rajput2021permutation} alternates between forward and backward passes through the data.
After $n$ epochs (for $n$ even), the dynamics is equivalent to applying $(A_d^\top A_d)^{n/2}$.

\begin{theorem}[FlipFlop Test Loss]\label{thm:flipflop-test}
For isotropic Gaussian data with $m = d$ and $n$ even, the expected test loss after $n$ epochs of FlipFlop SGD satisfies
\[
\lim_{d \to \infty} \E\|(A_d^\top A_d)^{n/2}\|_F^2 / d = \frac{1}{\sqrt{2\pi n}} + O(n^{-1}).
\]
\end{theorem}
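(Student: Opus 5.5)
Since $\|(A_d^\top A_d)^{n/2}\|_F^2=\Tr\bigl((A_d^\top A_d)^{n}\bigr)$, the quantity to compute is the $n$-th moment $m_n$ of the limiting spectral distribution $\mu$ of $A_d^\top A_d$. I take $\eta=1$, consistent with the constant $1/\sqrt{2\pi}$ in Table~\ref{tab:shuffling-constants}. Each factor $P_i:=I-u_iu_i^\top$ is then an orthogonal projection of normalized trace $1-1/d$. By cyclicity of the trace, the moments of $A_d^\top A_d=P_d\cdots P_1P_1\cdots P_d$ coincide with the moments of $P_1\cdots P_d$ viewed as a product of positive elements. The directions $u_i=x_i/\|x_i\|$ are i.i.d.\ uniform on the sphere, so the $P_i$ are independent and unitarily invariant. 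This suggests a plan in three steps: identify $\mu$ as a free multiplicative convolution, compute its $S$-transform, and extract the moment asymptotics by singularity analysis.

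\textbf{Step 1 (free model).} The $S$-transform of a projection of trace $t$ is $S(z)=(1+z)/(t+z)$. For $d$ free copies with $t=1-1/d$ this gives
\[
S_d(z)=\Bigl(\frac{1+z}{1-1/d+z}\Bigr)^{d}\;\longrightarrow\; S_\mu(z)=\exp\!\Bigl(\frac{1}{1+z}\Bigr).
\]
Here $\mu$ is supported in $[0,1]$ and has moment generating function $\psi(w)=\sum_{k\ge1}m_kw^k$. The function $\chi(z)=\frac{z}{1+z}S_\mu(z)=\frac{z}{1+z}e^{1/(1+z)}$ is the compositional inverse of $\psi$.

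\textbf{Step 2 (singularity analysis).} Because the support of $\mu$ lies in $[0,1]$, the relevant singularity of $\psi$ sits at $w=1$. One checks that $(\log\chi)'(z)=\frac{1}{z(1+z)^2}$ never vanishes, so $\psi$ has no square-root branch point from a critical point of $\chi$. The singularity therefore comes from $z\to\infty$. Expanding there,
\[
\chi(z)=\Bigl(1-\tfrac1z+\tfrac1{z^2}-\cdots\Bigr)\Bigl(1+\tfrac1z-\tfrac1{z^2}+\tfrac1{2z^2}+\cdots\Bigr)=1-\frac{1}{2z^2}+O(z^{-3}).
\]
Inverting gives $\psi(w)=(2(1-w))^{-1/2}+O(1)$ near $w=1$, with the next correction being an analytic constant plus a term of order $(1-w)^{1/2}$. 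By transfer theorems (Flajolet--Odlyzko), one must also check that $\psi$ is $\Delta$-analytic at $1$ with no other singularities on $|w|=1$; this can be read off from the fact that $\chi$ maps a neighbourhood of $\infty$ conformally. The coefficients are then
\[
m_n=\frac{2^{-1/2}}{\Gamma(1/2)}n^{-1/2}+O(n^{-3/2})\cdot(\text{subleading})=\frac{1}{\sqrt{2\pi n}}+O(n^{-1}).
\]
As a consistency check, the density of $\mu$ near $1$ behaves like $\tfrac{1}{\pi\sqrt{2}}(1-x)^{-1/2}$, equivalently $\mu((1-\varepsilon,1])\approx\tfrac{\sqrt2}{\pi}\varepsilon^{1/2}$. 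This is the $\beta=1/2$ edge, and $\tfrac{\sqrt2}{\pi}\,\Gamma(3/2)=1/\sqrt{2\pi}$ reproduces the same constant.

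\textbf{Step 3 (justifying the limit) --- main obstacle.} It remains to show that $\frac1d\E\Tr\bigl((A_d^\top A_d)^n\bigr)\to m_n$ for each fixed $n$. The difficulty is that the number of factors grows with $d$, so asymptotic freeness of a fixed number of unitarily invariant matrices does not apply directly. My first attempt would be to expand $\frac1d\E\Tr$ as a polynomial in the Gram entries $\langle u_i,u_j\rangle$, which are $O(d^{-1/2})$. In this expansion, only non-crossing pairings of indices should contribute at leading order, and the contribution of crossings should be bounded by $O(1/d)$ uniformly after summing over index tuples. Concretely, for fixed $n$ this is a combinatorial count of words of length $O(nd)$ in which each index appears at most $2n$ times. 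A cleaner alternative is a recursion in the number of factors: adding one rank-one projection $P_{d+1}$ changes the $R$/$S$-transform by an explicitly computable $O(1/d)$ amount with $O(1/d^2)$ error. Summing over the $d$ factors then yields the ODE whose solution is $\log S_\mu(z)=1/(1+z)$, in the same spirit as the generating-function argument used for the IID analysis of Theorem~\ref{thm:iid-sgd}.
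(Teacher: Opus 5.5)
Your argument is essentially the paper's. Steps~1--2 reproduce the free-probability route of the main-text sketch and Appendix~\ref{app:freeprop}: you get the same limiting $S$-transform $e^{1/(1+z)}$, whose inverse gives $1+\psi(w)=1/(1+W_0(-e^{-1}w))$ with coefficients $n^ne^{-n}/n!$, so your singularity analysis at $z=\infty$ just replaces Stirling. The ``cleaner alternative'' you propose for Step~3 is exactly the proof in Appendix~\ref{app:flipflop}, which applies Sherman--Morrison to $B_{t+1}=B_t-vv^\top$ with $v=A_t^\top u$, concentrates $u^\top A_tR^kA_t^\top u$ around $\frac1d\Tr(R^kB_t)$ on the sphere, and uses $RB_t=I+\lambda R$ to close the recursion into $\partial_\rho s+s^{-1}\partial_\lambda s=-1/\lambda$, the same transport equation that governs the shuffle-once trace moments.

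Two small corrections. First, identifying the moments of $A_d^\top A_d=P_d\cdots P_2P_1P_2\cdots P_d$ with the free multiplicative convolution of the projection laws is not cyclicity alone once $d\ge 3$; at finite $d$ the equality is false, and it needs cyclicity plus freeness at each step, as in the induction of Theorem~\ref{thm:moment-equiv}. Second, $\chi$ is not conformal near $z=\infty$: it is two-to-one in $v=1/(1+z)$, which is exactly what produces the square-root branch point. The $\Delta$-analyticity should instead come from $\mu$ being supported in $[0,1]$, which makes $\psi$ analytic off $[1,\infty)$.
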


\begin{proof}[Proof sketch]
The key observation is that $\|(A_d^\top A_d)^{n/2}\|_F^2 = \Tr((A_d^\top A_d)^n)$, which equals the $n$-th moment of the squared singular values of $A_d$.
Remarkably, in the limit $d \to \infty$, this equals the $n$-th moment of the eigenvalues of $A_d$ (despite the eigenvalues being complex in general).
This moment is given by Theorem~\ref{thm:ss-trace}, yielding the stated result.
\end{proof}

\begin{theorem}[FlipFlop Training Loss]\label{thm:flipflop-train}
For isotropic Gaussian data with $m = d$ and $n$ even, the expected training loss after $n$ epochs of FlipFlop SGD satisfies
\[
\lim_{d \to \infty} \E\|X (A_d^\top A_d)^{n/2}\|_F^2 / d = \frac{\sqrt{2}}{3\sqrt{\pi}}\, n^{-3/2}\left(1 + \frac{203}{120n}\right) + O(n^{-3}).
\]
\end{theorem}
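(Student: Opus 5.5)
The plan is to reduce FlipFlop training loss to mixed moments of the one-epoch map $A_d$ and then extract the $n^{-3/2}$ asymptotics from a generating function, as in the proofs of Theorems~\ref{thm:shuffle-once} and~\ref{thm:flipflop-test}.

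Write $B:=A_d^\top A_d$ and $N=n/2$, and take $\eta=1$, matching the constants in Table~\ref{tab:shuffling-constants}. With the normalization $H=X^\top X/d$, the quantity is
\[
\frac1d\E\|X B^{N}\|_F^2=\frac1d\E\Tr\bigl(B^{N}X^\top X B^{N}\bigr)=\E\Tr\bigl(H B^{n}\bigr),
\]
up to the trivial renormalization by $d$. The first step is to express $H$ through the same projectors that build $A_d$. Row norms concentrate, $\|x_i\|^2=d(1+O(d^{-1/2}))$, so $H=\sum_i u_iu_i^\top\,(1+o(1))$ in normalized trace. Let $P_i=u_iu_i^\top$. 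Then $A_d=(I-P_1)\cdots(I-P_d)$, so $B$ is a palindromic product $(I-P_d)\cdots(I-P_1)(I-P_1)\cdots(I-P_d)$. Because $(I-P_i)^2=I-P_i$ at $\eta=1$, the middle factor collapses.

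Instead of inserting $H$ directly, I will use the exact trace decrement identity recorded after Theorem~\ref{thm:iid-sgd}. FlipFlop applies the same rank-one steps, just in a prescribed order. Summing the decrement over one epoch, with the second-moment matrix propagated step by step, gives
\[
\Tr(B^{N}S)-\Tr(B^{N+1/2}\text{-type})=\sum_i u_i^\top S^{(i)} u_i .
\]
Here $S^{(i)}$ is the intermediate second-moment matrix inside the epoch. In the free limit, each intermediate $u_i^\top S^{(i)}u_i$ can be expressed by a left–right partial epoch product. The plan is to introduce a three-variable generating function. Let $A_{\le t}$ be the partial product of the first $t$ steps, and set
\[
\psi(x,y,s)=\frac1d\sum_{i,j}x^iy^j\,\E\Tr\bigl(B^{i}A_{\le sd}^\top A_{\le sd}B^{j}\,\cdots\bigr).
\]
I will compute it with the same free-probability and Lambert-$W$ machinery that produced $\phi_d$ in Theorem~\ref{thm:shuffle-once}, then integrate over $s\in[0,1]$ to recover the training loss. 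Equivalently, $\E\Tr(HB^{n})$ is the derivative in a spectral parameter of the FlipFlop analogue of $\phi$. I will also cross-check that $\sum_i P_i$ inserted between $B^{N}$ factors equals $d\cdot\frac{d}{d\epsilon}$ of the trace with $\eta=1\to 1+\epsilon$ perturbations.

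The second step is to use the observation from the proof of Theorem~\ref{thm:flipflop-test}: in the limit, the moments of $B$ coincide with the eigenvalue moments of $A_d$ given by Theorem~\ref{thm:ss-trace}. This identifies the limiting law $\mu$ of $B$ on $[0,1]$ explicitly via the Lambert-$W$ generating function. I then aim to show that $\E\Tr(HB^{n})\to\int(1-t)\,g(t)\,t^{n}\,d\mu(t)$ for some explicit weight $g$, coming from the free conditional expectation of $H$ onto the algebra generated by $B$. The $\sqrt{1-t}$-type edge of $\mu$ at $t=1$ is what gives $n^{-1/2}$ for the test loss in Theorem~\ref{thm:flipflop-test}. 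The extra vanishing factor of $1-t$ then gives $n^{-3/2}$.

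The final step is singularity analysis. Expand the Lambert-$W$ closed form near $z=1$ (where $W_0(-e^{-1}z)\to-1$) in powers of $\sqrt{1-z}$. Read off the coefficient of $(1-z)^{3/2}$, which gives the leading $\frac{\sqrt2}{3\sqrt\pi}n^{-3/2}$. Read off the next half-integer term, which gives the $\frac{203}{120n}$ correction. Integer powers contribute nothing, and the $(1-z)^{7/2}$ term produces the $O(n^{-3})$ error after accounting for even $n$. I expect the main obstacle to be the first step: rigorously computing the free conditional expectation of $H$ given $B$, i.e.\ the mixed moments $\Tr(HB^{n})$, since $H$ and $B$ are built from the same projectors and are not free. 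My first attempt would be the partial-product generating function integrated over $s$. If that fails, I would fall back to verifying the first several moments combinatorially and fitting the algebraic singular expansion.
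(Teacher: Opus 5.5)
Your reduction to $\frac1d\E\Tr(X^\top X\,B^n)$ with $B=A_d^\top A_d$ is right. The step that carries all the content, however, is computing this mixed moment when $X^\top X=\sum_i u_iu_i^\top$ and $B$ are built from the same rows. You leave that step open, and none of the routes you list closes it.

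\begin{itemize}
\item The trace-decrement identity becomes a training loss only when the row used at a step is independent of the current iterate, as in IID sampling. Under FlipFlop the later epochs reuse the same fixed $u_i$. Summing decrements over an epoch then gives residuals $u_i^\top S^{(i)}u_i$ at staggered intermediate times, not the end-of-epoch training loss.
\item The $\eta$-derivative replaces each factor $I-\eta P_i$ by $-P_i$ in place. By cyclicity this produces $u_i^\top M u_i$ for rotated products $M$, not $\Tr(\sum_i P_i\,B^n)$, so your proposed cross-check is not an identity.
\item Your function $\psi(x,y,s)$ is not actually defined.
\item The representation $\phi(HB^n)=\int h(t)\,t^n\,d\mu(t)$ is legitimate but only renames the unknown. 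The constant is determined by $\lim_{t\uparrow 1}h(t)/(1-t)$, which must come out to $4/3$, and nothing in your proposal computes it. Fitting a few moments is not a proof.
\end{itemize}

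The idea you are missing is the paper's. Grow one epoch row by row and evolve $S_t=\sum_{j\le t}u_ju_j^\top$ and $B_t=A_t^\top A_t$ together: $S_{t+1}=S_t+uu^\top$ and $B_{t+1}=B_t-vv^\top$ with $v=A_t^\top u$, where the fresh row $u$ is independent of $(S_t,A_t)$. Sherman--Morrison plus concentration of $u^\top(\cdot)u$ then close the evolution of $\gamma_t(\lambda)=\frac1d\E\Tr[S_t(B_t-\lambda I)^{-1}]$. Using $R_tB_t=I+\lambda R_t$, it needs only $\alpha=\frac1d\E\Tr R_t$ and $\beta=\frac1d\E\Tr(A_tR_t)$. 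The resulting transport system is solved on the same characteristics $\lambda\alpha=\mathrm{const}$ as the test loss, and yields the Lambert-$W$ closed form $\mathcal G_\rho(z)=\sum_n a_n(\rho)z^n$. The correlation between $H$ and $B$ is never confronted directly: both are assembled simultaneously, and only the new row is averaged at each step.

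Your singularity analysis is also off by one order. Since $[z^n](1-z)^\alpha\sim n^{-\alpha-1}/\Gamma(-\alpha)$, the leading $n^{-3/2}$ comes from the $(1-z)^{1/2}$ term: $\mathcal G_1(z)=\frac32-\frac{2\sqrt2}{3}(1-z)^{1/2}+O(1-z)$, and $\frac{2\sqrt2}{3}\cdot\frac{1}{2\sqrt\pi}=\frac{\sqrt2}{3\sqrt\pi}$. Correspondingly, the law $\mu$ of $B$ has density of order $(1-t)^{-1/2}$ at $t=1$. The $1/n$ correction receives contributions from two places: the $(1-z)^{3/2}$ coefficient, and the $\frac{3}{8n}$ subleading term in the transfer of $(1-z)^{1/2}$ itself. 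The parity of $n$ plays no role.

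Be warned that carrying this out does not reproduce the stated correction. With $W_0(-z/e)=-1+\epsilon$ one has $\mathcal G_1=\bigl((1-\epsilon)e^{-\epsilon}-(1-2\epsilon)\bigr)/\epsilon^2$. Hence $\mathcal G_1=\frac32-\frac{2\sqrt2}{3}(1-z)^{1/2}+\frac{31}{36}(1-z)-\frac{157\sqrt2}{270}(1-z)^{3/2}+\cdots$, giving a relative correction $\frac38-\frac{157}{120}=-\frac{14}{15}$ rather than $+\frac{203}{120}$. This is consistent with the paper's exact values $a_2=(4-e)e^{-3}$ and $a_3=(13-4e)e^{-4}$, which lie below the leading term (ratios about $0.68$ and $0.76$). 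The paper asserts the refined expansion without derivation. What its argument supports is $\frac{\sqrt2}{3\sqrt\pi}n^{-3/2}\bigl(1+O(n^{-1})\bigr)$.
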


\subsection{IID Sampling Analysis}

For IID sampling, the Gaussian constants are given by Corollary~\ref{thm:iid-gaussian}.

\subsection{The \texorpdfstring{$\sqrt{2}$}{sqrt(2)} Factor: Interpretation}

The $\sqrt{2}$ improvement from shuffle-once over IID/FlipFlop has an intuitive explanation.
Consider what happens during one epoch:

\begin{itemize}[leftmargin=2em]
\item \textbf{IID sampling:} Some examples may be sampled multiple times, while others are missed entirely.
On average, a fraction $1/e \approx 37\%$ of examples are never used in an epoch.

\item \textbf{Shuffle-once:} Every example is used exactly once per epoch.
This ensures more uniform coverage of the data.
\end{itemize}

The shuffle-once method makes better use of the available information by ensuring every example contributes to each epoch.
The $\sqrt{2}$ factor quantifies this advantage precisely.

Interestingly, FlipFlop does \emph{not} share this advantage despite also using every example.
The alternating forward-backward pattern creates correlations across epochs that reduce efficiency.
In the limit of many epochs, the ``flop'' (backward) pass provides diminishing additional information.

\subsection{Free Probability: The Technical Engine}

The technical machinery behind these results is \emph{free probability theory} \citep{mingo2017free, voiculescu1991limit}.
The key insight is that for isotropic Gaussian data, the rank-one projections $(I - u_i u_i^\top)$ become \emph{freely independent} in the limit $d \to \infty$.

Free independence is a non-commutative analog of classical independence.
For freely independent random matrices, there are combinatorial formulas for computing mixed moments, analogous to how classical independence allows factorization of expectations.

\paragraph{Generating functions via free probability.}
The moments of the dynamics matrix $A_d = \prod_{i=1}^d (I - u_i u_i^\top)$ can be expressed in terms of the Lambert $W$ function, which arises from the generating function equation for free products of projections.

\begin{theorem}[Trace Generating Function]\label{thm:ss-trace}
For shuffle-once on Gaussian data, the generating function of trace moments is
\[
\lim_{d \to \infty} \sum_{n=0}^\infty z^n \cdot \frac{\E[\Tr(A_d^n)]}{d} = \frac{1}{1 + W_0(-e^{-1} z)},
\]
where $W_0$ is the principal branch of the Lambert $W$ function.
\end{theorem}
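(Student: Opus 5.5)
We compute the limiting moments $\lim_{d\to\infty}\frac1d\E\Tr(A_d^n)$ with free probability, and then identify the generating function with $1/(1+W_0(-e^{-1}z))$. Theorem~\ref{thm:ss-trace} has $\eta=1$ implicitly, so each factor $P_i:=I-u_iu_i^\top$ is an orthogonal projection of rank $d-1$. For Gaussian rows the $u_i$ are independent and uniform on the sphere. Hence the family $\{P_i\}$ is unitarily invariant and independent, and by standard asymptotic freeness (Voiculescu) it is asymptotically free. Each $P_i$ has trace $1-1/d$, and the product has $d$ factors. The limiting object is therefore the free multiplicative convolution of $d$ Bernoulli laws with weight $1-1/d$ at $1$ and weight $1/d$ at $0$, as $d\to\infty$. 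We use that $\Tr(P_1\cdots P_d)^n$ has, by traciality and the limiting $*$-distribution, the same normalized moments as the positive operator $(P_d\cdots P_1P_1\cdots P_d)$ taken "half-way". Concretely, $\tau((P_1\cdots P_d)^n)$ is a moment of the free multiplicative convolution $\mu_d=\nu_d^{\boxtimes d}$, where $\nu_d=(1-\tfrac1d)\delta_1+\tfrac1d\delta_0$. This holds because $\tau(a_1\cdots a_d)$-type words of free projections with the cyclic pattern $(P_1\cdots P_d)^n$ reduce, via projections being idempotent, to moments of $\prod P_i$, the free product.

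\textbf{Step 1 ($S$-transform).} For $\nu=(1-p)\delta_1+p\delta_0$ we have $\psi_\nu(z)=(1-p)z/(1-z)$ and $\chi_\nu(w)=w/(w+1-p)$. This gives $S_\nu(w)=\frac{1+w}{w+1-p}$. Multiplicativity of $S$ under $\boxtimes$ then yields
\[
S_{\mu_d}(w)=\Bigl(\frac{1+w}{w+1-1/d}\Bigr)^d\;\longrightarrow\;\exp\Bigl(\frac{1}{1+w}\Bigr),\qquad d\to\infty,
\]
uniformly on compacts near $w=0$. So the limit law $\mu$ has $S_\mu(w)=e^{1/(1+w)}$. (Normalizations: $\psi(z)=\sum_{n\ge1}m_nz^n$ with $m_n\to 1$ as the mass at $1$ dominates; note $\psi_\mu(z)\approx z/(1-z)$ at first order, consistent with $S_\mu(0)=e$, so $m_1=1/e$.) Check: $\tau(P_1\cdots P_d)=(1-1/d)^d\to e^{-1}$, as expected.

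\textbf{Step 2 (inversion).} Write $\psi=\psi_\mu(z)$ and $\chi=\psi^{-1}$. The $S$-transform relation $\chi(w)=\frac{w}{1+w}S(w)$ gives
\[
z=\frac{\psi}{1+\psi}\,e^{1/(1+\psi)}.
\]
Set $t=1/(1+\psi)$. Then $z=(1-t)e^{t}$. Put $W=t-1$, so that $-e^{-1}z=We^{W}$, i.e.\ $W=W_0(-e^{-1}z)$. We choose the principal branch because $z\to0$ must give $\psi\to0$, i.e.\ $t\to1$ and $W\to0$. The generating function is $\sum_{n\ge0}z^n m_n=1+\psi=1/t=1/(1+W_0(-e^{-1}z))$, as claimed. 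As a sanity check, expanding $W_0(x)=x-x^2+\dots$ with $x=-z/e$ gives coefficient $e^{-1}$ at $z^1$.

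\textbf{Step 3 (rigor).} We must pass from asymptotic freeness at fixed $d$ to the double limit in which the number of factors grows with $d$. This is the main obstacle, since freeness is usually stated for a fixed number of matrices. We handle it as follows. Fix $n$; the moment $\frac1d\E\Tr(A_d^n)$ is a polynomial in Weingarten-type integrals. Alternatively, and more concretely, compute $\E\Tr$ exactly at finite $d$ by integrating out the $u_i$ one at a time. Each $P_i$ is a rank-one perturbation of $I$, so expanding $\prod(I-u_iu_i^\top)^n$ gives a sum over subsets. Mixed sphere moments $\E[(u^\top Mu)]=\Tr M/d$ contribute, with corrections $O(1/d)$ per factor, and these corrections sum to $o(1)$ after the $\binom{d}{k}$ weighting. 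A cleaner route is to bound the difference between $\nu_d^{\boxtimes d}$ and the finite-$d$ empirical moment by $O(n^2/d)$ using the second-order freeness error for Haar-rotated rank-one projections. We then confirm that the limiting moments coincide with the power-series coefficients obtained in Step 2, by analyticity of $\psi$ near $0$.
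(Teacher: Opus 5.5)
Your Steps 1--2 are the paper's own derivation (Appendix~\ref{app:freeprop}, Steps 1--4 at $\rho=1$). You use the same $S_\nu(w)=(1+w)/(1+w-p)$, the same limit $\bigl((1+w)/(1+w-1/d)\bigr)^d\to e^{1/(1+w)}$, and the same Lambert-$W$ inversion, with the principal branch forced by $\psi(0)=0$. The algebra is correct. Only the parenthetical in Step 1 is garbled: the moments of $\mu$ decay and $\psi_\mu(z)=z/e+O(z^2)$, though $m_1=1/S_\mu(0)=1/e$ is right.

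The gap is Step 3. It carries the weight of the proof, because you apply freeness to a family whose size equals the dimension, and none of your three sketches closes it.
\begin{itemize}
\item The standard asymptotic error bounds of Weingarten calculus hold only for a fixed number of Haar matrices and bounded word length. Here you have $d$ independent matrices and words of length $nd$.
\item Second-order freeness describes covariances of traces of a fixed family. It says nothing about the bias of $\frac1d\E\Tr$ for a growing family, so it does not give your $O(n^2/d)$. Naively adding an $O(1/d)$ bias per factor over $d$ factors gives $O(1)$, which is exactly the difficulty.
\item The subset expansion is the most promising route, but as written it is not an argument. A term using $k$ distinct rows occurs $\binom{d}{k}(2^n-1)^k$ times with alternating signs. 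Rows recur up to $n$ times, so higher sphere moments enter, not just $\E[u^\top Mu]=\Tr M/d$. What you actually need is a uniform \emph{absolute} bound $\bigl|\tfrac1d\E\Tr(w)\bigr|\le C_n^k d^{-k}$ over all such words $w$, a genus-expansion-type estimate. That bound makes the sum over $k$ dominated by $\sum_k (2^nC_n)^k/k!$ uniformly in $d$. You would also need to identify each fixed-$k$ limit with the corresponding free-probability term. ``Corrections $O(1/d)$ per factor'' supplies neither.
\end{itemize}

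The paper avoids the growing family in two ways. In Appendix~\ref{app:freeprop} it takes an iterated limit. The rows are grouped into a fixed number $m$ of blocks of $\rho d/m$ rows, each block acting as one Haar-invariant projection, so Voiculescu's theorem applies to a finite family as $d\to\infty$. Only then does $m\to\infty$, giving the same product $(1+\rho/(m(1+w)-\rho))^m$. This still leaves the exchange of limits, and the replacement of sequential by simultaneous projection within a block, unjustified.

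The route that genuinely fits the joint limit is the resolvent argument of Section~\ref{sec:warmup}. Sherman--Morrison gives the exact one-step identity $\Tr(A_{t+1}-\lambda I)^{-1}=\Tr R-\lambda^{-1}-u^\top R^2u/u^\top Ru$, where $R=(A_t-\lambda I)^{-1}$ is independent of $u$. For $|\lambda|>1$ you have $\|R\|\le(|\lambda|-1)^{-1}$. Concentration of quadratic forms on the sphere, together with concentration of $\frac1d\Tr R$, then makes the per-step error $o(1)$ in the unnormalized trace. Summing $d$ steps therefore costs only $o(1)$ in $\frac1d\E\Tr$. This yields the transport equation $\partial_\rho s=-\lambda^{-1}-s^{-1}\partial_\lambda s$, whose characteristics ($\lambda s$ constant) reproduce $1/(1+W_0(-e^{-1}z))$. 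There the number of factors plays the role of ODE time rather than the size of a free family, which is exactly the mechanism your Step 3 lacks.
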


This yields the explicit formula for individual moments:
\[
\lim_{d \to \infty} \frac{\E[\Tr(A_d^n)]}{d} = \frac{n^n}{e^n n!} = \frac{1}{\sqrt{2\pi n}}\left(1 - \frac{1}{12n} + O(n^{-2})\right),
\]
by Stirling's approximation.
For closed-form expressions of the density corresponding to this moment sequence, refer to Section 4.2 of \citet{sakuma2011new} and Theorem 6.1 of \citet{mlotkowski2010fuss}.

\subsection{Comparison of Methods}

Figure~\ref{fig:shuffle-methods} validates our theoretical predictions.
The normalized test loss $\sqrt{n} \cdot L_{\text{test}}(w_n)$ converges to different constants for each method, exactly matching the values in Table~\ref{tab:shuffling-constants}.

\paragraph{Practical implications.}
For practitioners working on tasks that behave like our quadratic models, our results strongly recommend \textbf{shuffle-once} as the default sampling strategy.
It achieves $\sqrt{2}$ better test loss than IID or FlipFlop.
It requires no additional computation compared to other methods.
And the improvement is robust: it holds for any number of epochs.

The only situation where IID might be preferred is when fresh randomness is cheap and the practitioner wants to avoid any dependence between epochs.
However, for typical deep learning workloads, shuffle-once is generally thought to be superior.
A different analysis setting in which FlipFlop might outperform shuffle-once (non-satisfiable overdetermined systems) can be found in \citet{rajput2021permutation}.

\section{Main Result III: Universal Shuffling Gap Conjecture}
\label{sec:conjecture}

Our analysis thus far reveals a clear separation between two aspects of power-law learning:
\begin{enumerate}
    \item \textbf{The Rate Exponent (\(\beta\)):} Determined by the left-edge spectral exponent of the data, as shown by the general theory for IID SGD (Section~\ref{sec:iid}).
    \item \textbf{The Leading Constant:} Optimized by the choice of shuffling method, as precisely quantified for isotropic Gaussian data (Section~\ref{sec:shuffling}).
\end{enumerate}
We propose that these two phenomena naturally combine, extending the constant-factor advantage of Single Shuffle to all data distributions exhibiting a power-law spectral edge.

\begin{conjecture}[Universal Shuffling Gap]\label{conj:universal}
    For any data distribution with a spectral edge \(F(\lambda) \sim a\,\lambda^\beta\) (as defined in Theorem~\ref{thm:iid-sgd}), the expected test loss for Single Shuffle (SS) and symmetric shuffling schemes like Flip-Flop (FF) and IID sampling are approximated by:
    \[
       \text{Test Loss}_{SS}(n) \approx C_{SS}(\beta, \eta) \cdot n^{-\beta}, \quad \text{Test Loss}_{FF/IID}(n) \approx C_{FF/IID}(\beta, \eta) \cdot n^{-\beta},
    \]
    where \(C_{SS}(\beta, \eta) < C_{FF/IID}(\beta, \eta)\) for all \(\beta > 0\) and appropriate step sizes \(\eta\). Furthermore, the ratio \(C_{FF/IID}(\beta, \eta) / C_{SS}(\beta, \eta)\) is bounded below by a constant greater than 1, depending only on \(\beta\).
\end{conjecture}
In essence, we conjecture that while the ``spectral source condition'' sets the speed limit (the exponent \(\beta\)), the choice of shuffling algorithm consistently optimizes the efficiency (the leading constant) within that limit.
A proof of this conjecture would offer a more complete and unified theoretical model for understanding the role of data geometry and algorithmic choices in the efficiency of large-scale optimization.

\section{Empirical Validation}
\label{sec:experiments}

We validate our theoretical results through experiments on both synthetic Gaussian data and real-world embeddings.
All experiments use the randomized Kaczmarz algorithm, which is equivalent to SGD on least-squares loss.

\subsection{Experimental Setup}

\paragraph{Algorithm.}
We consider the homogeneous least-squares problem $\min_w \|Xw\|^2$ (equivalently, finding the null space of $X$).
This is equivalent to satisfiable least squares regression without loss of generality by shifting the optimum $w^*$ to $0$.
The Kaczmarz algorithm initializes $w_0$ as a random unit vector and updates:
\[
w_{k+1} = w_k - \eta\,\frac{x_{i_k}^\top w_k}{\|x_{i_k}\|^2}\, x_{i_k}
= (I-\eta\,u_{i_k}u_{i_k}^\top)\,w_k,
\qquad u_i:=\frac{x_i}{\|x_i\|},
\]
where $i_k$ indexes the selected row.
One \emph{epoch} consists of $m$ such updates.
All experiments in this section use the critically determined setting $m=d$, so one epoch is also $d$ row updates; this is why plots and theory often coincide when written in terms of either epochs ($n$) or the dimension-normalized time $k/d$ (with $k$ row updates).
We measure:
\begin{itemize}[leftmargin=2em]
    \item \textbf{Test loss:} $\|w_n\|^2$, the squared distance to the solution $w^* = 0$.
    \item \textbf{Training loss:} $\frac{1}{m}\|Xw_n\|^2$, the mean squared residual on training data.
\end{itemize}

\paragraph{Step size.}
We use $\eta = 1$ throughout, which corresponds to the standard Kaczmarz projection.
Since each update is a rank-one map with eigenvalues in $\{1,1-\eta\}$, stability holds for $\eta\in(0,2)$.
In our preprocessing we row-normalize so that $\|x_i\|^2\approx d$, which makes $u_i=x_i/\|x_i\|$ close to isotropic and ensures $\mathrm{trace}(H)=d$ (average eigenvalue $1$).

\paragraph{Spectral exponent fitting.}
To estimate the spectral exponent $\beta$ from data, we compute the eigenvalues of the Gram matrix $H = \frac{1}{m}X^\top X$ and fit a power law to the empirical CDF near the left edge.
Specifically, if $\lambda_1 \le \lambda_2 \le \cdots \le \lambda_d$ are the sorted eigenvalues and $F(\lambda_i) = i/d$ is the empirical CDF, we fit
\[
\log F(\lambda_i) = \beta \log \lambda_i + \log Q
\]
using weighted least squares on the left 10\% quantile of eigenvalues.
We skip the 5 smallest eigenvalues, which are subject to larger finite-sample fluctuations (Tracy-Widom scaling for extreme order statistics).
The weights are proportional to rank $i$, giving more weight to eigenvalues further from the extreme edge where estimates are more reliable.

\paragraph{Random seeds and averaging.}
We report means over 3--5 independent runs.
Across runs we resample the Kaczmarz initialization and, for embedding datasets, the subset of $m$ examples (and the random projection when applicable) using fixed, recorded seeds.

\subsection{Gaussian Data (Marchenko-Pastur)}

\paragraph{Data preparation.}
We generate $X \in \mathbb{R}^{m \times d}$ with i.i.d.\ entries $X_{ij} \sim \mathcal{N}(0, 1)$.
For the square case $m = d$, the Gram matrix $H = \frac{1}{m}X^\top X$ follows the Marchenko-Pastur distribution with eigenvalues supported on $[0, 4]$ and spectral exponent $\beta = 1/2$ at the left edge.

\paragraph{Results.}
Figure~\ref{fig:iid-validation-intro} shows IID SGD on Gaussian data with $m = d = 1024$.
For this case the theoretical values are $\beta = 1/2$ and $Q = 2/\pi$ exactly (Marchenko-Pastur), giving effective rank $\approx 512$.
The test loss decays as $\frac{1}{\sqrt{2\pi n}}$ and training loss as $\frac{1}{2\sqrt{2\pi}} n^{-3/2}$, matching the theoretical predictions from Theorem~\ref{thm:iid-gaussian} across four orders of magnitude.

\begin{figure}[t]
    \centering
    \includegraphics[width=0.62\linewidth]{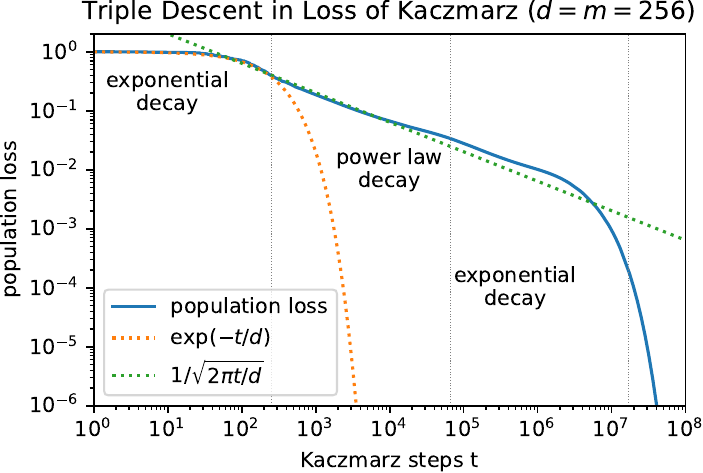}
    \caption{\textbf{Long-horizon run showing the third (final) exponential regime.}
    Population loss for randomized Kaczmarz on isotropic Gaussian data with $d=m=256$, plotted against the number of row updates $k$ (log-log).
    The curve exhibits three regimes: an initial exponential transient (up to $k\asymp d$), an extended power-law regime with slope $k^{-1/2}$ (for $d\ll k\ll d^3$, equivalently $1\ll n\ll d^2$ epochs), and a final exponential regime when spectral extremes dominate.}
    \label{fig:tripledescent}
\end{figure}

\subsection{Real-World Embeddings}

To test whether our theory applies beyond synthetic data, we validate on two real-world embedding datasets.

\paragraph{CIFAR-10 ResNet-50 embeddings.}
We extract 2048-dimensional features from the penultimate (avgpool) layer of a pretrained ResNet-50 applied to CIFAR-10 images.
Concretely, we use the torchvision implementation with ImageNet-pretrained weights and the standard preprocessing pipeline:
resize to $224\times224$ followed by per-channel normalization with mean $(0.485,0.456,0.406)$ and std $(0.229,0.224,0.225)$, without data augmentation.
We do not fine-tune the ResNet; it is used only as a fixed feature extractor.
We use all $60{,}000$ CIFAR-10 images as an unlabeled feature pool, subsample $m = 512$ images, apply a Gaussian random projection from $2048$ to $d = 512$ dimensions (fixed seed), center the data, and normalize each row to have norm $\sqrt{d}$.
This row normalization ensures $\mathrm{trace}(H) = d$, so the average eigenvalue is 1.
The fitted spectral exponent is $\beta \approx 0.45$, close to the Marchenko-Pastur value, indicating the embeddings are approximately isotropic after preprocessing.

\paragraph{Wikipedia text embeddings.}
We use 1536-dimensional text embeddings (OpenAI \texttt{text-embedding-3-small}) from Wikipedia snippets.
The corpus and embedding script are included in the repository (\url{experiments/wikipedia\_10k\_sample.json}, \url{experiments/embed\_wikipedia.py}); regenerating the embeddings requires an API key.
We apply the same preprocessing: subsample $m = 512$ sentences, random projection to $d = 512$, centering, and row normalization.
The fitted spectral exponent is $\beta \approx 0.47$.

\begin{table}[t]
    \centering
    \renewcommand{\arraystretch}{1.15}
    \begin{tabular}{lccc}
        \toprule
        \textbf{Dataset (preprocessed, $m=d=512$)} & $\hat\beta$ & effective rank & $q_{0.95}/q_{0.05}$ \\
        \midrule
        Gaussian ($X_{ij}\sim\mathcal N(0,1)/\sqrt d$) & 0.50 & 257 & $4.5\times 10^2$ \\
        CIFAR-10 ResNet-50 embeddings & 0.46 & 40 & $3.7\times 10^3$ \\
        Wikipedia OpenAI embeddings & 0.47 & 98 & $3.7\times 10^3$ \\
        \bottomrule
    \end{tabular}
    \caption{\textbf{Slope fits and anisotropy diagnostics.}
    $\hat\beta$ is the fitted left-edge spectral exponent from the eigenvalue CDF of $H=X^\top X/m$.
    The effective rank is $\mathrm{erank}(H):=\Tr(H)^2/\Tr(H^2)$ ($R_k$ in Definition 3 of \citet{bartlett2020benign}), and $q_{p}$ denotes the $p$-quantile of eigenvalues.
    CIFAR embeddings are highly anisotropic in the bulk (small effective rank), yet their left edge still yields a $\hat\beta$ close to $1/2$, consistent with the power-law slopes observed in Figure~\ref{fig:iid-validation-intro}.}
    \label{tab:anisotropy}
\end{table}

\paragraph{Results.}
Figure~\ref{fig:iid-validation-intro} shows $n=100$ epochs for all three datasets: Gaussian with $m=d=1024$, and both embedding datasets with $m=d=512$.
For both embedding datasets, IID SGD exhibits power-law convergence with exponents matching the fitted $\beta$ values:
\begin{itemize}[leftmargin=2em]
	    \item Test loss decays as $C \cdot n^{-\beta}$
    \item Training loss decays as $C' \cdot n^{-(\beta+1)}$
		\end{itemize}
The theoretical curves use the formulas from Theorem~\ref{thm:iid-gaussian}:
\begin{align*}
\text{Test loss} &= Q\,\Gamma(\beta+1)\,(2\eta)^{-\beta}\,n^{-\beta}, \\
\text{Training loss} &= \frac{\beta}{\eta(2-\eta)}\,Q\,\Gamma(\beta+1)\,(2\eta)^{-\beta}\,n^{-(\beta+1)},
\end{align*}
where $Q$ and $\beta$ are fitted from the empirical eigenvalue CDF as described above.
The theoretical curves match the experimental results closely.

\paragraph{Variance in test vs.\ training loss.}
A visible feature of Figure~\ref{fig:iid-validation-intro} is that training loss curves are noisier than test loss curves, even after averaging over multiple runs.
This is expected: test loss $\|w_n\|^2 = \sum_j (w_n \cdot v_j)^2$ weights all eigendirections equally, while training loss $w_n^\top H w_n = \sum_j \lambda_j (w_n \cdot v_j)^2$ weights each eigendirection by its eigenvalue $\lambda_j$.
Since the spectrum spans several orders of magnitude, random fluctuations in high-eigenvalue directions are amplified in training loss but not in test loss.
This anisotropic sensitivity makes training loss inherently more variable under stochastic sampling.

For IID row sampling from a fixed dataset, the proof of the sharp constants additionally assumes an edge fourth-moment closure (Appendix~\ref{app:iid-proofs}).

\paragraph{Reproducibility details.}
Figure~\ref{fig:iid-validation-intro} is generated by \texttt{experiments/unified\_iid\_validation.py}.
The repository includes precomputed embedding files: \texttt{cifar10\_resnet50\_embeddings.npz} (array key: \texttt{x\_train}, shape $60000 \times 2048$) and \texttt{wiki\_embeddings.npz} (array key: \texttt{embeddings}, shape $4277 \times 1536$).
Scripts to regenerate these embeddings are provided (\texttt{cifar10\_resnet\_mps.py}, \texttt{embed\_wikipedia.py}), with fully specified preprocessing; the Wikipedia embeddings require an OpenAI API key.
The Kaczmarz experiments themselves are lightweight NumPy runs (CPU); the embedding-extraction scripts use PyTorch and will use CPU/CUDA/MPS depending on availability.

\section{Discussion}
\label{sec:discussion}

We have extended the spectral theory of power-law convergence from full gradient descent to stochastic gradient descent, establishing that the spectral structure of the data---not the optimization algorithm---determines the convergence rate.

\subsection{Summary of Contributions}

\paragraph{General IID SGD bounds.}
Under the power-law spectral condition $\rho((0,\lambda]) \le Q\lambda^\beta$, we proved that SGD with IID sampling achieves test loss $O(n^{-\beta})$ and training loss $O(n^{-(\beta+1)})$.
This extends the framework of \citet{velikanov2024tight} from full GD to stochastic methods, showing that the rate is determined by spectral density, not by the determinism of gradient computation.

\paragraph{Exact shuffling constants.}
For isotropic Gaussian data ($\beta = 1/2$), we derived exact leading constants for three sampling schemes using free probability theory.
The key finding is that shuffle-once achieves a factor of $\sqrt{2}$ improvement in test loss over both IID sampling and FlipFlop.
This is the first tight analysis of shuffling benefits.

\paragraph{Universality conjecture.}
We conjectured that the $\sqrt{2}$ shuffling advantage holds for all spectral exponents $\beta$, not just Gaussians, and provided extensive empirical support.

\subsection{Practical Recommendations}

Our results have immediate practical implications:

\begin{enumerate}[leftmargin=2em]
\item \textbf{Use shuffle-once.}
For any SGD application, shuffling the data once at the start and reusing the same order each epoch provides a guaranteed $\sqrt{2}$ improvement in test loss over IID sampling.
This costs nothing computationally.

\item \textbf{Avoid FlipFlop as a drop-in ``shuffling improvement'' in the interpolating regime.}
In the homogeneous/interpolating least-squares setting analyzed here, FlipFlop performs no better than IID sampling in our asymptotics, despite its theoretical appeal (reversing makes the two-epoch map symmetric).
The correlations introduced by the backward pass negate the variance reduction from exact coverage.
\emph{Regime caveat:} for inconsistent systems (no exact solution) or for objectives beyond least squares, stability/robustness considerations can change which schedule is preferable; in those settings one typically uses algorithms designed for inconsistency (e.g.\ extended/randomized Kaczmarz variants) rather than FlipFlop per se \citep{strohmer2009randomized, needell2014stochastic, zouzias2013randomized}.

\item \textbf{Measure the spectrum.}
The convergence rate depends on the spectral exponent $\beta$.
For a new dataset, computing the eigenvalue CDF of the Gram matrix reveals the expected power-law behavior.
\end{enumerate}

\subsection{Connections to Deep Learning}

While our analysis focuses on linear least squares, several aspects connect to deep learning practice:

\paragraph{Neural tangent kernels.}
In the infinite-width limit, neural networks behave like kernel methods with the Neural Tangent Kernel \citep{jacot2018neural}.
The NTK Gram matrix has power-law eigenvalue decay with $\beta = 1/(d+1)$ for $d$-dimensional inputs.
Our results apply directly to this setting.

\paragraph{Why observed slopes can be much smaller than $1/2$.}
Our theory predicts that the intermediate power-law slope is set by the left-edge spectral exponent $\beta$ (test loss $\sim n^{-\beta}$, training loss $\sim n^{-(\beta+1)}$).
In settings where practitioners report much smaller slopes (e.g.\ $0.07$--$0.15$), the framework suggests that the relevant Gram/NTK spectrum has a much heavier left edge (smaller $\beta$), for example $\beta=1/(d+1)$ in the NTK model (yielding $\beta\approx 0.14$ for $d=6$ and $\beta\approx 0.07$ for $d=13$), or more generally due to strong anisotropy/heavy tails on the spectral edge.

\paragraph{Representation learning.}
Modern practice often involves linear probing on pretrained embeddings (e.g., CLIP, ResNet features).
These embeddings typically have near-isotropic structure ($\beta \approx 1/2$), putting them squarely in our framework.

\paragraph{Frozen-feature Transformers (head-only training).}
To connect more directly to neural network training, we also study a tiny Transformer language model in a linearized regime where the network body is frozen and only the final linear head is trained (head-only MSE on next-token prediction).
In this setting, the loss is quadratic in the trained parameters and the relevant curvature is the feature covariance $\Cov_{\mathrm{train}}$.
Figure~\ref{fig:transformer-covtrain} shows that the theory-aligned parameter error and residual are well predicted by the shifted-edge asymptotic once the Laplace window enters the fitted edge regime (here $\gtrsim 10^3$ steps).

\begin{figure}[t]
    \centering
    \includegraphics[width=\linewidth]{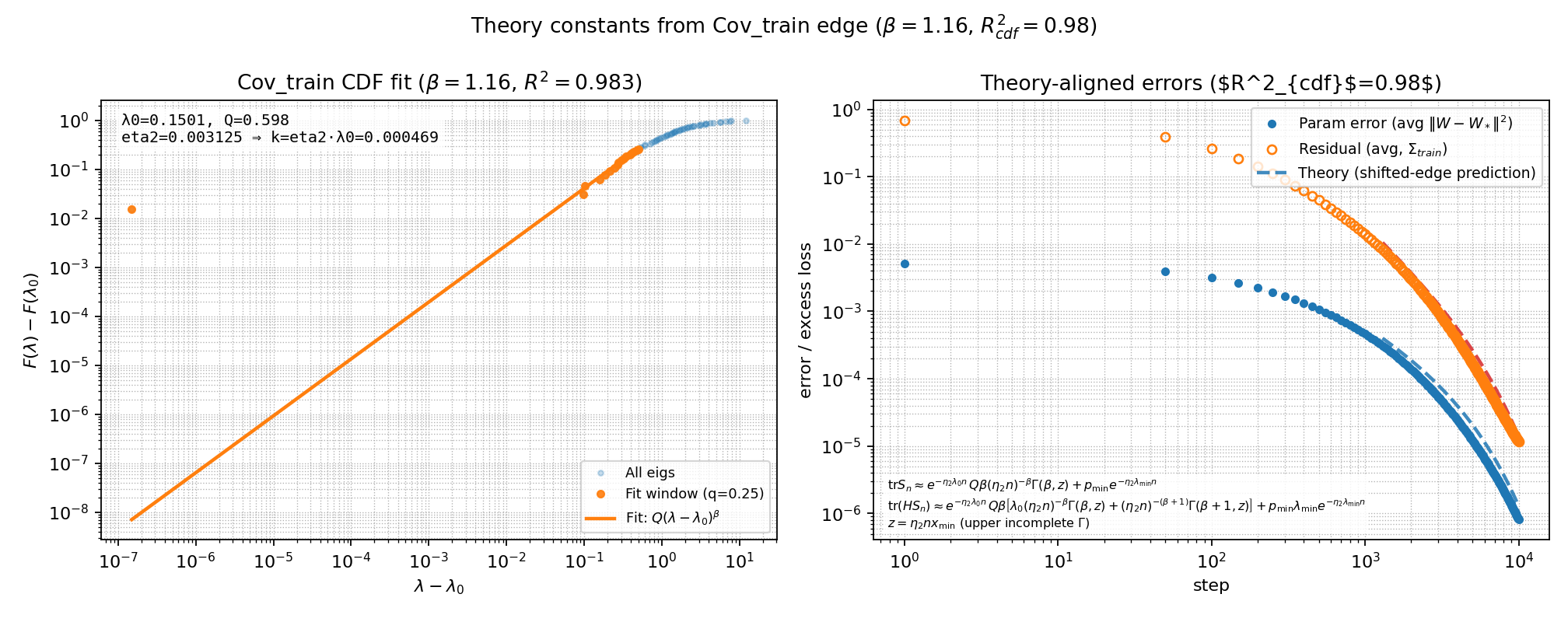}
    \caption{\textbf{Transformer head-only training exhibits shifted-edge power-law behavior.}
    A tiny Transformer LM is frozen and only the final linear head is trained (MSE), reducing the dynamics to a quadratic problem with curvature $\Cov_{\mathrm{train}}$.
    \emph{Left:} shifted-edge fit to the spectrum of $\Cov_{\mathrm{train}}$, used to estimate $(\lambda_0,\beta,Q)$.
    \emph{Right:} theory-aligned parameter error and residual compared with the finite-$d$ shifted-edge prediction (formula shown in the plot; dashed curves shown only once the Laplace window enters the fitted edge regime).}
    \label{fig:transformer-covtrain}
\end{figure}

\paragraph{Implicit regularization.}
The power-law convergence we analyze may connect to the implicit regularization properties of SGD.
The algorithm's slow convergence on low-eigenvalue directions acts as a form of spectral regularization, potentially explaining generalization in overparameterized settings.

\subsection{Limitations}

Our analysis has several limitations:

\begin{enumerate}[leftmargin=2em]
\item \textbf{Linear models.}
We analyze quadratic loss on linear models.
Extension to non-convex losses (neural networks) requires different techniques.

\item \textbf{Constant step size.}
We assume $\eta$ is fixed.
Adaptive methods (Adam, learning rate schedules) may have different dynamics.

\item \textbf{No mini-batches.}
We analyze single-example updates.
Mini-batch SGD interpolates between our analysis and full GD.

\item \textbf{Finite-$d$ constants at shifted edges.}
Our main theorems are asymptotic in $(d,n)$.
When the edge is shifted ($\lambda_0>0$) and $\beta\gtrsim 1$, the finite-$d$ gap between $\lambda_0$ and the smallest observed eigenvalue can materially affect the apparent constants.
Appendix~\ref{app:powerlaw-proofs}, Proposition~\ref{prop:shifted-edge-finite-d} gives a simple finite-$d$ correction that improves accuracy in this regime.

\item \textbf{Universality unproven.}
Conjecture~\ref{conj:universal} lacks a complete proof.
Extending free probability techniques beyond the Gaussian case is an open challenge.
\end{enumerate}

\subsection{Open Problems}

Several directions merit further investigation:

\begin{enumerate}[leftmargin=2em]
\item \textbf{Prove the universality conjecture.}
Develop techniques to compute shuffling constants for general power-law spectra.

\item \textbf{Analyze mini-batch SGD.}
How does batch size affect the power-law exponent and shuffling advantage?
Preliminary experiments suggest the exponent is preserved but constants change.

\item \textbf{Connect to neural network training.}
Can power-law spectral conditions explain observed convergence rates in deep learning?
What determines $\beta$ for neural network loss landscapes?

\item \textbf{Optimal shuffling.}
Is shuffle-once optimal among all data orderings, or could adaptive orderings (curriculum learning) do better?

\item \textbf{Non-stationary spectra.}
In deep learning, the effective Gram matrix changes during training.
How do time-varying spectra affect convergence?
\end{enumerate}

\subsection{Conclusion}

Power-law convergence is not an artifact of full gradient descent---it is a fundamental property of optimization under power-law spectral conditions.
The spectral structure of the data, captured by the exponent $\beta$, determines the rate regardless of whether we use deterministic or stochastic updates.

The practical message is clear: shuffle-once is the optimal default for SGD.
It provides a $\sqrt{2}$ improvement over IID sampling with no computational cost.
This simple recommendation, backed by our theoretical analysis and extensive experiments, offers immediate value for practitioners.

\bibliography{main}

\appendix

\section{Proofs of Power-Law Edge Theorems}
\label{app:powerlaw-proofs}

This appendix records a Laplace/edge-mass estimate for power-law left edges that we use repeatedly (in particular in Appendix~\ref{app:iid-proofs}).

\subsection{A Laplace estimate at a power-law left edge}

\begin{lemma}[Left-edge Laplace estimate]\label{lem:left-edge-laplace}
Let $F_d(x):=\frac1d\sum_{i=1}^d \mathbf 1\{\lambda_i\le x\}$ be the empirical eigenvalue CDF of a PSD matrix with spectrum in $[0,1]$.
Assume there exist constants $Q>0$, $\beta>0$, and a function $r(x)\to 0$ as $x\downarrow 0$ such that for all $x\in[0,1]$,
\[
F(x)=Q\,x^\beta\,(1+r(x)),
\]
and that $r$ is bounded on $[0,x_0]$ for some $x_0>0$.
Then for any fixed $\eta\in(0,2)$ and any sequence $n=n_d\to\infty$ with $n^\beta=o(d)$,
\[
S(\eta,n):=\int_{[0,1]}(1-\eta x)^n\,dF_d(x)
\;=\;Q\,\Gamma(\beta+1)\,(\eta n)^{-\beta}\,\bigl(1+o_{\mathbb P}(1)\bigr).
\]
\end{lemma}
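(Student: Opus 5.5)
We will prove the estimate by a direct Laplace-type argument on the Lebesgue--Stieltjes integral, splitting at a shrinking threshold. We read the hypothesis as applying to $F=F_d$, with $r=r_d$ allowed to depend on $d$ but satisfying $r_d(x)\to0$ uniformly on windows $x\ge c/d^{1/\beta}$, up to $o_{\mathbb P}(1)$ errors. This is the only meaningful reading: an empirical CDF is a step function with jumps of size $1/d$, so it cannot equal $Qx^\beta(1+r(x))$ below scale $d^{-1/\beta}$. This explains the condition $n^\beta=o(d)$, which says that the relevant window $x\asymp 1/n$ carries about $dQn^{-\beta}\to\infty$ eigenvalues.

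\textbf{Step 1 (far region).} Fix $\eta\in(0,2)$ and note that $|1-\eta x|\le \max(1-\eta x,\eta-1)$ on $[0,1]$. Choose $\delta_n\downarrow0$ with $n\delta_n\to\infty$, say $\delta_n=n^{-1/2}$. For $x\ge\delta_n$ we have $|1-\eta x|\le q_n:=\max(1-\eta\delta_n,|\eta-1|)$. Hence the contribution of $(\delta_n,1]$ is at most $q_n^n\le e^{-\eta n\delta_n}+|\eta-1|^n$, which is $o(n^{-\beta})$. If $n$ is odd the sign of $(1-\eta x)^n$ may be negative for $\eta>1$, but this bound controls the absolute value, so it is harmless.

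\textbf{Step 2 (near region, integration by parts).} On $[0,\delta_n]$ we have $\eta\delta_n<1$, so $g(x)=(1-\eta x)^n$ is positive and decreasing. Integrating by parts gives
\[
\int_{[0,\delta_n]} g\,dF_d = g(\delta_n)F_d(\delta_n)+\int_0^{\delta_n} n\eta(1-\eta x)^{n-1}F_d(x)\,dx .
\]
The boundary term is $O(e^{-\eta n\delta_n})$. We then substitute $x=t/(\eta n)$ and plug in $F_d(x)=Qx^\beta(1+r(x))$. The main term is
\[
Q(\eta n)^{-\beta}\int_0^{\eta n\delta_n}(1-t/n)^{n-1}t^\beta\,dt \to Q(\eta n)^{-\beta}\Gamma(\beta+1),
\]
by dominated convergence, using the bound $(1-t/n)^{n-1}\le e^{-t(n-1)/n}$.

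\textbf{Step 3 (error term, the main obstacle).} It remains to show that $\int_0^{\eta n\delta_n} e^{-t/2}t^\beta |r(t/(\eta n))|\,dt=o(1)$. Split the range at $t=\epsilon$ and at $t=\eta n\delta_n$. On $[\epsilon,\eta n\delta_n]$ the argument satisfies $x\in[\epsilon/(\eta n),\delta_n]$, which tends to $0$, so $\sup|r|\to0$ there; this needs $\epsilon/(\eta n)\gg d^{-1/\beta}$, which holds precisely because $n^\beta=o(d)$. On $[0,\epsilon]$ we use the bound on $r$ near $0$, or more crudely $F_d(x)\le F_d(\epsilon/\eta n)\lesssim Q(\epsilon/\eta n)^\beta$, to get a contribution $O(\epsilon^{\beta+1})$ relative to the main term; then we let $\epsilon\to0$. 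The probabilistic qualifier $o_{\mathbb P}(1)$ enters only if $F_d$ is random: we need the edge law to hold with high probability uniformly on $[\epsilon/(\eta n),\delta_n]$. I expect this step to be the delicate one, since it requires making precise the uniformity of $r$ at the mesoscopic scale $1/n$. I would state it as a uniform-in-window hypothesis (local law at scale $\gg d^{-1/\beta}$) and verify it for Marchenko--Pastur via standard local laws.

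Combining Steps 1--3 yields $S(\eta,n)=Q\Gamma(\beta+1)(\eta n)^{-\beta}(1+o_{\mathbb P}(1))$.
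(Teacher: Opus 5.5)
\paragraph{Verdict: genuine gap.} Your deterministic steps are sound. Integrating by parts directly against $n\eta(1-\eta x)^{n-1}$ and using dominated convergence is slightly cleaner than the paper's detour through $e^{-\eta n x}$. The gap is in the hypothesis you work under.

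In the lemma, and in the paper's proof, $F$ is a \emph{deterministic} edge law and $F_d$ is the random empirical CDF. The $o_{\mathbb P}(1)$ and the condition $n^\beta=o(d)$ are there precisely because one must show that $F_d$ tracks $F$ at the mesoscopic scale $x\asymp 1/n$. You instead set $F=F_d$ and then postulate a uniform-in-window law for $r_d$ on $[\epsilon/(\eta n),\delta_n]$, to be verified later via local laws. That assumes exactly the step that carries the probabilistic content. What you have proved is a deterministic Laplace estimate under an extra hypothesis, not the stated lemma. The condition $n^\beta=o(d)$ enters your argument only to make that hypothesis plausible.

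\paragraph{The missing idea is the paper's variance argument.} Treat the $\lambda_i$ as independent draws from $F$ and split at $\delta=R/(\eta n)$. After integration by parts, the discrepancy between the $F_d$- and $F$-integrals is $D_d=\frac1d\sum_i\xi_i$, where
\[
\xi_i=e^{-\eta n\delta}\Delta_i(\delta)+\eta n\int_0^\delta e^{-\eta n x}\Delta_i(x)\,dx,\qquad \Delta_i(x)=\mathbf 1\{\lambda_i\le x\}-F(x).
\]
These $\xi_i$ are i.i.d.\ and mean zero. Using $\mathbb E[\Delta_1(x)\Delta_1(y)]\le F(\min\{x,y\})$ and $F(x)\le 2Qx^\beta$ near $0$ gives $\mathrm{Var}(D_d)\le C(\eta n)^{-\beta}/d$. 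Chebyshev then yields $D_d=o_{\mathbb P}((\eta n)^{-\beta})$, and this is exactly where $n^\beta=o(d)$ is used.

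In your notation the repair is even shorter. Write $S(\eta,n)=\frac1d\sum_i g(\lambda_i)$.
\begin{itemize}
\item Its mean is $\int g\,dF$. Your Steps 1--3 apply to this with the deterministic $F$, and Step 3 becomes a one-line dominated-convergence argument because $r$ is bounded near $0$ and tends to $0$.
\item Its variance is at most $\frac1d\int(1-\eta x)^{2n}\,dF(x)=O(n^{-\beta}/d)$.
\end{itemize}
Chebyshev then gives relative error $O_{\mathbb P}(\sqrt{n^\beta/d})$. This controls a single linear statistic in $L^2$ rather than $\sup_x|F_d/F-1|$ over the window, so the uniformity you flagged as delicate never arises.

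A genuine local-law route would buy something: it would not need the independence of the $\lambda_i$ that the paper's argument relies on, and that actual random-matrix eigenvalues lack. As written, however, your proposal defers that entire input rather than supplying it.
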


\begin{proof}
For the proof (and the remainder of this appendix), write $a:=Q$ for the left-edge constant.
For readability we write $n=n_d$ and omit the $d$-subscript on $S_d$.
Fix any sequence $R=R_d\to\infty$ with $R\le n/2$ for all large $d$ (e.g.\ $R=(\beta+\sigma)\log n$ with $\sigma>0$), and set
\[
\delta:=\frac{R}{\eta n},\qquad \varepsilon_d:=\sup_{0\le x\le\delta}|r(x)|\xrightarrow[d\to\infty]{}0 .
\]
Define $F_d(x):=\frac1d\sum_{i=1}^d\mathbf 1\{\lambda_i\le x\}$,
then $S(\eta,n)$ can be written as an integral against this discrete measure.
We split the integral at $\delta$,
\begin{align*}
S(\eta,n)
&= \int_{[0,1]}(1-\eta x)^{n}\,dF_d(x)
\\&=\int_{[0,\delta]}(1-\eta x)^{n}\,dF_d(x)
+\int_{(\delta,1]}(1-\eta x)^{n}\,dF_d(x)
\\&=: I_1 + I_2 .
\end{align*}

\paragraph{1) Tail $I_2$.}
Since $x\mapsto|1-\eta x|$ is convex on $[0,1]$,
\[
|I_2|\ \le\ \sup_{x\in[\delta,1]}|1-\eta x|^{n}
=\max\{(1-\eta\delta)^n,\ |1-\eta|^{\,n}\}
\ \le\ e^{-R}+e^{-(2-\eta) n}=:E_{\mathrm{tail}}.
\]
Note we exclude $\eta=2$.
Supporting that would require assuming that there aren't too many $\lambda_i$ near 1.
Excluding $\eta=2$ we only need to make assumptions on the left edge of $\mu$.

\paragraph{2) Near edge: replace $(1-\eta x)^n$ by $e^{-\eta n x}$.}
On $[0,\delta]$ we have $\eta x\le R/n\le1/2$, and for $n\ge1$,
\[
0\ \le\ e^{-\eta n x}-(1-\eta x)^n\ \le\ e^{-\eta n x}\,(\eta x)^2\,n .
\]
Integrating against $dF_d$ and using $x^2\le\delta^2$ on $[0,\delta]$,
\[
\int_{[0,\delta]}\bigl|(1-\eta x)^n-e^{-\eta n x}\bigr|\,dF_d(x)
\ \le\ \eta^2 n\,\delta^2\,F_d(\delta)
\ =\ \frac{R^2}{n}\,F_d(\delta).
\]
Hence
\[
I_1=\int_{[0,\delta]} e^{-\eta n x}\,dF_d(x)\ +\ E_{\mathrm{rep}},
\qquad
|E_{\mathrm{rep}}|\ \le\ \frac{R^2}{n}\,F_d(\delta).
\]
We will show $E_{\mathrm{rep}}=o_{\mathbb P}\big((\eta n)^{-\beta}\big)$ later.

\paragraph{3) Integration by parts and a centered discretization term.}
On $[0,\delta]$,
\[
\int_{[0,\delta]} e^{-\eta n x}\,dF_d(x)
= e^{-\eta n\delta}F_d(\delta)+\eta n\int_0^{\delta} e^{-\eta n x}F_d(x)\,dx.
\]
Insert and subtract $F$:
\[
\int_{[0,\delta]} e^{-\eta n x}\,dF_d(x)
= e^{-\eta n\delta}F(\delta)+\eta n\int_0^{\delta} e^{-\eta n x}F(x)\,dx\ +\ D_d,
\]
where the \emph{discretization} term is
\[
D_d
:=e^{-\eta n\delta}\bigl(F_d(\delta)-F(\delta)\bigr)
+\eta n\int_0^{\delta} e^{-\eta n x}\bigl(F_d(x)-F(x)\bigr)\,dx.
\]
A key point is that $D_d$ is an \emph{average of independent, mean-zero} terms:
writing
\[
\Delta_i(x):=\mathbf 1\{\lambda_i\le x\}-F(x),\qquad
\xi_i:=e^{-\eta n\delta}\Delta_i(\delta)+\eta n\int_0^{\delta} e^{-\eta n x}\Delta_i(x)\,dx,
\]
we have $D_d=\frac1d\sum_{i=1}^d\xi_i$ and $\mathbb E[\xi_i]=0$.

\paragraph{4) Variance control of $D_d$.}
We claim $\mathrm{Var}(\xi_1)\ \le\ C\,(\eta n)^{-\beta}$ for a constant $C$ depending only on $(a,\beta,\eta)$.
Indeed, expand $\mathbb E[\xi_1^2]$ and use
\[
\mathbb E\big[\Delta_1(x)\Delta_1(y)\big]=F(\min\{x,y\})-F(x)F(y)\ \le\ F(\min\{x,y\}).
\]
Then
\[
\begin{aligned}
\mathbb E[\xi_1^2]
&\le e^{-2\eta n\delta}\,\mathrm{Var}\big(\mathbf 1\{\lambda_1\le \delta\}\big)
+2e^{-\eta n\delta}\,\eta n\!\int_0^{\delta} e^{-\eta n x}\big(F(\min\{\delta,x\})-F(\delta)F(x)\big)\,dx\\
&\qquad +(\eta n)^2\!\int_0^{\delta}\!\!\int_0^{\delta} e^{-\eta n(x+y)}
\big(F(\min\{x,y\})-F(x)F(y)\big)\,dx\,dy\\
&\le e^{-2R}F(\delta)+2e^{-R}\,\eta n\!\int_0^{\delta} e^{-\eta n x}F(x)\,dx
+2\,\eta n\int_0^{\delta} F(x)\,e^{-2\eta n x}\,dx.
\end{aligned}
\]
Here the bound $\mathrm{Var}(\mathbf 1\{\lambda_1\le \delta\})=F(\delta)(1-F(\delta))\le F(\delta)$
follows from Bernoulli variance.
Now, using the edge bound $F(x)\le 2a\,x^\beta$ for all small $x$ (valid on $[0,\delta]$ for large $d$) and the change of variables $u=\eta n x$,
\[
\eta n\int_0^{\delta} e^{-\eta n x}F(x)\,dx\ \le\ C_1\,(\eta n)^{-\beta},\qquad
\eta n\int_0^{\delta} e^{-2\eta n x}F(x)\,dx\ \le\ C_2\,(\eta n)^{-\beta},
\]
while $e^{-2R}F(\delta)\le C_0\,e^{-2R}\,(\eta n)^{-\beta}R^\beta$. Hence
\[
\mathrm{Var}(\xi_1)=\mathbb E[\xi_1^2]\ \le\ C\,(\eta n)^{-\beta},
\]
and by independence
\[
\mathrm{Var}(D_d)=\frac{1}{d}\mathrm{Var}(\xi_1)\ \le\ \frac{C}{d}\,(\eta n)^{-\beta}.
\]
Chebyshev therefore yields
\[
\frac{|D_d|}{(\eta n)^{-\beta}}\ =\ O_{\mathbb P}\!\Big(\sqrt{\tfrac{(\eta n)^{\beta}}{d}}\Big)
= o(1)
\]
as $n^\beta=o(d)$ by the assumption of the theorem.

\paragraph{5) Evaluate the $F$-part.}
We now evaluate the continuous part of the integral.
On $[0,\delta]$, $F(x)=a(1+\varepsilon_d)x^\beta$. Thus
\[
\eta n\int_0^{\delta} e^{-\eta n x}F(x)\,dx
=a(1+\varepsilon_d)\,(\eta n)^{-\beta}\!\int_0^{R} u^\beta e^{-u}\,du,
\]
and the boundary term is $e^{-\eta n\delta}F(\delta)=a(1+\varepsilon_d)\,(\eta n)^{-\beta}\,e^{-R}R^\beta$.
Extending $\int_0^R$ to $\int_0^\infty$ incurs the incomplete-Gamma tail
\[
\Gamma(\beta+1,R):=\int_R^\infty u^\beta e^{-u}\,du\ \le\ e^{-R}P_\beta(R)
\]
for a polynomial $P_\beta$. Hence
\[
e^{-\eta n\delta}F(\delta)+\eta n\int_0^{\delta} e^{-\eta n x}F(x)\,dx
= a\,\Gamma(\beta+1)\,(\eta n)^{-\beta}
+ E_\mathrm{trunc}
+ E_\mathrm{boundary},
\]
where $E_\mathrm{trunc} = O\!\Big((\eta n)^{-\beta}\,e^{-R}P_\beta(R)\Big)$
and $E_\mathrm{boundary} = O\!\big(\varepsilon_d\,(\eta n)^{-\beta}\big).$

\paragraph{6) Control the replacement term $E_{\mathrm{rep}}$.}
Recall $|E_{\mathrm{rep}}|\le (R^2/n)\,F_d(\delta)$. Decompose
\[
\frac{R^2}{n}\,F_d(\delta)
= a(1+\varepsilon_d)\,\frac{R^{\beta+2}}{(\eta n)^{\beta+1}}
\ +\ \frac{R^2}{n}\big(F_d(\delta)-F(\delta)\big).
\]
The first term is deterministic $o\big((\eta n)^{-\beta}\big)$ since $R=o(n)$.
For the second, note $\mathbb E\big|F_d(\delta)-F(\delta)\big|\le\sqrt{\mathrm{Var}(F_d(\delta))}
\le\sqrt{F(\delta)/d}\le C\,R^{\beta/2}(\eta n)^{-\beta/2}d^{-1/2}$. Thus by Markov,
\[
\mathbb P\!\left(\frac{R^2}{n}\big|F_d(\delta)-F(\delta)\big|>\varepsilon\,(\eta n)^{-\beta}\right)
\ \le\ \frac{C}{\varepsilon}\,\frac{R^{\beta/2+2}}{n}\,\sqrt{\frac{(\eta n)^{\beta}}{d}}
\ \xrightarrow[d\to\infty]{}\ 0
\]
because $R=o(n)$ and $n^\beta=o(d)$. Hence $E_{\mathrm{rep}}=o_{\mathbb P}\big((\eta n)^{-\beta}\big)$.

\paragraph{7) Conclusion.}
Collecting all parts,
\[
S(\eta,n)
= a\,\Gamma(\beta+1)\,(\eta n)^{-\beta}
+ E_{\mathrm{trunc}}
+ E_{\mathrm{boundary}}
+ E_{\mathrm{tail}}
+ D_d
+ E_{\mathrm{rep}}.
\]
where
\begin{align*}
E_{\mathrm{trunc}} &= O\big((\eta n)^{-\beta} e^{-R}P_\beta(R)\big) &= o\big((\eta n)^{-\beta}\big)
\\
E_{\mathrm{boundary}} &= O\big(\varepsilon_d (\eta n)^{-\beta}\big) &= o\big((\eta n)^{-\beta}\big)
\\
E_{\mathrm{tail}} &= O(e^{-R} + e^{-(2-\eta)n}) &= o\big((\eta n)^{-\beta}\big)
\\
D_d
&= O_{\mathbb P}(\sqrt{(\eta n)^{\beta}/d}\,(\eta n)^{-\beta})
&= o_{\mathbb P}\big((\eta n)^{-\beta}\big)
\\
E_{\mathrm{rep}} &= o_{\mathbb P}\big((\eta n)^{-\beta}\big)
&= o_{\mathbb P}\big((\eta n)^{-\beta}\big)
\end{align*}
Choosing, e.g., $R=(\beta+\sigma)\log n$ makes the exponential tails negligible.
Since $\varepsilon_d\to0$ and $n^\beta=o(d)$, all error terms are $o_{\mathbb P}\big((\eta n)^{-\beta}\big)$, yielding
\[
S(\eta,n)=a\,\Gamma(\beta+1)\,(\eta n)^{-\beta}\,\bigl(1+o_{\mathbb P}(1)\bigr).
\]
\end{proof}

\subsection{Beyond the Power-Law Window: Supercritical and Critical Regimes}

The main theorem describes behavior in the ``power-law window'' where $n^\beta = o(d)$. Here we characterize what happens beyond this window.

\begin{theorem}[Regime transitions]\label{thm:regimes-appendix}
Under the spectral condition $\rho((0,\lambda]) \sim Q\lambda^\beta$:
\begin{enumerate}
\item \textbf{Supercritical regime} ($n^\beta/d \to \infty$): The test loss is dominated by the smallest eigenvalue:
\[
S_d(\eta,n) = \frac{1}{d}(1-\eta\lambda_{\min})^n (1 + o_{\mathbb{P}}(1)).
\]

\item \textbf{Critical regime} ($n^\beta/d \to \kappa \in (0,\infty)$): The test loss has non-vanishing variance with mean of order $\kappa^{-1}(\eta n)^{-\beta}$.
\end{enumerate}
\end{theorem}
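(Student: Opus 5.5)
We work in the setting of Lemma~\ref{lem:left-edge-laplace}: $S_d(\eta,n)=\frac1d\sum_{i=1}^d(1-\eta\lambda_i)^n$, where the $\lambda_i$ are (exchangeable, approximately i.i.d.) draws from a law $F$ with $F(x)=Qx^\beta(1+r(x))$. We treat the two regimes separately, using order statistics at the left edge. Let $\lambda_{(1)}\le\lambda_{(2)}\le\cdots$ be the sorted eigenvalues. The key scaling fact is that $dF(\lambda_{(k)})$ behaves like a Gamma$(k)$ variable. Hence $\lambda_{(k)}\asymp (k/(Qd))^{1/\beta}$, and the $n$-dependence only enters through $u_k:=\eta n\lambda_{(k)}$, which is of order $(n^\beta k/d)^{1/\beta}$.

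\textbf{Supercritical regime.} When $n^\beta/d\to\infty$, we have $u_1\to\infty$ in probability. More importantly, the gaps $\eta n(\lambda_{(2)}-\lambda_{(1)})$ also diverge: $d(F(\lambda_{(2)})-F(\lambda_{(1)}))$ is Exp$(1)$, so the gap is of order $d^{-1/\beta}$ and $n$ times the gap is of order $(n^\beta/d)^{1/\beta}\to\infty$.

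We factor out the leading term:
\[
d\,S_d=(1-\eta\lambda_{(1)})^n\Bigl(1+\sum_{k\ge2}\Bigl(\tfrac{1-\eta\lambda_{(k)}}{1-\eta\lambda_{(1)}}\Bigr)^n\Bigr).
\]
We split the sum into three ranges.
\begin{itemize}
\item Edge range $k\le K$, with $K=K_d\to\infty$ slowly. Each ratio is at most $\exp(-\eta n(\lambda_{(k)}-\lambda_{(1)}))$. Using Rényi's representation of the spacings in the variable $F(\lambda)$, together with $F^{-1}$ being locally like $x^{1/\beta}$, the sum over this range is $o_{\mathbb P}(1)$.
\item Bulk range $\lambda_{(k)}\ge\delta$. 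We reuse the tail bound $|1-\eta x|^n\le e^{-\eta n\delta}+e^{-(2-\eta)n}$ from step~1 of Lemma~\ref{lem:left-edge-laplace}, compared against $(1-\eta\lambda_{(1)})^n\approx e^{-u_1}$.
\item Middle range. We compare with the continuous integral
\[
d\,\eta n\int_{\lambda_{(K)}}^{\delta} e^{-\eta n x}F(x)\,dx,
\]
which is exponentially small in $u_K$ relative to $e^{-u_1}$.
\end{itemize}
This yields $S_d=\frac1d(1-\eta\lambda_{\min})^n(1+o_{\mathbb P}(1))$.

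\textbf{Critical regime.} When $n^\beta/d\to\kappa$, we rescale $x=u/(\eta n)$. The point process $\{\eta n\lambda_i\}$ then converges to a Poisson process on $[0,\infty)$ with intensity $d\,Q\beta u^{\beta-1}(\eta n)^{-\beta}\,du\to \kappa^{-1}\eta^{-\beta}Q\beta u^{\beta-1}du$; this follows from the order-statistic scaling above. Consequently $dS_d\to\sum_j e^{-U_j}$ in distribution. This limit is a nondegenerate random variable, and Campbell's formula gives its mean as $\kappa^{-1}\eta^{-\beta}Q\Gamma(\beta+1)$. Since $d^{-1}=(\kappa^{-1}+o(1))\,n^{-\beta}$, the mean of $S_d$ is of order $\kappa^{-1}(\eta n)^{-\beta}$. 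The variance of $S_d$, relative to its mean squared, is of order $n^\beta/d\to\kappa$ by the variance computation in step~4 of Lemma~\ref{lem:left-edge-laplace}, so the relative fluctuations do not vanish. The contributions of the bulk and of the replacement $(1-\eta x)^n\to e^{-\eta nx}$ are handled exactly as in Lemma~\ref{lem:left-edge-laplace}. Convergence of Laplace functionals of the point process, or direct moment computations, justify passing to the limit.

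\textbf{Main obstacle.} The hardest step is the edge range in the supercritical case. We need the ratios $(1-\eta\lambda_{(k)})^n/(1-\eta\lambda_{(1)})^n$ to be summably small, and this requires spacing control of the smallest eigenvalues at scale $d^{-1/\beta}$. Our model of independent draws gives this via exponential spacings. For genuine random-matrix spectra, the spacings are instead governed by Tracy–Widom/Bessel-type local statistics, which also produce spacings of order $d^{-1/\beta}$; we would assume (or cite) such local edge regularity rather than derive it from the CDF assumption alone, since the CDF assumption by itself cannot rule out near-degenerate small eigenvalues.
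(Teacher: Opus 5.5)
Your argument is sound in substance but takes a different route from the paper, and in the supercritical case yours is the more reliable one.

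\textbf{How the paper argues.} It anchors at $\lambda_{\min}$ and splits the remaining eigenvalues by the gap $\Delta_i=\lambda_i-\lambda_{\min}$ at $T/n$. It bounds the near part by $\#\{i:\lambda_i\le T/n\}$ via Markov, using $\E\le Cd(T/n)^\beta$. It bounds the far part crudely by $(d-1)e^{-2\eta T}+d\,e^{-c_\eta n}$. In the critical regime it only shows that $n\lambda_{\min}$ has a Weibull limit.

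\textbf{Two steps of that proof do not hold as written.} First, the implication ``$\lambda_i-\lambda_{\min}\le T/n\Rightarrow\lambda_i\le T/n$'' is backwards. In the supercritical regime $n\lambda_{\min}\to\infty$, so $\#\{\lambda_i\le T/n\}$ is typically $0$, while the near-neighbour count is governed exactly by the edge spacings you single out as the crux. Second, no fixed $T$ satisfies $e^{-2\eta T}\le\varepsilon^2/d$ for all $d$. Taking $T\asymp\log d$ instead would require $n^\beta\gg d(\log d)^\beta$.

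\textbf{What your route buys.} Your R\'enyi-spacing control of the first $K$ order statistics, plus the continuous-integral comparison for the middle range (which gives roughly $K e^{-(u_K-u_1)}$), repairs both problems and covers the full range $n^\beta/d\to\infty$. In the critical regime, your Poisson-process limit $dS_d\Rightarrow\sum_j e^{-U_j}$ is also stronger. The Weibull law of $n\lambda_{\min}$ is just the law of the smallest point of that process. A non-degenerate limit for the whole sum is what actually shows the fluctuations persist; the paper's ``$\E N_d=O(1)$, hence $R_{d,n}$ does not vanish'' rests only on an upper bound. Your i.i.d.-eigenvalue caveat costs nothing relative to the paper, since Lemma~\ref{lem:left-edge-laplace} and the Weibull computation already use independence.

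\textbf{Three corrections.}

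First, $n^\beta/d\to\kappa$ gives $d^{-1}=(\kappa+o(1))\,n^{-\beta}$, not $(\kappa^{-1}+o(1))\,n^{-\beta}$. Combined with your Campbell mean $\kappa^{-1}\eta^{-\beta}Q\Gamma(\beta+1)$, the $\kappa$ cancels and $\E S_d=Q\Gamma(\beta+1)(\eta n)^{-\beta}(1+o(1))$. Indeed, by exchangeability $\E S_d=\int(1-\eta x)^n\,dF(x)$ for every $d$. So the $\kappa^{-1}$ in the statement is correct only in the loose sense that $\kappa$ is a fixed constant; do not bend the algebra to produce it.

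Second, step~4 of the Lemma gives only an upper bound on the variance. For ``non-vanishing'' you need a lower bound. Use $\mathrm{Var}(S_d)=d^{-1}\mathrm{Var}\bigl((1-\eta\lambda_1)^n\bigr)$, which yields $\mathrm{Var}(S_d)/(\E S_d)^2\to 2^{-\beta}\eta^\beta\kappa/(Q\Gamma(\beta+1))>0$. Alternatively, invoke the non-degeneracy of your limit law directly.

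Third, in the supercritical bulk step take $\delta$ to be a fixed small constant inside the edge window. With the Lemma's choice $\delta=R/(\eta n)$, $R\asymp\log n$, the bound $d\,e^{-R}$ is not small compared with $e^{-u_1}$ once $u_1\gg\log n$.
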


\begin{proof}
Throughout we write $n=n_d$. We begin with a deterministic decomposition anchored at the sample minimum:
\[
S_d(\eta,n)
=\frac1d\,(1-\eta \lambda_{\min})^{n}\,
\Bigg[1+\sum_{i:\lambda_i>\lambda_{\min}} \bigg|\frac{1-\eta \lambda_{i}}{1-\eta \lambda_{\min}}\bigg|^{\,n}\Bigg]
=\frac1d\,(1-\eta \lambda_{\min})^{n}\,\bigl(1+R_{d,n}\bigr),
\]
where
\[
R_{d,n}:=\sum_{i:\lambda_i>\lambda_{\min}} r_i^{\,n},
\qquad
r_i:=\bigg|\frac{1-\eta \lambda_{i}}{1-\eta \lambda_{\min}}\bigg| \in[0,\infty).
\]
We will bound $R_{d,n}$ pathwise by splitting indices according to the gap
$\Delta_i:=\lambda_{i}-\lambda_{\min}$.

\medskip
\noindent\textit{Step 1: a pathwise bound for $R_{d,n}$.}
Fix $T>0$ and write
\[
R_{d,n}= \sum_{i:\lambda_i>\lambda_{\min}} r_i^n \mathbf 1\{\Delta_i\le T/n\}
+ \sum_{i:\lambda_i>\lambda_{\min}} r_i^n \mathbf 1\{\Delta_i> T/n\}
=: R_{\mathrm{near}}+R_{\mathrm{far}}.
\]

\emph{Near neighbors.}
If $\Delta_i\le T/n$ then certainly $r_i^n\le 1$, hence
\[
R_{\mathrm{near}}\ \le\ \#\{\,i:\ \Delta_i\le T/n\,\}
=:N_d(T/n).
\]
Since $\lambda_{\min}\ge0$, the event $\{\lambda_{i}-\lambda_{\min}\le T/n\}$ implies $\lambda_{i}\le T/n$, hence
\[
N_d(T/n)\ \le\ \sum_{i=1}^d \mathbf 1\{\lambda_i\le T/n\},\qquad
\mathbb E\,N_d(T/n)\ \le\ d\,F(T/n)\ \le\ C\,d\,(T/n)^{\beta}
\]
for a constant $C$ depending only on $(a,\beta)$.
By Markov's inequality, for any $\varepsilon>0$,
\[
\mathbb P\!\big(N_d(T/n)\ge \varepsilon\big)\ \le\ \frac{C}{\varepsilon}\,d\,(T/n)^{\beta}.
\]

\emph{Far neighbors.}
On the event $\{\lambda_{\min}\le \min\{x_0/2,\,1/(2\eta)\}\}$ (which has probability $\to1$),
we have $1-\eta \lambda_{\min}\ge \tfrac12$. Then for any $i$:

\begin{itemize}
\item If $\lambda_{i}\le 1/\eta$ and $\Delta_i> T/n$, the map $x\mapsto (1-\eta x)$ is decreasing on $[0,1/\eta]$, hence
\[
r_i=\frac{1-\eta \lambda_{i}}{1-\eta \lambda_{\min}}
=1-\frac{\eta\,(\lambda_{i}-\lambda_{\min})}{1-\eta \lambda_{\min}}
\ \le\ \exp\!\Big(-\frac{\eta\,\Delta_i}{1-\eta \lambda_{\min}}\Big)
\ \le\ \exp(-2\eta\,\Delta_i),
\]
whence $r_i^{\,n}\le \exp(-2\eta n\,\Delta_i)\le e^{-2\eta T}$.

\item If $\lambda_{i}\ge 1/\eta$, then $r_i\le \frac{|1-\eta|}{1-\eta \lambda_{\min}}\le 2|1-\eta|<1$, so $r_i^n
\le (2|1-\eta|)^n\le e^{-c_\eta n}$ with $c_\eta>0$.
\end{itemize}

Thus, on $\{\lambda_{\min}\le \min\{x_0/2,1/(2\eta)\}\}$,
\[
R_{\mathrm{far}}\ \le\ (d-1)\,e^{-2\eta T}\ +\ d\,e^{-c_\eta n}.
\]

Combining near and far parts,
\[
\boxed{\ R_{d,n}\ \le\ N_d(T/n)\ +\ (d-1)e^{-2\eta T}\ +\ d\,e^{-c_\eta n}\ } \qquad\text{with probability }1-o(1).
\]

\medskip
\noindent\textit{Step 2: supercritical regime $n^{\beta}/d\to\infty$.}
Fix $\varepsilon>0$. Choose $T=T(\varepsilon)$ so large that $e^{-2\eta T}\le \varepsilon^2/d$ for all $d$,
and note that $e^{-c_\eta n}\to0$. Then
\[
\mathbb P\!\big(R_{d,n}>\varepsilon\big)
\ \le\ \mathbb P\!\big(N_d(T/n)>\varepsilon/2\big)\ +\ o(1)
\ \le\ \frac{2C}{\varepsilon}\,d\,(T/n)^{\beta}\ +\ o(1)
\ \xrightarrow[d\to\infty]{}\ 0,
\]
because $d\,(T/n)^{\beta}\to0$ when $n^{\beta}/d\to\infty$.
Therefore $R_{d,n}\xrightarrow{\mathbb P}0$, which proves part (i):
\[
S_d(\eta,n)
=\frac{1}{d}\,(1-\eta \lambda_{\min})^{n}\,(1+o_{\mathbb P}(1)).
\]

\medskip
\noindent\textit{Step 3: critical regime $n^{\beta}/d\to\kappa\in(0,\infty)$.}
At this scale, the \emph{expected} near-edge count in a window of width ${\sim}T/n$ stays of order one:
\[
\mathbb E\,N_d(T/n)\ \le\ C\,d\,(T/n)^{\beta}\ \longrightarrow\ C\,\kappa^{-1}\,T^{\beta}.
\]
Hence $R_{d,n}$ does not vanish in probability. Let $Z_d:=n\,\lambda_{\min}$. For $\alpha>0$,
\[
\mathbb P(Z_d>\alpha)
=\mathbb P\!\big(\lambda_{\min}>\alpha/n\big)
=\bigl(1-F(\alpha/n)\bigr)^d
\ \to\ \exp\!\big(-a\,\kappa^{-1}\,\alpha^{\beta}\big),
\]
so $Z_d$ converges in distribution to a Weibull law on $[0,\infty)$. This proves part (ii).
\end{proof}

\subsection{Finite-\texorpdfstring{$d$}{d} correction at a shifted edge}

The main theorems (and Lemma~\ref{lem:left-edge-laplace}) describe the \emph{subcritical} power-law window where the spectrum can be treated as continuous near the left edge.
In practice, however, one often fits a \emph{shifted} edge law
\[
\rho((\lambda_0,\lambda])\ \approx\ Q\,(\lambda-\lambda_0)^\beta
\qquad (\lambda\downarrow\lambda_0)
\]
with $\lambda_0$ slightly \emph{below} the smallest observed eigenvalue $\lambda_{\min}$ of the finite-$d$ matrix.
When $\beta\gtrsim 1$ and/or $\lambda_0$ is not tiny, the resulting gap $x_{\min}:=\lambda_{\min}-\lambda_0$ can be non-negligible on the time scales of interest and the naive replacement $x_{\min}=0$ can substantially overpredict the trace.
The following ``gap+atom'' correction captures this finite-$d$ effect in closed form and interpolates smoothly between the power-law window and the minimum-eigenvalue dominated regime of Theorem~\ref{thm:regimes-appendix}.

\begin{proposition}[Shifted-edge finite-$d$ predictor]\label{prop:shifted-edge-finite-d}
Assume that for some $\lambda_0\ge 0$, $Q>0$, $\beta>0$, and $x_0>0$,
\[
\rho((\lambda_0,\lambda_0+x])\ =\ Q\,x^\beta\,(1+o(1))\qquad (x\downarrow 0),
\]
and let $\lambda_{\min}$ be the smallest eigenvalue with multiplicity fraction $p_{\min}:=\mathrm{mult}(\lambda_{\min})/d$.
Define the gap $x_{\min}:=\lambda_{\min}-\lambda_0\ge 0$ and $z:=2\eta n\,x_{\min}$.
Then the squared-dynamics trace admits the practical approximation
\begin{equation}\label{eq:shifted-edge-finite-d-test}
\frac1d\,\Tr\bigl((I-\eta H)^{2n}\bigr)
\ \approx\ 
e^{-2\eta\lambda_0 n}
\Bigl[
Q\,\beta\,(2\eta n)^{-\beta}\,\Gamma(\beta,z)\;+\;p_{\min}\,e^{-z}
\Bigr],
\end{equation}
where $\Gamma(\beta,z):=\int_z^\infty t^{\beta-1}e^{-t}\,dt$ is the upper incomplete Gamma function.

Moreover, for the normalized (Kaczmarz) IID row update in the $m=d$ setting, the corresponding training-loss trace satisfies
\begin{align*} %
\frac1d\,\E\Tr(HS_n)\ &\approx\ \frac{2}{2-\eta}\,e^{-2\eta\lambda_0 n}\,
Q\,\beta\Bigl[
\lambda_0\,(2\eta n)^{-\beta}\Gamma(\beta,z)\;+\;(2\eta n)^{-(\beta+1)}\Gamma(\beta+1,z)
\Bigr]
\\&\hspace{4em}+\;\frac{2}{2-\eta}\,p_{\min}\,\lambda_{\min}\,e^{-2\eta\lambda_{\min}n}.
\end{align*}
\end{proposition}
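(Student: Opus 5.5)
We read the statement as a structured approximation, not a limit theorem. The plan is to derive it by modelling the finite-$d$ empirical spectral measure as a ``gap+atom'' measure and then evaluating the Laplace-type integral exactly.

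\textbf{Model of the spectrum.} Write $x=\lambda-\lambda_0$. We replace the empirical measure $dF_d$ by
\[
d\mu(x)=p_{\min}\,\delta_{x_{\min}}(dx)+Q\beta x^{\beta-1}\mathbf 1\{x>x_{\min}\}\,dx .
\]
The atom carries the mass of the smallest observed eigenvalue. The continuous part is the fitted edge law, truncated below at the observed gap, since no eigenvalues lie in $(\lambda_0,\lambda_{\min})$.

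\textbf{Test-loss formula.} We write
\[
\frac1d\Tr\bigl((I-\eta H)^{2n}\bigr)=\int(1-\eta\lambda)^{2n}\,dF_d(\lambda).
\]
As in step 1 of Lemma~\ref{lem:left-edge-laplace}, the part of the integral away from the edge is exponentially small, because $\eta<2$. On the edge window we use step 2 of the same lemma to replace $(1-\eta\lambda)^{2n}$ by $e^{-2\eta n\lambda}$, with the same $R^2/n$ relative error.

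Next we factor $e^{-2\eta n\lambda}=e^{-2\eta n\lambda_0}e^{-2\eta nx}$. Integrating against $\mu$ gives two contributions:
\begin{itemize}
\item the atom gives $p_{\min}e^{-z}$;
\item the continuous part gives $Q\beta\int_{x_{\min}}^\infty x^{\beta-1}e^{-2\eta nx}\,dx$.
\end{itemize}
Substituting $t=2\eta nx$ turns the second term into exactly $Q\beta(2\eta n)^{-\beta}\Gamma(\beta,z)$, which is \eqref{eq:shifted-edge-finite-d-test}. As consistency checks:
\begin{itemize}
\item when $z\to0$ and $p_{\min}\to0$ we recover $Q\Gamma(\beta+1)(2\eta n)^{-\beta}$, which is Lemma~\ref{lem:left-edge-laplace} at $2n$ with the shift factor;
\item when $z\to\infty$ the atom term dominates, matching Theorem~\ref{thm:regimes-appendix}(i).
\end{itemize}

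\textbf{Training-loss formula.} We do not treat the training trace as a second independent computation. Instead we use the exact trace-decrement identity for the Kaczmarz step,
\[
\Tr(S_{k+1})=\Tr(S_k)-\eta(2-\eta)u_k^\top S_ku_k .
\]
Taking expectations with $\E u_ku_k^\top=H/d$ (here $m=d$ and rows are normalized), and summing over one epoch of $d$ steps, gives
\[
\tfrac1d\E\Tr(S_{n+1})-\tfrac1d\E\Tr(S_n)\approx-\eta(2-\eta)\,\tfrac1d\E\Tr(HS_n).
\]
So the training trace is approximately $-\frac{1}{\eta(2-\eta)}\frac{d}{dn}$ of the test trace. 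We then apply this relation to the IID analogue of the test predictor; Theorem~\ref{thm:iid-sgd} identifies the IID and full-GD leading forms. Differentiating the $\mu$-integral under the integral sign in $n$ brings down a factor $2\eta\lambda=2\eta(\lambda_0+x)$. The continuous part then splits into
\begin{itemize}
\item a $\lambda_0(2\eta n)^{-\beta}\Gamma(\beta,z)$ term, and
\item a $(2\eta n)^{-(\beta+1)}\Gamma(\beta+1,z)$ term, obtained from $x^\beta e^{-t}$ after the same substitution.
\end{itemize}
The atom gives $\lambda_{\min}p_{\min}e^{-2\eta\lambda_{\min}n}$. Multiplying by $2\eta/(\eta(2-\eta))=2/(2-\eta)$ yields the displayed formula. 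Note that the atom term carries $e^{-z}e^{-2\eta n\lambda_0}=e^{-2\eta\lambda_{\min}n}$ automatically.

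\textbf{Main obstacle.} The main difficulty is justification rather than computation. The statement uses ``$\approx$'', so the honest content is an exact evaluation under the gap+atom model, plus error control.

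For the continuous part, the discretization error $D_d$ is handled by the variance argument of Lemma~\ref{lem:left-edge-laplace}, now relative to the truncated measure. This gives relative error $O_{\mathbb P}(\sqrt{(\eta n)^\beta/d})$, which is small outside the critical regime.

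The more delicate point is the interpolation region $z\asymp1$. There the atom and the next few eigenvalues are not well described by either a continuous density or a single atom. Here we would rely on the Weibull edge statistics from Theorem~\ref{thm:regimes-appendix}(ii): they show the correction is of the right order, though the prefactor is not exact. We would state only this order-level guarantee there.

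For the training formula we would additionally assume that the IID second-moment dynamics follow the full-GD spectral profile to the needed accuracy, which is the content of Theorem~\ref{thm:iid-sgd}. Without that assumption, the finite-difference-to-derivative step would have to be controlled directly.
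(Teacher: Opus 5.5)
Your derivation is essentially the paper's own: the same gap-plus-atom surrogate for the edge spectrum, the replacement $(1-\eta\lambda)^{2n}\approx e^{-2\eta n\lambda}$ with the substitution $t=2\eta n x$ giving $\Gamma(\beta,z)$ and $\Gamma(\beta+1,z)$, and the Kaczmarz trace-decrement identity that turns $-\frac{1}{\eta(2-\eta)}\frac{d}{dn}$ of the test predictor into the prefactor $2/(2-\eta)$. One caution on your error-control remarks: step 2 of Lemma~\ref{lem:left-edge-laplace} does not carry over to the shifted window, because there $\eta\lambda\approx\eta\lambda_0$ is not $O(R/n)$, so the exponential replacement is off by a factor $\exp(-\eta^2\lambda_0^2 n+\dots)$ rather than $1+O(R^2/n)$ (for full GD the sharper prefactor is $(1-\eta\lambda_0)^{2n}$, with $2\eta n$ replaced by $2\eta n/(1-\eta\lambda_0)$); the $e^{-2\eta\lambda_0 n}$ form is therefore justified only when $\eta\lambda_0\sqrt{n}\ll 1$, or for the IID Kaczmarz dynamics, where each epoch consists of $d$ steps with factor $1-2\eta\lambda/d+O(d^{-2})$.
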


\begin{proof}[Derivation (informal)]
For the test trace, approximate $(1-\eta\lambda)^{2n}\approx e^{-2\eta n\lambda}$ on the left edge and write $\lambda=\lambda_0+x$.
Truncating the edge density at $x_{\min}$ yields
\[
\int_{x_{\min}}^\infty e^{-2\eta n(\lambda_0+x)}\,Q\beta x^{\beta-1}\,dx
=
e^{-2\eta\lambda_0 n}\,Q\beta(2\eta n)^{-\beta}\Gamma(\beta,2\eta n x_{\min}),
\]
and we add the missing mass $p_{\min}$ as an atom at $\lambda_{\min}=\lambda_0+x_{\min}$, giving~\eqref{eq:shifted-edge-finite-d-test}.
For the training trace, apply the same truncation to $\int \lambda e^{-2\eta n\lambda}\,d\rho(\lambda)$ and use the exact trace decrement identity for normalized Kaczmarz (Section~\ref{sec:iid}), which multiplies the GD-style $\lambda$-weight by the factor $2/(2-\eta)$.
\end{proof}

\subsection{Frobenius Norm under Gaussian Rows}

While the trace results above apply to general power-law spectra, we can derive specific results for the Frobenius norm when the rows are sampled as Gaussian vectors.

\begin{theorem}[Frobenius norm under Gaussian rows]\label{thm:frob-gaussian}
Assume the left edge law $F(x)\sim a\,x^\beta$ with $\beta\in(0,1)$ and let $(u_t)$ be i.i.d.\ Gaussian with covariance $A$. With $S_{t+1}=(I-\eta u_tu_t^{\top})S_t(I-\eta u_tu_t^{\top})$ and $S_0=I$, we have
\[
\mathbb E\,\Tr(S_n)\ =\ a\,\Gamma(\beta+1)\,(2\eta n)^{-\beta}\,\bigl(1+o(1)\bigr),\qquad n\to\infty.
\]
\end{theorem}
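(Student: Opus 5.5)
The plan is to average first and reduce everything to deterministic scalar recursions in the eigenbasis of $A$. Set $M_t:=\mathbb E\,S_t$. Since $u_t$ is independent of $S_t$, and by the Gaussian (Wick) fourth-moment identity $\mathbb E[uu^\top B\,uu^\top]=2ABA+A\,\Tr(AB)$ for symmetric $B$, we get the closed linear recursion
\[
M_{t+1}=M_t-\eta(AM_t+M_tA)+2\eta^2AM_tA+\eta^2A\,\Tr(AM_t).
\]
Starting from $M_0=I$, every $M_t$ is diagonal in the eigenbasis of $A$. Writing $m_j(t)$ for its diagonal entries and $\lambda_j$ for the eigenvalues of $A$, this gives
\[
m_j(t+1)=g_j\,m_j(t)+\eta^2\lambda_j L(t),\qquad g_j:=1-2\eta\lambda_j+2\eta^2\lambda_j^2,\qquad L(t):=\sum_k\lambda_k m_k(t).
\]
I read $\Tr$ with the $1/d$ normalization of the spectral CDF, and $A$ scaled as in the Laplace lemma so that its spectrum lies in $[0,1]$ and $\sup_j g_j<1$ away from $\lambda=0$.

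Unrolling (Duhamel) gives
\[
m_j(n)=g_j^{\,n}+\eta^2\lambda_j\sum_{s<n}g_j^{\,n-1-s}L(s).
\]
Hence the trace splits into a \emph{deterministic GD-like part} $\sum_j g_j^{\,n}$ and a \emph{noise part}
\[
\eta^2\sum_{s<n}\Psi(n-1-s)\,L(s),\qquad \Psi(t):=\sum_j\lambda_j g_j^{\,t}.
\]
For the GD-like part, note $g_j=e^{-2\eta\lambda_j+O(\lambda_j^2)}$ near the edge. Rerunning the argument of Lemma~\ref{lem:left-edge-laplace} with $(1-\eta x)^n$ replaced by $g(x)^n$ yields $\sum_j g_j^{\,n}=a\,\Gamma(\beta+1)(2\eta n)^{-\beta}(1+o(1))$. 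The only changes are that $\eta$ becomes $2\eta$ in the exponent, and the replacement error $n\,O(x^2)$ on $[0,\delta]$ is handled exactly as in step 2 of that lemma. The same Laplace computation with the extra weight $\lambda$ gives $\Psi(t)\asymp t^{-(\beta+1)}$, which is summable because $\beta>0$.

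Next I would control $L$ by a renewal equation. Multiplying the Duhamel formula by $\lambda_j$ and summing over $j$ gives
\[
L(n)=\Psi(n)+\eta^2\sum_{s<n}K(n-1-s)L(s),\qquad K(t):=\sum_j\lambda_j^2g_j^{\,t}.
\]
The kernel has total mass
\[
\eta^2\sum_t K(t)=\eta^2\sum_j\frac{\lambda_j^2}{1-g_j}=\frac{\eta}{2}\sum_j\frac{\lambda_j}{1-\eta\lambda_j},
\]
and this is strictly below $1$ under the step-size normalization; this is the usual stability condition for SGD. So the equation is a subcritical discrete renewal equation. Because the forcing $\Psi(n)\asymp n^{-(\beta+1)}$ is regularly varying and $K$ decays at least as fast (indeed $K(t)\asymp t^{-(\beta+2)}$), the standard subcritical renewal asymptotic gives
\[
L(n)\sim\frac{\Psi(n)}{1-\eta^2\sum_t K(t)}=O(n^{-(\beta+1)}).
\]
In particular $\sum_s L(s)<\infty$.

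Finally, the noise part is a convolution of two sequences, both $O(t^{-(\beta+1)})$ and summable. Splitting the sum at $s=n/2$ bounds it by $O(n^{-(\beta+1)})=o(n^{-\beta})$, so the stated asymptotic is exactly the GD-like part.

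I expect the main obstacle to be the renewal step: making sure the stability constant $\eta^2\sum_t K(t)$ is uniformly below $1$ in the regime considered, and transferring the regular variation of $\Psi$ to $L$ uniformly as $d\to\infty$ rather than for a fixed spectrum. If uniformity is delicate, my fallback is cruder. Since $K(t)\le\lambda_{\max}\Psi(t)$, the same recursion, or the trace decrement $\Tr S_{t+1}\le\Tr S_t$, gives an a priori bound $L(s)\le C\,\Psi$-type, and the convolution estimate only needs $L(n)=O(n^{-(\beta+1)})$, not its exact constant.
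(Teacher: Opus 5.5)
Your proposal is correct at the same formal density-limit level as the paper's sketch. Its first half coincides with the paper's: Wick closure, diagonalization in the eigenbasis of $A$, and the diagonal-plus-rank-one recursion $s_{t+1}=Ds_t+\eta^2(\lambda^\top s_t)\lambda$.

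You differ in how you handle the rank-one coupling. The paper passes to $\sum_t z^t\,\E\Tr(S_t)=\hat{1}^\top(I-zM)^{-1}\hat{1}$ and applies Sherman--Morrison to get $h_0+z\eta^2h_1^2/(1-z\eta^2h_2)$. It then argues by singularity analysis that $h_0\sim a(2\eta)^{-\beta}\Gamma(\beta+1)\Gamma(1-\beta)(1-z)^{\beta-1}$ dominates the less singular correction.

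Your Duhamel/renewal argument is the time-domain dual of exactly this. We have $\sum_n L(n)z^n=h_1(z)/(1-z\eta^2h_2(z))$, your noise part is the coefficient sequence of the Sherman--Morrison correction, and your kernel mass $\kappa:=\eta^2\sum_tK(t)$ equals $\eta^2h_2(1)$.

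What your route buys:
\begin{itemize}
\item It avoids $\Delta$-analyticity and transfer theorems. The main term comes directly from the Laplace lemma, and the error from an elementary convolution split.
\item It makes explicit the hypothesis $\kappa<1$. This is precisely the exact second-moment stability condition $\rho(D+\eta^2\lambda\lambda^\top)<1$, and it is genuinely needed: otherwise $1-z\eta^2h_2(z)$ vanishes inside the unit disk and $\E\Tr(S_n)$ grows exponentially.
\end{itemize}
The paper's final displayed form $h_0-z\eta^2h_1^2/(1+z\eta^2h_2)$ carries a sign slip that hides this condition, so your version is the more careful one. In exchange, the generating-function route gives a closed form from which refined expansions can be read off.

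Two small caveats.

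\emph{Uniformity in $d$.} The obstacle you flag dissolves once the normalization is made consistent. The coupling $\Tr(AM_t)$ is an unnormalized trace. So in the genuine SGD scaling $\lambda(A)=O(1/d)$, measure time in epochs $\tau=t/d$. Then $\Psi$, $d\cdot K$ and $L$ become functions of $\tau$, and the renewal equation becomes a Volterra equation with kernel mass $\tfrac{\eta}{2}\Tr(A)(1+o(1))$. Your convolution bound then gives an $O(\tau^{-(\beta+1)})$ noise term uniformly in $d$. Keeping $O(1)$ eigenvalues with $1/d$-normalized sums instead is just the paper's formal density limit.

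\emph{Your fallback.} The pathwise decrement $\Tr S_{t+1}\le\Tr S_t$ is false for Gaussian rows. We have $\Tr S_{t+1}=\Tr S_t-\eta(2-\eta\|u_t\|^2)\,u_t^\top S_tu_t$, which increases whenever $\eta\|u_t\|^2>2$; monotonicity holds only for unit-norm Kaczmarz rows. The bound $L(n)=O(n^{-(\beta+1)})$ should come from the renewal equation itself. For example, split $\sum_t K(t)L(n-1-t)$ at $t=\varepsilon n$ with $\kappa(1-\varepsilon)^{-(\beta+1)}<1$, and bootstrap on a finite horizon.
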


\begin{proof}[Proof sketch]
Define $S_{t+1} = (I-\eta u_t u_t^T) S_t (I-\eta u_t u_t^T)$ with $S_0 = I$.
Then
\[
\mathbb E[S_{t+1}]
= S_t - \eta(A S_t + S_t A) + \eta^2 (2 A S_t A + \Tr(A S_t) A).
\]
We see that this is analytical in $A$, so we can diagonalize and write it in the eigenbasis of $A$.
We get
\[
s_{t+1}
= \diag(1 - 2\eta \lambda + 2\eta^2 \lambda^2) s_t + \eta^2 (\lambda^T s_t) \lambda.
\]
Define $M = \diag(1 - 2\eta \lambda + 2\eta^2 \lambda^2) + \eta^2 \lambda \lambda^T$,
then
\[
\mathbb E[\Tr(S_t)]
= \hat{1}^T M^t \hat{1}.
\]

Since we assume the small eigenvalues are dominating, we would like to show that the $1-2\eta \lambda$ term dominates.
We write the generating function:
\[
\sum_{t=0}^\infty \mathbb E[\Tr(S_t)] z^t
= \hat{1}^T
(I-zM)^{-1}
\hat{1}
= \hat{1}^T
(I-zD+z\eta^2\lambda\lambda^T)^{-1}
\hat{1},
\]
where $D=\diag(1 - 2\eta \lambda + 2\eta^2 \lambda^2)$.

Using the Sherman-Morrison formula, we get
\begin{align*}
\sum_t \mathbb E[\Tr(S_t)] z^t
&= \hat{1}^T (I-zD)^{-1} \hat{1}
+ \frac{z\eta^2 (\hat{1}^T (I-zD)^{-1} \lambda)^2}{1 - z\eta^2 \lambda^T (I-zD)^{-1} \lambda}
\\&=
h_0(z) - \frac{z\eta^2 h_1(z)^2}{1 + z\eta^2 h_2(z)},
\end{align*}
where $h_k(z) = \sum_{i=1}^d \frac{\lambda_i^k}{1-z \alpha_\eta(\lambda_i)}$
and $\alpha_\eta(\lambda) = 1-2\eta \lambda + 2\eta^2 \lambda^2$.

We move to the density limit with edge density $a\beta x^{\beta-1}$.
The integral
\[
\int_0^1 \frac{x^k}{1-z \alpha_\eta(x)} a\beta x^{\beta-1} dx
\]
has singularities where $z=1/\alpha_\eta(x)$, the smallest of them being at $z=1$ since
$\alpha_\eta(x) = (1-\eta x)^2 + \eta^2 x^2 \in [1, 1-2\eta + 2\eta^2]$ for $x\in[0,1]$.

For $\beta\in(0,1)$ and $\eta\in(0,2)$ we get
\begin{align*}
H_0(z) &= (2\eta)^{-\beta} \Gamma(\beta+1) \Gamma(1-\beta) (1-z)^{\beta-1}
+ \beta (-\log(1-z)) + O(1)
\\
H_k(z) &= \beta (-\log(1-z)) + O(1) \quad k\ge 1
\\
\frac{z\eta^2 H_1(z)^2}{1 + z\eta^2 H_2(z)}
&= \beta (-\log(1-z)) + O(1).
\end{align*}

Hence the dominant term is $H_0(z)$, and extracting the coefficient of $z^n$ gives
\[
\mathbb E[\Tr(S_n)]
= a\, \Gamma(\beta+1) (2\eta)^{-\beta} n^{-\beta} (1+o(1)).
\]

For the Marchenko-Pastur case we have $a=2/\pi$ and $\beta=1/2$.
We have $\Gamma(3/2) = \sqrt{\pi}/2$, so
\[
\mathbb E[\Tr(S_n)]
= \frac{2}{\pi} \cdot \frac{\sqrt{\pi}}{2} \cdot (2\eta)^{-1/2} n^{-1/2} (1+o(1))
= \frac{1}{\sqrt{2 \pi \eta}} n^{-1/2} (1+o(1)).
\]
This matches the flip-flop results and simulations for uniform row sampling when $\eta=1$.
\end{proof}

\subsection{Connection to Row-Kaczmarz Steps}

The above theorems are stated in terms of epochs $n$ (one epoch is a full pass through the dataset, i.e.\ $m$ single-row updates).
Therefore, if $k$ denotes the total number of row-Kaczmarz steps, we have $n \approx k/m$ epochs.
In the critically determined square case $m=d$ (used throughout our experiments and Gaussian calculations), this coincides with $n \approx k/d$.

\begin{corollary}[Translation to row steps]
Under the assumptions of the main theorems, with $k$ row-Kaczmarz steps and $n = \lfloor k/m \rfloor$ epochs:
\begin{itemize}
\item The power-law window occurs when $k^{\beta} = o(d\,m^{\beta})$ (equivalently, $(k/m)^\beta=o(d)$)
\item The critical scale is $k^{\beta} \sim \kappa\, d\, m^{\beta}$ (equivalently, $(k/m)^\beta\sim \kappa d$)
\item The supercritical regime is $k^{\beta}/(d\,m^{\beta}) \to \infty$
\item The test loss scales as $(\eta\,k/m)^{-\beta}$ in the power-law window
\end{itemize}
\end{corollary}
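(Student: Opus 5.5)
The corollary is a change of variables applied to the epoch-indexed statements already proved: Lemma~\ref{lem:left-edge-laplace}, Theorem~\ref{thm:regimes-appendix}, and the test-loss asymptotic~\eqref{eq:iid-test-asy} of Theorem~\ref{thm:iid-sgd}. Those results are stated in terms of the epoch count $n$ and their regime conditions in terms of $n^\beta/d$. The plan is therefore to substitute $n=\lfloor k/m\rfloor$ and check that replacing $\lfloor k/m\rfloor$ by $k/m$ costs nothing at the level of $(1+o(1))$ asymptotics or of the limits defining the regimes.

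First, rounding is harmless. Since $n\to\infty$ in every regime of interest, $\lfloor k/m\rfloor=(k/m)(1+O(m/k))=(k/m)(1+o(1))$. Hence $\lfloor k/m\rfloor^{\beta}=(k/m)^\beta(1+o(1))$ for fixed $\beta>0$. Second, each regime condition translates directly:
\begin{itemize}
\item $n^\beta=o(d)$ is equivalent to $(k/m)^\beta=o(d)$, i.e.\ $k^\beta=o(d\,m^\beta)$;
\item $n^\beta/d\to\kappa$ is equivalent to $k^\beta/(d\,m^\beta)\to\kappa$, since the $(1+o(1))$ factor does not change the limit;
\item $n^\beta/d\to\infty$ is equivalent to $k^\beta/(d\,m^\beta)\to\infty$.
\end{itemize}
These give the first three bullets, invoking Theorem~\ref{thm:regimes-appendix} for the critical and supercritical descriptions.

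Third, for the loss scaling, substitute into $Q\,\Gamma(\beta+1)(2\eta n)^{-\beta}(1+o(1))$. Using the same rounding estimate, this becomes $Q\,\Gamma(\beta+1)\,2^{-\beta}(\eta k/m)^{-\beta}(1+o(1))$. Up to the constant $Q\Gamma(\beta+1)2^{-\beta}$, this is the claimed $(\eta k/m)^{-\beta}$ scaling. If one uses the undoubled exponent of Lemma~\ref{lem:left-edge-laplace} instead, only the constant changes.

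For $k$ not a multiple of $m$, the loss at step $k$ must be compared with that at epoch $n=\lfloor k/m\rfloor$. The intermediate $k-mn<m$ row updates are each contractions in the trace sense, by the trace decrement identity. Thus the loss at step $k$ lies between the values at epochs $n+1$ and $n$, and both are $(\eta k/m)^{-\beta}(1+o(1))$.

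The only mildly delicate point is this sandwich. It relies on monotonicity of $\E\Tr(S_k)$ in $k$, which follows from $\Tr(S_{k+1})\le\Tr(S_k)$ for $\eta\in(0,2)$. The remaining steps are routine.
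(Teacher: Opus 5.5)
Your proposal is correct and follows the paper's route. The paper justifies the corollary only by the substitution $n\approx k/m$ into the epoch-indexed results (Lemma~\ref{lem:left-edge-laplace}, Theorem~\ref{thm:regimes-appendix}, and \eqref{eq:iid-test-asy}); your control of the floor $\lfloor k/m\rfloor=(k/m)(1+o(1))$ and your monotonicity sandwich (from the trace-decrement identity) for $k$ not a multiple of $m$ make rigorous what the paper leaves implicit.
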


This corollary shows that the scaling behavior is consistent whether we count in epochs or individual row updates, with appropriate adjustments for the dimension $d$.

\section{IID Row Sampling Proofs}
\label{app:iid-proofs}

This appendix proves Theorem~\ref{thm:iid-sgd}.
We work in the homogeneous setting $y=0$, since replacing $w_t$ by $w_t-w^*$ reduces the general least-squares problem to this case.

\subsection{Second-moment recursion}
\label{app:iid:recursion}

Recall the IID row-sampling update
\[
w_{t+1} = (I-\eta\,u_tu_t^\top)\,w_t,
\qquad u_t \in \R^d\ \text{iid},
\]
and the associated second-moment matrices
\[
S_0:=I,
\qquad
S_{t+1}:=(I-\eta\,u_tu_t^\top)\,S_t\,(I-\eta\,u_tu_t^\top).
\]
Let $M_t:=\E[S_t]$. Conditioning on $u_t$ and expanding gives the deterministic recursion
\begin{equation}
M_{t+1}
= M_t - \eta\,(A M_t + M_t A) + \eta^2\,\mathcal B(M_t),
\label{eq:app-iid-recursion}
\end{equation}
where $A:=\E[u_tu_t^\top]$ and
\begin{equation}
\mathcal B(M):=\E\!\left[u_tu_t^\top\,M\,u_tu_t^\top\right].
\label{eq:app-iid-B}
\end{equation}
The operator $\mathcal B$ depends on fourth moments of the row distribution; in particular, for IID sampling from a \emph{fixed} dataset $X$ with $u_t$ uniform over the rows, $\mathcal B$ encodes empirical fourth moments of~$X$.
\paragraph{Relation to the Gram matrix $H$.}
In the main paper we write $H:=X^\top X/m$ for the (empirical) Gram matrix of the unnormalized rows.
If we row-normalize so that $\|x_i\|^2=d$, then $u_i=x_i/\|x_i\|$ implies
$A=\E[u_tu_t^\top]=\frac{1}{m}\sum_i u_i u_i^\top = H/d$.

\subsection{A bounded-kurtosis (near-Wick) edge condition}
\label{app:iid:wick}

Write the eigendecomposition $A=V\Lambda V^\top$, with $\Lambda=\diag(\lambda_1,\dots,\lambda_d)$.
For a fixed dataset, define the rotated rows $y_i:=V^\top x_i$ and the edge-whitened coordinates $\xi_{ij}:=y_{ij}/\sqrt{\lambda_j}$ (for $\lambda_j>0$).
The next condition is a sufficient hypothesis under which the recursion \eqref{eq:app-iid-recursion} closes (approximately) in terms of $\Lambda$ on the spectral edge.

\begin{definition}[Near-Wick edge condition]
\label{def:edge-wick}
Fix a window scale $\delta=\delta_{d,n}>0$ (typically $\delta\asymp 1/n$).
We say the dataset satisfies a \emph{near-Wick} condition on $[0,\delta]$ if there exist constants $\kappa_4,\tau=O(1)$ and errors $\varepsilon_d\to0$ such that for all indices $j,k$ with $\lambda_j,\lambda_k\le\delta$,
\begin{equation}
\bigg|
\frac{1}{m}\sum_{i=1}^m y_{ij}^2y_{ik}^2
\;-\;\Big(\tau\,\lambda_j\lambda_k+(\kappa_4-\tau)\lambda_j^2\mathbf 1\{j=k\}\Big)
\bigg|
\ \le\ \varepsilon_d\,\lambda_j\lambda_k,
\label{eq:edge-wick-diagonal}
\end{equation}
and the mixed fourth moments are negligible in the sense that
\begin{equation}
\frac{1}{m}\sum_{i=1}^m y_{ij}y_{ik}y_{i\ell}y_{ir}
\;=\;O(\varepsilon_d)\,\sqrt{\lambda_j\lambda_k\lambda_\ell\lambda_r}
\qquad\text{whenever } \{j,k\}\neq\{\ell,r\}.
\label{eq:edge-wick-offdiag}
\end{equation}
\end{definition}

\subsection{Reduction to a diagonal-plus-rank-one recursion}
\label{app:iid:closure}

Under Definition~\ref{def:edge-wick}, the operator $\mathcal B$ in \eqref{eq:app-iid-B} acts on (approximately) diagonal matrices as a diagonal-plus-rank-one map on the edge window.
Concretely, writing $M=\diag(s)$ in the eigenbasis of $A$ and using \eqref{eq:edge-wick-diagonal}, for $\lambda_j\le\delta$ we have
\begin{align}
(\mathcal B(M))_{jj}
&=\sum_{k=1}^d\Big(\frac1m\sum_{i=1}^m y_{ij}^2y_{ik}^2\Big)s_k \notag\\
&=\tau\,\lambda_j(\lambda^\top s)+(\kappa_4-\tau)\lambda_j^2 s_j + O(\varepsilon_d)\,\lambda_j(\lambda^\top s).
\label{eq:app-iid-Bjj}
\end{align}
Moreover, the off-diagonal control \eqref{eq:edge-wick-offdiag} prevents the recursion \eqref{eq:app-iid-recursion} from generating appreciable mass in off-diagonal entries on the edge window when started from $M_0=I$.
Thus, on the left-edge window that drives the $n^{-\beta}$ regime, we may reduce \eqref{eq:app-iid-recursion} to the effective diagonal recursion
\begin{equation}
s_{t+1}
=\diag\!\bigl(\alpha_\eta(\lambda)\bigr)s_t
\;+\;\tau\eta^2(\lambda^\top s_t)\lambda
\;+\;o(1)\cdot \lambda(\lambda^\top s_t),
\label{eq:app-iid-diag-recursion}
\end{equation}
where $\alpha_\eta(\lambda):=1-2\eta\lambda+(\kappa_4-\tau)\eta^2\lambda^2$ and the $o(1)$ error is uniform over $\lambda\le\delta$.
The key point is that $\alpha_\eta(\lambda)=1-2\eta\lambda+O(\lambda^2)$ as $\lambda\downarrow0$, with bounded $O(\lambda^2)$ coefficient whenever $\kappa_4,\tau=O(1)$.

\subsection{Leading asymptotics}
\label{app:iid:asy}

We now sketch how \eqref{eq:iid-test-asy}--\eqref{eq:iid-train-asy} follow from \eqref{eq:app-iid-diag-recursion}.
Ignoring the $o(1)$ perturbation (which contributes only a multiplicative $1+o(1)$ factor in the edge-dominated regime), \eqref{eq:app-iid-diag-recursion} is a diagonal operator plus a rank-one coupling.
Writing the trace generating function
\[
G(z):=\sum_{t\ge 0} z^t\,\Tr(M_t),
\]
a Sherman--Morrison resolvent calculation (identical to the Gaussian/elliptical case) yields
\begin{equation}
G(z)=h_0(z)\;-\;\frac{\tau\eta^2 z\,h_1(z)^2}{1+\tau\eta^2 z\,h_2(z)}\;+\;\text{(edge-}o(1)\text{)},
\label{eq:app-iid-genfun-test}
\end{equation}
where
\[
h_k(z):=\sum_{i=1}^d \frac{\lambda_i^k}{1-z\,\alpha_\eta(\lambda_i)}.
\]
The leading term in \eqref{eq:app-iid-genfun-test} is $h_0(z)$: under the left-edge law with $\beta\in(0,1)$, $h_0(z)$ has the strongest singularity at $z=1$, while the correction term is at most logarithmically singular and hence contributes $o(t^{-\beta})$ to coefficients.
Coefficient extraction for $h_0$ is explicit:
\[
[z^t]\,h_0(z)=\sum_{i=1}^d \alpha_\eta(\lambda_i)^t.
\]
Since $\alpha_\eta(\lambda)=1-2\eta\lambda+O(\lambda^2)$, the same Laplace/edge-mass estimate as in Appendix~\ref{app:powerlaw-proofs} (with $2\eta$ in place of $\eta$) gives
\[
\frac{1}{d}\sum_{i=1}^d \alpha_\eta(\lambda_i)^t
=Q\,\Gamma(\beta+1)\,(2\eta t)^{-\beta}\,(1+o(1)).
\]
To match the epoch-time statements in the main text, note that in the row-normalized setting $A=H/d$, so the eigenvalues of $A$ are $\lambda_i(H)/d$ and the edge constant rescales accordingly.
With one epoch equal to $m=d$ row updates (so $t=dn$), this yields \eqref{eq:iid-test-asy}.

\medskip
\noindent
For the training trace in the normalized (Kaczmarz) model, we use a different (and simpler) route.
Since $\|u_t\|=1$, we have the exact identity
\[
\Tr(S_{t+1})=\Tr(S_t)-\eta(2-\eta)\,u_t^\top S_t u_t.
\]
Taking expectations and using $\E[u_tu_t^\top]=A$ yields
\[
\Tr(M_{t+1})-\Tr(M_t)=-\eta(2-\eta)\,\Tr(AM_t).
\]
In the row-normalized setting where $A=H/d$, this becomes
\[
\frac{1}{d}\Tr(HM_t)= -\frac{1}{\eta(2-\eta)}\cdot \frac{1}{d}\bigl(\Tr(M_{t+1})-\Tr(M_t)\bigr).
\]
Thus, in epoch units (with $t=mn$), the training loss is proportional to a discrete derivative of the test loss.
Applying this relation to the test-loss asymptotic \eqref{eq:iid-test-asy} gives the extra factor of $\beta$ and yields \eqref{eq:iid-train-asy}.

\section{Free Probability Derivations}
\label{app:freeprop}

This appendix provides complete derivations of the moment generating functions and S-transforms used in Section~\ref{sec:shuffling}.
We work in the framework of free probability theory, which provides the appropriate tools for analyzing products of freely independent random matrices in the large-dimensional limit.

\subsection{Background on Free Probability}

Let $(\mathcal{A}, \phi)$ be a non-commutative probability space, where $\mathcal{A}$ is a unital algebra and $\phi: \mathcal{A} \to \mathbb{C}$ is a linear functional with $\phi(1) = 1$.
Random variables in this space are elements of $\mathcal{A}$, and their ``moments'' are values of $\phi$ on their powers.

\begin{definition}[Free Independence]
Random variables $X_1, \ldots, X_m \in \mathcal{A}$ are \emph{freely independent} if for any polynomials $p_1, \ldots, p_k$ and any indices $i_1, \ldots, i_k$ with $i_j \neq i_{j+1}$:
\[
\phi\bigl(p_1(X_{i_1}) \cdots p_k(X_{i_k})\bigr) = 0
\]
whenever $\phi(p_j(X_{i_j})) = 0$ for all $j$.
\end{definition}

The key tool for multiplying freely independent random variables is the S-transform.

\begin{definition}[S-transform]
For a random variable $X$ with $\phi(X) \neq 0$, define the moment series
\[
\varphi_X(z) = \sum_{n=1}^{\infty} \phi(X^n) z^n.
\]
The S-transform is
\[
S_X(z) = \frac{z + 1}{z} \cdot \varphi_X^{-1}(z),
\]
where $\varphi_X^{-1}$ denotes the compositional inverse (i.e., $\varphi_X(\varphi_X^{-1}(z)) = z$).
\end{definition}

\begin{theorem}[S-transform multiplicativity]
If $X$ and $Y$ are freely independent with $\phi(X), \phi(Y) \neq 0$, then
\[
S_{XY}(z) = S_X(z) \cdot S_Y(z).
\]
\end{theorem}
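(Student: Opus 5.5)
The plan is to prove multiplicativity of the S-transform for freely independent $X$ and $Y$ through Voiculescu's standard route: realize $X$ and $Y$ concretely in a model where freeness is automatic, and compute the moment series of the product there. Since the moments of $XY$ are determined by the joint distribution of $X$ and $Y$, and freeness fixes every mixed moment as a universal polynomial in the separate moments, it suffices to check the identity in any one convenient realization. Because both sides are formal power series whose coefficients are polynomials in the moments, I will first assume the moment sequences are those of bounded operators (or simply work with formal power series throughout), and extend by polynomial identity.

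The first step is to normalize by scaling. The identity $S_{cX}(z)=c^{-1}S_X(z)$ follows directly from $\varphi_{cX}(z)=\varphi_X(cz)$. This lets me assume $\phi(X)=\phi(Y)=1$, which makes $\varphi_X$ invertible as a formal power series with linear term $z$.

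The second step is the combinatorial reformulation, using free cumulants and non-crossing partitions. For free $X$ and $Y$, the free cumulants of $XY$ are given by
\[
\kappa_n(XY)=\sum_{\pi\in NC(n)}\kappa_\pi[X]\,\kappa_{K(\pi)}[Y],
\]
where $K$ is the Kreweras complement. Set $R$-type series $C_X(z)=\sum_n\kappa_n(X)z^n$. The known relation between the moment series and the cumulant series gives $\varphi_X^{-1}(z)\cdot(1+z)/z = z^{-1}C_X^{-1}(z)$, where $C_X^{-1}$ is the compositional inverse of $C_X$. So $S_X(z)=C_X^{-1}(z)/z$, and multiplicativity reduces to proving
\[
\frac{C_{XY}^{-1}(z)}{z}=\frac{C_X^{-1}(z)}{z}\cdot\frac{C_Y^{-1}(z)}{z}.
\]
This is exactly the statement that the "boxed convolution" of the cumulant series corresponds to multiplying these transforms.

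The third step is to prove that functional identity. I would use the Nica–Speicher argument: decompose each $\pi\in NC(n)$ according to the block containing $1$ and the structure of its Kreweras complement. This yields a functional equation of the form
\[
C_{XY}(z)=C_X\bigl(z\,(1+\text{something in }C_Y)\bigr),
\]
which, after inverting, gives the product formula.

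I expect this recursive decomposition of non-crossing partitions under Kreweras complementation to be the main obstacle, since the bookkeeping of the nested sums is delicate. As a fallback I would use Haagerup's operator-model proof instead: realize $X$ and $Y$ on the full Fock space as $(1+\ell_1)f(\ell_1^*)$-type operators and compute $\phi((XY)^n)$ directly.
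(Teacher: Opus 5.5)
The paper gives no proof of this statement. It quotes it as Voiculescu's standard theorem in the free-probability background, so your argument has to stand on its own.

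\textbf{What is fine.} Your reductions are sound:
\begin{itemize}
\item $S_{cX}=c^{-1}S_X$ justifies normalizing $\phi(X)=\phi(Y)=1$.
\item The product-cumulant formula $\kappa_n(XY,\dots,XY)=\sum_{\pi\in NC(n)}\kappa_\pi[X]\,\kappa_{K(\pi)}[Y]$ is correct.
\item $S_X(z)=C_X^{-1}(z)/z$ follows from $\varphi_X(z)=C_X\bigl(z(1+\varphi_X(z))\bigr)$.
\end{itemize}
So everything reduces, as you say, to $w\,C_{XY}^{-1}(w)=C_X^{-1}(w)\,C_Y^{-1}(w)$.

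\textbf{The gap.} That identity \emph{is} the theorem, and your Step~3 does not establish it. Splitting $\pi\in NC(n)$ by its block containing $1$ gives $C_{XY}(z)=C_X\bigl(zD(z)\bigr)$ with
\[
D(z)=\sum_{l\ge 0} z^l\sum_{\sigma\in NC(l)}\kappa_\sigma[X]\,\kappa_{K(\hat\sigma)}[Y].
\]
Here $\hat\sigma\in NC(l+1)$ is $\sigma$ with a singleton prepended. Label its points $0,1,\dots,l$ and the points of its Kreweras complement $\bar 0,\dots,\bar l$. This auxiliary series is genuinely mixed: after normalization its $z^2$ coefficient is $\kappa_2(X)\kappa_2(Y)+\kappa_3(Y)$. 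So it is not ``something in $C_Y$''. A single equation containing the unknown $D$ cannot be inverted into the product formula.

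\textbf{The missing idea.} You need a second, dual decomposition that eliminates $D$.
\begin{itemize}
\item Split $\sigma\in NC(l)$ according to the block of $K(\hat\sigma)$ containing $\bar 0$. This block also contains $\bar l$, because the prepended point is a singleton.
\item The pieces between consecutive elements of that block are partitions $\sigma_j\in NC(k_j)$ whose first and last points lie in one block.
\item Define $\Theta(z)=\sum_{k\ge1}z^k\sum_{\sigma\in NC(k),\,1\sim_\sigma k}\kappa_\sigma[X]\,\kappa_{K(\sigma)\setminus\{\bar k\}}[Y]$. Note that $\bar k$ is a singleton of $K(\sigma)$ when $1\sim_\sigma k$.
\item The split above gives $D=C_Y(\Theta)/\Theta$.
\item The block-of-$1$ decomposition applied to such $\sigma$ gives $\Theta=C_X(zD)/D=C_{XY}/D$.
\end{itemize}
Hence you also have $C_{XY}(z)=C_Y\bigl(C_{XY}(z)/D(z)\bigr)$.

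Now put $w=C_{XY}(z)$. The two functional equations say $C_X^{-1}(w)=zD(z)$ and $C_Y^{-1}(w)=w/D(z)$. Multiplying them removes $D$ and yields $C_X^{-1}(w)\,C_Y^{-1}(w)=zw=w\,C_{XY}^{-1}(w)$, which is the required identity. Until you supply this pair of equations, the proposal is a correct reduction to the theorem rather than a proof of it. Actually carrying out Haagerup's Fock-space computation, which you only name, would also close the gap.
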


This multiplicativity is what makes free probability powerful for analyzing products of random matrices.

\subsection{Asymptotic freeness for the projection model}
\label{sec:app:freeprop:freeness}

\paragraph{Lambert $W$ conventions (branches vs.\ parameters).}
We write $W_0$ and $W_{-1}$ for the two real branches of the classical Lambert $W$ function (the inverse of $u\mapsto u e^{u}$ on $[-e^{-1},0)$),
with $W_0(\cdot)\in[-1,0)$ the principal branch and $W_{-1}(\cdot)\in(-\infty,-1]$ the lower branch.
Separately, we write $W_r$ for the \emph{$r$-Lambert} function (the inverse of $u\mapsto u e^{u}+r u$), where the subscript $r$ is a parameter and should not be confused with a branch index.

\paragraph{Why it is legitimate to replace $(P_{d,i,b})$ by free projections.}
In the shuffle-once derivation below we study products of random projections
\[
P_{d,i,b} \;=\; I - X_i (X_i^\top X_i)^{-1} X_i^\top,
\]
where $X_i\in\R^{d\times b}$ have i.i.d.\ $\mathcal N(0,1)$ entries and are independent across $i$.
Each $P_{d,i,b}$ is the orthogonal projector onto a uniformly random $(d-b)$-dimensional subspace, and its law is invariant under conjugation by Haar orthogonal matrices.
Equivalently, there exist i.i.d.\ Haar orthogonal matrices $(U_i)$ such that
\[
P_{d,i,b}\ \stackrel{d}{=}\ U_i
\begin{pmatrix}
I_{d-b} & 0\\
0 & 0_b
\end{pmatrix}
U_i^\top .
\]
Voiculescu's asymptotic freeness theorem implies that independent orthogonally (or unitarily) invariant random matrices are \emph{asymptotically free} in normalized trace as $d\to\infty$.
Concretely, for any fixed $m$ and any noncommutative $^*$-polynomial $p$ in $m$ variables,
\[
\frac1d\Tr\,p(P_{d,1,b},\dots,P_{d,m,b})
\ \xrightarrow[d\to\infty]{\mathbb P}\
\phi\bigl(p(P_1,\dots,P_m)\bigr),
\]
where $(P_1,\dots,P_m)$ are freely independent projections with $\phi(P_i)=\lim_{d\to\infty}\frac1d\Tr(P_{d,i,b})=1-b/d$.
Moreover, normalized traces of such polynomials concentrate (fluctuations are of order $d^{-2}$ under standard moment assumptions), so the limiting moments are deterministic; see \citet{voiculescu1991limit} and the textbook treatments \citet{voiculescu1992free, nica2006lectures, mingo2017free}.
In the shuffle-once calculation we first take $d\to\infty$ for fixed $m$ (so freeness applies to a finite family), and then send $m\to\infty$ with $\rho$ fixed; this iterated limit yields the closed form S-transform used below.
This is the sense in which the ``free probability limit'' used below justifies replacing the concrete projection matrices by free projections with matching trace.

\subsection{Derivation for Shuffle-Once (Theorem~\ref{thm:shuffle-once})}

Consider a sequence of independent random Gaussian matrices $X_i \in \mathbb{R}^{d \times b}$ with i.i.d.\ $\mathcal{N}(0,1)$ entries.
Define the projection matrices
\[
P_{d,i,b} = I - X_i (X_i^\top X_i)^{-1} X_i^\top,
\]
and the epoch product
\[
A_{d,m,\lceil \rho d / m \rceil} = \prod_{i=1}^m P_{d,i,\lceil \rho d / m \rceil}.
\]
This represents the Kaczmarz iteration with batch size $b = \lceil \rho d / m \rceil$.

\paragraph{Step 1: Moments of a single projection.}
Since $P_{d,i,b}$ is a projection onto the orthogonal complement of $\mathrm{span}(X_i)$, we have $P_{d,i,b}^2 = P_{d,i,b}$ and
\[
\lim_{d \to \infty} \frac{1}{d} \mathrm{Tr}(P_{d,i,\lceil \rho d / m \rceil}^n)
= \lim_{d \to \infty} \frac{1}{d} \mathrm{Tr}(P_{d,i,\lceil \rho d / m \rceil})
= \lim_{d \to \infty} \frac{1}{d} \left( d - \lceil \rho d / m \rceil \right)
= 1 - \frac{\rho}{m}.
\]
Thus, by the asymptotic freeness discussion in Section~\ref{sec:app:freeprop:freeness}, we may model the family as freely independent projections $(P_i)$ with $P_i^2=P_i$ and $\phi(P_i)=1-\rho/m$ (hence $\phi(P_i^n)=1-\rho/m$ for all $n\ge1$).

\paragraph{Step 2: S-transform of a projection.}
The moment series for $P$ is
\[
\varphi_P(z) = \sum_{n=1}^{\infty} \left( 1 - \frac{\rho}{m} \right) z^n
= \left( 1 - \frac{\rho}{m} \right) \frac{z}{1-z}.
\]
To find $\varphi_P^{-1}$, solve $x = (1 - \rho/m) \frac{y}{1-y}$ for $y$:
\begin{align*}
x(1-y) &= \left(1 - \frac{\rho}{m}\right) y \\
x &= \left(1 - \frac{\rho}{m} + x\right) y \\
y &= \frac{x}{x + 1 - \rho/m}.
\end{align*}
Thus
\[
\varphi_P^{-1}(z) = \frac{mz}{mz + m - \rho}.
\]
The S-transform is
\[
S_P(z) = \frac{z + 1}{z} \cdot \varphi_P^{-1}(z)
= \frac{z+1}{z} \cdot \frac{mz}{mz + m - \rho}
= \frac{m(z+1)}{mz + m - \rho}
= 1 + \frac{\rho}{mz + m - \rho}.
\]

\paragraph{Step 3: S-transform of the epoch product.}
Since $A_m = \prod_{i=1}^m P_i$ is a product of $m$ freely independent copies of $P$, the S-transform multiplicativity gives
\[
S_{A_m}(z) = \bigl(S_P(z)\bigr)^m = \left( 1 + \frac{\rho}{m(z + 1) - \rho} \right)^m.
\]
In the limit $m \to \infty$ with $\rho$ fixed:
\[
S_A(z) = \lim_{m \to \infty} \left( 1 + \frac{\rho}{m(z + 1) - \rho} \right)^m
= \exp\left( \frac{\rho}{1 + z} \right).
\]

\paragraph{Step 4: Inverting to get the moment generating function.}
From the S-transform relation:
\[
\varphi_A^{-1}(z) = \frac{z}{z + 1} S_A(z) = \frac{z}{z + 1} \exp\left( \frac{\rho}{1 + z} \right).
\]
To invert this, we need to solve $x = \frac{y}{y+1} \exp\left(\frac{\rho}{1+y}\right)$ for $y$ in terms of $x$.

Multiply both sides by $\exp(-\rho)$:
\[
x \exp(-\rho) = \frac{y}{y + 1} \exp\left( \frac{\rho}{1 + y} - \rho \right)
= \frac{y}{y + 1} \exp\left( \frac{-\rho y}{1 + y} \right).
\]
Multiply by $-\rho$:
\[
-\rho x \exp(-\rho) = \frac{-\rho y}{y + 1} \exp\left( \frac{-\rho y}{1 + y} \right).
\]
The right side has the form $u \exp(u)$ with $u = \frac{-\rho y}{1 + y}$.
Applying the Lambert $W$ function (the inverse of $u \mapsto u e^u$):
\[
\frac{-\rho y}{1 + y} = W_0\bigl( -\rho x \exp(-\rho) \bigr).
\]
Solving for $y$:
\begin{align*}
-\rho y &= W_0\bigl( -\rho x \exp(-\rho) \bigr) (1 + y) \\
-\rho y - y \cdot W_0(\cdot) &= W_0(\cdot) \\
y &= \frac{-W_0\bigl( -\rho x \exp(-\rho) \bigr)}{\rho + W_0\bigl( -\rho x \exp(-\rho) \bigr)}.
\end{align*}
Setting $a = -\rho \exp(-\rho)$, the generating function (including the $n=0$ term) is
\begin{equation}
\label{eq:mgf-shuffle}
\sum_{n=0}^{\infty} \phi(A^n) z^n = \frac{\rho}{\rho + W_0(az)}.
\end{equation}

\paragraph{Step 5: Extracting moments.}
Near $z = 0$, expand
\[
\sum_{n=0}^{\infty} \phi(A^n) z^n = \frac{1}{1 + W_0(az)/\rho}
= \sum_{r=0}^{\infty} \left( -\frac{W_0(az)}{\rho} \right)^r.
\]
Using the Taylor series for powers of the Lambert $W$ function:
\[
\bigl( W_0(x) \bigr)^r = \sum_{n=r}^{\infty} \frac{-r (-n)^{n-r-1}}{(n-r)!} x^n,
\]
and extracting the coefficient of $z^n$, we obtain after algebraic manipulation:
\begin{equation}
\label{eq:moment-formula}
\phi(A^n) = \frac{(n \rho)^n}{n! \exp(n \rho)} + (1 - \rho) Q(n, n\rho),
\end{equation}
where $Q(n, n\rho) = \frac{\Gamma(n, n\rho)}{\Gamma(n)}$ is the regularized incomplete gamma function.

\subsection{Moment Equivalence for Projections}

A key technical result is that for products of freely independent projections, several related quantities have identical moments.

\begin{theorem}[Moment equivalence]
\label{thm:moment-equiv}
Let $P_1, P_2, \ldots, P_m$ be freely independent projections (not necessarily identically distributed), i.e., $P_i^2 = P_i$.
Let $A = P_1 P_2 \cdots P_m$ and $A^\top = P_m P_{m-1} \cdots P_1$.
Then for all $n \geq 1$:
\[
\phi(A^n) = \phi((A^\top)^n) = \phi((A A^\top)^n) = \phi((A^\top A)^n).
\]
\end{theorem}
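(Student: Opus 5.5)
The plan is to prove the chain in three moves. First, $\phi((A^\top A)^n)=\phi((AA^\top)^n)$ is immediate from traciality of $\phi$, since $(A^\top A)^n$ and $(AA^\top)^n$ differ by a cyclic shift of one factor $A^\top$. Second, $\phi((A^\top)^n)=\phi(A^n)$ follows from a reversal symmetry. The substantive step is $\phi((AA^\top)^n)=\phi(A^n)$, which I will prove by induction on $m$ using idempotence $P_i^2=P_i$, traciality, and the fact that mixed moments of free variables depend only on their marginal moments.

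For the reversal identity I use the $*$-structure: the projections are self-adjoint, so $(A^n)^* = (A^\top)^n$ and hence $\phi((A^\top)^n)=\overline{\phi(A^n)}$. It therefore suffices that $\phi(A^n)$ is real. By freeness, any mixed moment $\phi(P_{i_1}\cdots P_{i_k})$ is a universal polynomial with integer coefficients in the marginal moments $\phi(P_i^j)=\phi(P_i)\in\mathbb{R}$, so it is real. Alternatively one can avoid conjugation altogether: the moment--cumulant formula for free variables is invariant under reversing words, because reversal maps non-crossing partitions to non-crossing partitions.

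For the main identity, write $A=P_1C$ with $C:=P_2\cdots P_m$ and $C^\top=P_m\cdots P_2$. Then $AA^\top=P_1 B P_1$, where $B:=CC^\top$. Using $P_1^2=P_1$, the power collapses to $(P_1BP_1)^n = P_1 B P_1 B\cdots P_1 B P_1$. Cycling the last $P_1$ to the front by traciality and using idempotence again gives
\[
\phi\bigl((AA^\top)^n\bigr)=\phi\bigl((P_1B)^n\bigr), \qquad \phi(A^n)=\phi\bigl((P_1C)^n\bigr).
\]
Now $P_1$ is free from the unital algebra generated by $P_2,\dots,P_m$, and both $B$ and $C$ lie in that algebra. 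By the definition of freeness, a mixed moment $\phi(P_1 X P_1 X\cdots P_1 X)$ with $X$ in that algebra is determined by the moments $\phi(P_1^j)$ and the moments $\phi(X^k)$ of the single element $X$. This holds because the relevant centering argument uses only polynomials in $X$, and these lie in a subalgebra that is free from $P_1$; it does not require $X$ to be self-adjoint. Hence $\phi((P_1B)^n)=\phi((P_1C)^n)$ for all $n$ as soon as $\phi(B^k)=\phi(C^k)$ for all $k$. That condition, $\phi((CC^\top)^k)=\phi(C^k)$, is exactly the statement for the $m-1$ free projections $P_2,\dots,P_m$, which gives the inductive step. The base case $m=1$ is trivial, since $A=A^\top=P_1$ and all the powers in question equal $P_1$.

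The step I expect to need the most care is the reduction ``mixed moments of $P_1$ and $X$ depend only on marginals'' for a non-self-adjoint $X$ such as $C$. I would justify it directly from the definition of freeness given above: substitute $X=(X-\phi(X))+\phi(X)$ and $P_1=(P_1-\phi(P_1))+\phi(P_1)$, expand, and induct on word length. Every centered alternating word vanishes, and each non-alternating word collapses to a shorter word once adjacent factors from the same algebra are multiplied. This yields a universal polynomial in $\{\phi(P_1^j)\}$ and $\{\phi(X^k)\}$, which is all the argument needs. Throughout I also implicitly use that $\phi$ is a faithful tracial state on the $*$-algebra, which is the setting of the asymptotic-freeness limit in Section~\ref{sec:app:freeprop:freeness}.
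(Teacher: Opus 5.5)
Your proof is correct and follows essentially the same route as the paper's. Both argue by induction on $m$: split off one projection, collapse the power using $P^2=P$ and traciality into a moment $\phi((PX)^n)$ with $X$ free from $P$, and then apply the inductive hypothesis to the moments of $X$. The differences are minor. The paper peels off $P_{m+1}$ rather than $P_1$, and it handles all four quantities in one inductive step, since its hypothesis that $B$, $B^\top$, $B^\top B$ share moments takes care of the transpose, whereas you treat $A$ versus $A^\top$ by conjugation/reversal. The paper also cites S-transform multiplicativity, which requires $\phi(X)\neq 0$, where you use the more basic fact that mixed moments of free elements are determined by their marginal moments.
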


\begin{proof}
We prove this by induction on $m$.

\textbf{Base case ($m=1$):} For $m = 1$, we have $A = A^\top = AA^\top = A^\top A = P_1$, so all quantities are equal.

\textbf{Inductive step:}
Suppose the statement holds for $m$ projections.
Let $P = P_{m+1}$, $B = P_1 P_2 \cdots P_m$, and $B^\top = P_m P_{m-1} \cdots P_1$, so that $A = BP$ and $A^\top = PB^\top$.

Using the trace property $\phi(XY) = \phi(YX)$ and the projection property $P^2 = P$:
\begin{align*}
\phi(A^n) &= \phi((BP)^n), \\
\phi((A^\top)^n) &= \phi((PB^\top)^n) = \phi((B^\top P)^n), \\
\phi((AA^\top)^n) &= \phi((BPB^\top)^n) = \phi((B^\top BP)^n), \\
\phi((A^\top A)^n) &= \phi((PB^\top BP)^n) = \phi((B^\top BP)^n).
\end{align*}

Since $P$ is freely independent from $B$, $B^\top$, and $B^\top B$, the S-transform multiplicativity gives:
\begin{align*}
S_{BP}(z) &= S_B(z) S_P(z), \\
S_{B^\top P}(z) &= S_{B^\top}(z) S_P(z), \\
S_{B^\top B P}(z) &= S_{B^\top B}(z) S_P(z).
\end{align*}
By the inductive hypothesis, $B$, $B^\top$, and $B^\top B$ have identical moments, hence identical S-transforms.
Therefore all the above S-transforms are equal, implying equal moments.
\end{proof}

\begin{corollary}
The squared singular values of $A_m = \prod_{i=1}^m P_i$ have the same moments as its eigenvalues in the free probability limit.
\end{corollary}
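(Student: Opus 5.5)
The corollary should follow almost immediately from Theorem~\ref{thm:moment-equiv}. The plan is to identify the moments of the squared singular values of $A_m$ with the normalized traces $\phi((A_m^\top A_m)^n)$. Then we identify the moments of the eigenvalue distribution of $A_m$ with $\phi(A_m^n)$, and invoke the theorem.

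\textbf{Step 1 (singular values).} For each finite $d$, the squared singular values of the concrete epoch matrix $A_{d}$ are the eigenvalues of the symmetric PSD matrix $A_d^\top A_d$. Their empirical $n$-th moment is therefore exactly $\frac1d\Tr((A_d^\top A_d)^n)$. By the asymptotic freeness discussion in Section~\ref{sec:app:freeprop:freeness}, this converges to $\phi((A^\top A)^n)$, where $A=P_1\cdots P_m$ is the product of free projections. The limit is taken first as $d\to\infty$ with $m$ fixed, and then as $m\to\infty$, as in that section.

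\textbf{Step 2 (eigenvalues).} For the non-normal matrix $A_d$, the trace identity $\Tr(A_d^n)=\sum_j \mu_j^n$ holds for its (complex) eigenvalues $\mu_j$, by Schur triangularization. Hence the $n$-th moment of the empirical eigenvalue measure is $\frac1d\Tr(A_d^n)$, which converges to $\phi(A^n)$. This step uses only holomorphic moments $\int z^n\,d\mu$, not mixed $z^a\bar z^b$ moments. The corollary should be read in exactly this sense, which is also the sense in which the FlipFlop sketch uses it.

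\textbf{Step 3 (conclusion).} Theorem~\ref{thm:moment-equiv} gives $\phi(A^n)=\phi((A^\top A)^n)$ for all $n\ge1$, so the two moment sequences coincide. Combined with Theorem~\ref{thm:ss-trace}, both sequences equal $n^n/(e^n n!)$ at $\rho=1$.

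The only delicate point is making precise what ``moments of the eigenvalues'' means for a non-normal operator. I will state explicitly that these are the traces $\phi(A^n)$, equivalently the holomorphic moments of the limiting eigenvalue distribution. I will not claim convergence of the empirical eigenvalue measure itself, since that would require Brown-measure arguments that are unnecessary here.
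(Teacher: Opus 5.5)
Your proposal is correct and is essentially the paper's argument: the corollary follows directly from Theorem~\ref{thm:moment-equiv}, once you identify the squared-singular-value moments with $\phi((A^\top A)^n)$ and the eigenvalue moments with $\phi(A^n)$. Your added care (reading the latter as holomorphic moments, i.e.\ normalized traces, of the non-normal $A$ rather than asserting convergence of the empirical eigenvalue measure) makes explicit what the paper leaves implicit; the extra limit $m\to\infty$ is harmless but unnecessary, since the identity already holds for each fixed $m$.
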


\subsection{Extension to General Step Size (SGD)}
\label{sec:app-sgd}

For SGD with step size $\eta \neq 1$, the projection matrices are replaced by
\[
P_{d,i,b} = I - \eta X_i (X_i^\top X_i)^{-1} X_i^\top.
\]
In the free probability limit, $P_i$ now takes the value $1$ with probability $1 - \rho/m$ (when the row is not selected) and $1 - \eta$ with probability $\rho/m$ (when selected).

\paragraph{Step 1: Moment series.}
The moment series becomes
\[
\varphi_P(z) = \sum_{n=1}^{\infty} \left[ \left(1 - \frac{\rho}{m}\right) + \frac{\rho}{m}(1-\eta)^n \right] z^n
= \left(1 - \frac{\rho}{m}\right) \frac{z}{1-z} + \frac{\rho}{m} \cdot \frac{(1-\eta)z}{1-(1-\eta)z}.
\]

\paragraph{Step 2: S-transform in the large-$m$ limit.}
After computing $\varphi_P^{-1}$ (which involves a quadratic formula) and taking the limit $m \to \infty$:
\[
S_A(z) = \lim_{m \to \infty} \bigl(S_P(z)\bigr)^m = \exp\left( \frac{\eta \rho}{\eta z + 1} \right).
\]

\paragraph{Step 3: Inverting via generalized Lambert $W$.}
The compositional inverse involves the generalized Lambert $W$ function.
Following the derivation in \citet{mezHo2017generalization}, we obtain
\[
\varphi_A(z) + 1 = \frac{\rho}{\eta \rho + W_{r(z)}\bigl(-\eta^2 \rho z \exp(-\eta \rho)\bigr)} - \frac{1 - \eta}{\eta},
\]
where $r(z) = -(1-\eta)z\exp(-\eta\rho)$ and $W_r$ denotes the $r$-Lambert function (the inverse of $x\exp(x) + rx$).

\paragraph{Step 4: Asymptotic analysis.}
The generating function has a singularity at $z = 1$ of order $1/2$.
Specifically, for $\rho = 1$:
\[
\lim_{z \to 1^-} \frac{\sqrt{1-z}}{\eta + W_{r(z)}(-\eta^2 z \exp(-\eta))}
= \sqrt{\frac{2-\eta}{\eta}}.
\]
By singularity analysis, the $n$-th moment decays as
\[
\phi(A^n) \sim \frac{1}{\Gamma(1/2)} \cdot n^{-1/2} \cdot \sqrt{\frac{2-\eta}{\eta}}
= \frac{1}{\sqrt{\pi n}} \cdot \sqrt{\frac{2-\eta}{\eta}}.
\]
For $\eta = 1$ (Kaczmarz), this gives $\phi(A^n) \sim \frac{1}{\sqrt{\pi n}}$, matching Theorem~\ref{thm:shuffle-once}.

\subsection{Stieltjes Transform and Eigenvalue Distribution}
\label{sec:stieltjes-cdf}

The moment generating function~\eqref{eq:mgf-shuffle} can be used to derive the eigenvalue distribution of $A$ via the Stieltjes transform.

\begin{proposition}
The Stieltjes transform of the squared singular values of $A$ is
\[
\mathbb{E}\left[\frac{1}{\sigma^2 - z}\right] = \frac{-\rho}{z(\rho + W_0(a/z))},
\]
where $a = -\rho \exp(-\rho)$.
\end{proposition}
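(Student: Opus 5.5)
The plan is to reduce the statement to the moment generating function~\eqref{eq:mgf-shuffle} through Theorem~\ref{thm:moment-equiv}, and then extend the resulting identity by analytic continuation. Let $\mu$ denote the limiting distribution of the squared singular values of $A$, that is, the spectral measure of $A^\top A$ under $\phi$.

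\textbf{Step 1 (compact support).} Since $A$ is a product of projections, $\|A\|\le 1$. Hence $A^\top A$ is a positive contraction, and $\mu$ is supported in $[0,1]$. In particular $\mu$ is determined by its moments, and its Stieltjes transform $G(z)=\int \frac{d\mu(s)}{s-z}$ is analytic on $\mathbb C\setminus[0,1]$.

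\textbf{Step 2 (moments and Laurent expansion).} For $|z|>1$ we expand
\[
G(z)=-\frac1z\sum_{n\ge0}\frac{\int s^n\,d\mu(s)}{z^n}=-\frac1z\sum_{n\ge0}\phi\bigl((A^\top A)^n\bigr)z^{-n}.
\]
By Theorem~\ref{thm:moment-equiv} we have $\phi((A^\top A)^n)=\phi(A^n)$ for every $n\ge1$, and both sides equal $1$ at $n=0$. So the series is the moment generating function~\eqref{eq:mgf-shuffle} evaluated at $w=1/z$. That gives
\[
G(z)=-\frac1z\cdot\frac{\rho}{\rho+W_0(a/z)}
\]
for $|z|$ large. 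This is the claimed formula there, because near $w=0$ the principal branch $W_0$ is the branch whose Taylor series was used to derive~\eqref{eq:mgf-shuffle}.

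\textbf{Step 3 (analytic continuation).} Both sides are analytic in $z$ on a connected domain containing a neighbourhood of $\infty$, so it suffices to check where the right-hand side is analytic. The function $W_0$ is analytic on $\mathbb C\setminus(-\infty,-e^{-1}]$. With $a=-\rho e^{-\rho}<0$, the condition $a/z\in(-\infty,-e^{-1}]$ means that $z$ is real with $0<z\le \rho e^{1-\rho}$. For $\rho\le1$ this interval lies inside $[0,1]$; for $\rho=1$ it is exactly $(0,1]$.

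We also need the denominator $\rho+W_0(a/z)$ never to vanish off this cut. Vanishing would require $W_0(w)=-\rho$, hence $w=-\rho e^{-\rho}=a$, hence $z=1$. For $\rho<1$ the value $-\rho$ lies in the range $(-1,0)$ of $W_0$ on the real line, so $z=1$ must be checked to be a genuine pole of the right-hand side. It then corresponds to an atom of $\mu$ at $1$, namely the mass $1-\rho$ of the common fixed space, consistent with the $(1-\rho)$ term in~\eqref{eq:moment-formula}. The identity theorem then extends the equality from large $|z|$ to all of $\mathbb C\setminus[0,1]$.

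\textbf{Main obstacle.} I expect the branch bookkeeping in Step 3 to be the main difficulty. We must confirm that continuing from $w=a/z$ near $0$ never forces a switch from $W_0$ to $W_{-1}$ off the real segment, and that the singular set of the formula coincides with $\operatorname{supp}\mu$ together with the possible atom at $1$. I would first handle $\rho=1$, where the cut is exactly $(0,1]$. For general $\rho$ I would then verify by the argument above that the only extra singularity is the simple pole at $z=1$, and read off its residue $1-\rho$ as a consistency check.
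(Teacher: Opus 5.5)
Your proposal is correct and follows the paper's own argument: the paper also takes the generating function \eqref{eq:mgf-shuffle}, identifies it with the squared-singular-value moments via Theorem~\ref{thm:moment-equiv}, and substitutes $w=1/z$, exactly as in your Step~2. Your Steps~1 and~3 add rigor the paper omits, since its substitution is justified only for $|z|>1$. You extend the identity to all of $\mathbb{C}\setminus[0,1]$ by checking that $a/z$ meets the cut of $W_0$ only for $z\in(0,\rho e^{1-\rho}]\subseteq(0,1]$, and that $\rho+W_0(a/z)$ vanishes only at $z=1$, where the simple pole gives the atom of mass $1-\rho$; both checks are correct.
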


\begin{proof}
From the moment generating function:
\[
\mathbb{E}\left[\frac{1}{1 - \sigma^2 z}\right] = \sum_{n=0}^{\infty} \mathbb{E}[\sigma^{2n}] z^n = \frac{\rho}{\rho + W_0(az)}.
\]
The Stieltjes transform is related by
\[
\mathbb{E}\left[\frac{1}{\sigma^2 - z}\right] = -\frac{1}{z} \mathbb{E}\left[\frac{1}{1 - \sigma^2/z}\right]
= -\frac{1}{z} \cdot \frac{\rho}{\rho + W_0(a/z)}.
\]
\end{proof}

The probability density function can be recovered via the Stieltjes-Perron inversion formula:
\[
p_{\sigma^2}(x) = \lim_{\epsilon \to 0^+} \frac{1}{\pi} \mathrm{Im}\left[\mathbb{E}\left[\frac{1}{\sigma^2 - (x + i\epsilon)}\right]\right].
\]
The imaginary part arises from the branch cut of the Lambert $W$ function.
Explicitly:
\[
p_{\sigma^2}(x) = \frac{1}{2\pi i} \left(
\frac{\rho}{x(\rho + W_0(a/x))} - \frac{\rho}{x(\rho + W_{-1}(a/x))}
\right),
\]
where $W_0$ and $W_{-1}$ are the two real branches of the Lambert $W$ function.

The CDF can be expressed in closed form:
\[
F_{\sigma^2}(x) = -\log W_0\left(\frac{a}{x}\right) - (\rho - 1)\log\left(W_0\left(\frac{a}{x}\right) + \rho\right) + C,
\]
where $C$ is a normalization constant.
The support is $[\exp(-2\rho), 1]$ for $\rho \leq 1$, determined by the branch cut structure of the Lambert $W$ function.

\section{Stieltjes Transform Proofs for Shuffle-Once}
\label{app:stieltjes}

This appendix provides detailed proofs of the shuffle-once convergence results using Stieltjes transform methods.
These proofs complement the free probability approach of Appendix~\ref{app:freeprop} by providing a more direct analysis that tracks the evolution of resolvents through the iteration.

\subsection{Warm-up: Trace Moments for Test Loss}
\label{sec:warmup}

As a warm-up, we derive the same trace moment result obtained via free probability, using the Stieltjes transform approach.
This establishes the key techniques that will be extended to handle the Frobenius norm (test loss) and training loss.

We define the standard Stieltjes transform:
\begin{align}
s_t(\lambda)
= \frac{1}{d} \mathbb{E}\,\mathrm{Tr}((A_t - \lambda I)^{-1})
= -\sum_{n\ge 0} \frac{1}{d} \mathbb{E}\,\mathrm{Tr}(A_t^n) \lambda^{-n-1}.
\label{eq:stieltjes-simple}
\end{align}
Our goal is to analyze the behavior as $t$ grows proportionally to $d$.
In the limit $d \to \infty$ with $t = \rho d$:
\begin{align*}
\lim_{d\to\infty} s_{\rho d}(\lambda)
= -\frac{1}{\bigl(1+\rho^{-1}W(-\rho e^{-\rho}/\lambda)\bigr)\lambda}
= -\sum_{n\ge 0}
\left(e^{-\rho n} \sum_{k=0}^n \frac{n-k}{n} \frac{ (\rho n)^{k}}{k!}\right)
\lambda^{-n-1},
\end{align*}
where $W$ denotes the principal branch of the Lambert $W$ function.
For the critical case $\rho=1$, this gives $\frac{1}{d} \mathbb{E}\,\mathrm{Tr}(A_{d}^n) \to \frac{n^n}{e^n n!} \sim \frac{1}{\sqrt{2\pi n}}$.

\begin{proof}
\textbf{Step 1: Set up the recursion.}
Recall that $A_{t+1} = (I - u_{t+1} u_{t+1}^\top) A_t$, where $u_{t+1}$ is a random vector of unit length.
Writing $u = u_{t+1}$, we expand $(A_{t+1}-\lambda I)^{-1}$ using the Sherman-Morrison formula:
\[
(A - uv^{\top})^{-1} = A^{-1} + \frac{A^{-1} u v^{\top} A^{-1}}{1 - v^{\top} A^{-1} u}
\]
and the elementary identity $(U - \lambda I)^{-1}U = I + \lambda (U - \lambda I)^{-1}$.

Setting $R = (A_t - \lambda I)^{-1}$ and expanding:
\begin{align}
(A_{t+1}-\lambda I)^{-1}
&= (A_t - \lambda I)^{-1}
- \frac{(A_t - \lambda I)^{-1} uu^\top}{\lambda u^\top (A_t - \lambda I)^{-1} u}
- \frac{(A_t - \lambda I)^{-1}uu^\top (A_t - \lambda I)^{-1}}{u^\top (A_t - \lambda I)^{-1} u}.
\label{eq:inverse1}
\end{align}
Taking the trace:
\begin{align*}
\mathrm{Tr}((A_{t+1}-\lambda I)^{-1})
&= \mathrm{Tr}(R) - \frac{1}{\lambda} - \frac{u^\top R^2 u}{u^\top R u}.
\end{align*}

\textbf{Step 2: Concentration argument.}
For large $d$, both the numerator and denominator in the ratio $\frac{u^\top R^2 u}{u^\top R u}$ concentrate around their means.
The function $f(u)=u^\top R u$ is Lipschitz with constant $2\|R\|$.
By concentration of Lipschitz functions on the sphere \citep{vershynin2018high}:
\[
|u^\top R u - \mathbb{E}[u^\top R u]| \le 2 \|R\| \sqrt{\log(2/\delta) / d}
\quad\text{with probability $1-\delta$}.
\]
Since $\|A_{t}\| \le 1$, we have $\|R\| = \|(A_t - \lambda I)^{-1}\| \le 1/(|\lambda|-1)$ for $|\lambda| > 1$.
Taking $\delta=1/d$, both numerator and denominator concentrate with $O(1/\sqrt{d})$ deviations, hence:
\[
\frac{u^\top R^2 u}{u^\top R u}
=
\frac{\mathrm{Tr}(R^2)/d + o(1)}{\mathrm{Tr}(R)/d + o(1)}
\]
uniformly in $t \le \rho d$.
Therefore:
\begin{align*}
\mathbb{E}[\mathrm{Tr}((A_{t+1}-\lambda I)^{-1})]
&= \mathbb{E}[\mathrm{Tr}(R)] - \frac{1}{\lambda} - \frac{\mathbb{E}[\mathrm{Tr}(R^2)]/d}{\mathbb{E}[\mathrm{Tr}(R)]/d} + o(1).
\end{align*}

\textbf{Step 3: Transform to differential equation.}
Writing $s(\rho, \lambda) = \frac{1}{d} \mathbb{E}\,\mathrm{Tr}((A_{\rho d} - \lambda I)^{-1})$ and noting that $\frac{\partial s}{\partial\lambda} = \frac{1}{d} \mathbb{E}\,\mathrm{Tr}((A_{\rho d} - \lambda I)^{-2})$, we obtain in the limit $d\to\infty$ the transport equation:
\begin{equation}
\frac{\partial s}{\partial \rho} = -\frac{1}{\lambda} - \frac{1}{s}\,\frac{\partial s}{\partial \lambda}.
\label{eq:pde-stieltjes}
\end{equation}

\textbf{Step 4: Solve the PDE using characteristics.}
With initial condition $s(0,\lambda)=1/(1-\lambda)$ (corresponding to $A_0=I$), we use the method of characteristics.
Consider a curve $\rho\mapsto(\rho,\lambda(\rho))$ with $\lambda'(\rho)=\frac{1}{s(\rho,\lambda(\rho))}$.
Plugging into~\eqref{eq:pde-stieltjes} gives:
\[
\frac{d}{d\rho}s(\rho,\lambda(\rho)) = -\frac{1}{\lambda(\rho)}.
\]
Using the product rule, one finds that $C:=\lambda(\rho)\,s(\rho,\lambda(\rho))$ is constant along the characteristic.
Hence $s=C/\lambda$ and $\lambda'=\lambda/C$, giving $\lambda(\rho)=\lambda_0 e^{\rho/C}$.

From $C=\lambda_0 s(0,\lambda_0) = \frac{\lambda_0}{1-\lambda_0}$, we get $\lambda(\rho)=\lambda_0 e^{\rho(1-\lambda_0)/\lambda_0}$.
Inverting via the Lambert $W$ function:
\[
C(\rho,\lambda) = -\frac{\rho}{W(-\rho e^{-\rho}/\lambda)+\rho}, \quad
\lambda_0=\frac{C}{1+C}.
\]
The solution is:
\begin{align*}
\lim_{d\to\infty} s_{\rho d}(\lambda)
= -\frac{1}{\bigl(1+\rho^{-1}W(-\rho e^{-\rho}/\lambda)\bigr)\lambda}.
\end{align*}

\textbf{Step 5: Extract the moments.}
The moments are:
\[
\lim_{d\to\infty} \frac{1}{d} \mathbb{E}\,\mathrm{Tr}(A_{\rho d}^n)
= e^{-\rho n} \sum_{k=0}^n \frac{n-k}{n} \frac{ (\rho n)^{k}}{k!},
\]
which for $\rho=1$ gives $\frac{n^n}{e^n n!}\sim \frac{1}{\sqrt{2\pi n}}$ by Stirling's approximation.
\end{proof}

\paragraph{Refined asymptotics at $\rho=1$.}
Using $m_n(1)=e^{-n}n^n/n!$ and Stirling's series:
\begin{equation}
\boxed{
m_n(1)
=\frac{1}{\sqrt{2\pi n}}
\left(
1-\frac{1}{12n}
+\frac{1}{288n^2}
+O(n^{-3})
\right).
}
\label{eq:warmup-refined}
\end{equation}

\paragraph{Phase transitions.}
The formula has a clean probabilistic interpretation. For $N \sim \text{Poisson}(\rho n)$:
\[
m_n(\rho) = \mathbb{E}\left[\left(1-\frac{N}{n}\right) \mathbf{1}_{\{N \le n\}}\right] = (1-\rho) \Pr(N \le n-1) + \Pr(N = n).
\]
This immediately yields the three-regime behavior:
\begin{itemize}
\item For $\rho < 1$: $m_n(\rho) \to 1-\rho$ as $n \to \infty$ (atom at eigenvalue 1)
\item For $\rho = 1$: $m_n(1) \sim 1/\sqrt{2\pi n}$ (critical, from $\Pr(N=n)$ at the Poisson mode)
\item For $\rho > 1$: $m_n(\rho) \sim (\rho e^{1-\rho})^n/\sqrt{2\pi n}$ (exponential decay)
\end{itemize}

\subsection{Test Loss: Frobenius Norm Analysis}
\label{sec:proof-SStestloss}

We now prove the test loss theorem for shuffle-once.

\begin{theorem}[Shuffle-Once Test Loss]
\label{thm:SS-test-restated}
Let $A_{\rho d}$ denote the dynamics matrix. Let
$\phi_m(x, y) = \frac{1}{d} \sum_{i,j=0}^{\infty} x^i y^j \mathbb{E}[ \mathrm{Tr}((A_m)^i (A_m^\top)^j)]$
be the bivariate generating function.
Then, as $d\to\infty$:
\[
\lim_{d\to\infty} \phi_{\rho d}(x, y) = \frac{W_x W_y - 1}{(W_x+1)(W_y+1)} \left[ W_x W_y e^{-\rho (W_x W_y-1)} - 1 \right]^{-1}
\]
where $W_x = \rho^{-1} W_0(-\rho e^{-\rho} x)$ and $W_y = \rho^{-1} W_0(-\rho e^{-\rho} y)$.
\end{theorem}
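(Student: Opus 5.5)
We will work in the free-probability limit of Appendix~\ref{app:freeprop}. There $A=P_1P_2\cdots P_m$ is a product of freely independent projections with $\phi(P_i)=1-\rho/m$, and we take $m\to\infty$. The target is the mixed moment generating function
\[
\Phi(x,y)=\sum_{i,j\ge0}x^iy^j\,\phi(A^i(A^\top)^j).
\]
By asymptotic freeness (Section~\ref{sec:app:freeprop:freeness}) and concentration of normalized traces, $\phi_{\rho d}(x,y)$ converges to $\Phi$ coefficientwise, so the real task is to compute $\Phi$.

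The direct route, which mirrors the warm-up of Section~\ref{sec:warmup}, is to track a two-resolvent quantity. Define
\[
g_t(\lambda,\mu)=\frac1d\,\E\Tr\bigl((I-\lambda^{-1}A_t)^{-1}(I-\mu^{-1}A_t^\top)^{-1}\bigr),
\]
which equals $\phi_t(1/\lambda,1/\mu)$. Its companion single-resolvent quantities $s_t(\lambda)$ and $s_t(\mu)$ are already known from the warm-up.

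Next we apply one step $A_{t+1}=(I-uu^\top)A_t$. Sherman--Morrison gives the rank-one correction to $(I-xA_{t+1})^{-1}$, and likewise for the transposed factor $(I-yA_{t+1}^\top)^{-1}$. Multiplying the two corrected resolvents and taking the trace produces scalar quadratic forms of the type $u^\top R_x u$, $u^\top R_xR_y u$, and $u^\top R_y R_x u$, where $R_x=(I-xA_t)^{-1}$ and similarly for $R_y$. Concentration on the sphere (the same Lipschitz argument as in Step~2 of the warm-up) replaces each quadratic form by a normalized trace. This yields, as $d\to\infty$ with $t=\rho d$, a first-order PDE in $\rho$ for $g(\rho,x,y)$. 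The coefficients of this PDE involve $s(\rho,x)$ and $s(\rho,y)$, whose explicit Lambert-$W$ forms we have, and the initial condition is $g(0,x,y)=1/((1-x)(1-y))$.

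We then solve this PDE by characteristics, as in Step~4 of the warm-up. Along the warm-up characteristics, $\lambda s$ is conserved, so $x$ and $y$ flow independently via $x(\rho)=x_0e^{-\rho(1-x_0^{-1}\cdots)}$. Inverting each flow gives exactly the variables $W_x$ and $W_y$. In these coordinates the PDE for $g$ should reduce to a linear ODE in $\rho$, schematically of the form $\partial_\rho g = c_1 g + c_2\, g^2$, with coefficients depending on $W_xW_y$. Integrating it should produce the factor $\bigl[W_xW_ye^{-\rho(W_xW_y-1)}-1\bigr]^{-1}$. The prefactor $(W_xW_y-1)/((W_x+1)(W_y+1))$ should come from matching the initial data, using the warm-up formula $1/(1+W_x)$ for the one-sided generating function. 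As consistency checks, setting $y=0$ must recover $1/(1+W_x)$ (Theorem~\ref{thm:ss-trace}), and the diagonal moments must be compatible with Theorem~\ref{thm:moment-equiv} after symmetrizing.

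\textbf{Main obstacle.} The hard step is the closure in the second paragraph. The product of two resolvents of non-normal matrices creates cross terms such as $u^\top R_x u\cdot u^\top R_y u$ together with mixed terms $u^\top R_xR_y u$. These cross terms must be shown to factor into products of normalized traces. Then, crucially, the resulting nonlinear PDE must be shown to close on $g$ together with the known functions $s(x)$ and $s(y)$, rather than generating a hierarchy of new mixed traces. The first thing to try is to compute the one-step evolution of $\Tr(R_xR_y)$ directly and to check that the remaining terms are expressible through $\partial_x$, $\partial_y$ and products of one-sided traces. If they are not, we will fall back on an operator-valued free-probability computation, namely a two-variable S-transform for the pair $(A,A^\top)$, built one projection at a time with $m\to\infty$.
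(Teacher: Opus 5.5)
This is essentially the paper's proof: Sherman--Morrison on the two-sided resolvent $\frac1d\Tr\bigl((A_t-xI)^{-1}(A_t^\top-yI)^{-1}\bigr)$, spherical concentration, a transport PDE solved along the warm-up characteristics (on which $\alpha=x\phi_x=-1/(1+W_x)$ and $\beta=y\phi_y=-1/(1+W_y)$ are conserved), and then a Bernoulli equation for $F=xy\,\phi_{x,y}$, which is linear in $1/F$ rather than in $F$. The closure you worry about does hold: the terms $u^\top R_yR_xu/(x\,u^\top R_xu)$ and $u^\top R_yR_xu/(y\,u^\top R_yu)$ cancel pairwise, the surviving quadratic forms concentrate to $\partial_x\phi_{x,y}$, $\partial_y\phi_{x,y}$ and products of $\phi_x,\phi_y,\phi_{x,y}$, and this yields $dF/d\rho=(W_xW_y-1)F+(1+W_x)(1+W_y)F^2$ with $F(0)=\alpha\beta$, whose solution is exactly the stated formula (your moment-equivalence sanity check, however, only tests the $xy$ coefficient, since $\Phi$ encodes $A^i(A^\top)^j$, not $(AA^\top)^n$).
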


\begin{corollary}
If $\rho=1$, then
\[
\lim_{d\to\infty} \; \frac{1}{d} \cdot \mathbb{E}[ \|A_d^n\|_F^2 ] = \frac{1}{2\sqrt{\pi n}} + O(n^{-1}).
\]
\end{corollary}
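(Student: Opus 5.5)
The plan is to extract the diagonal coefficient $[x^n y^n]$ of the closed form in Theorem~\ref{thm:SS-test-restated} at $\rho=1$, using $\|A_d^n\|_F^2=\Tr\bigl(A_d^n (A_d^\top)^n\bigr)$, which is exactly that coefficient. By Cauchy's formula,
\[
\lim_{d\to\infty}\frac1d\E\|A_d^n\|_F^2=\frac{1}{(2\pi i)^2}\oint\!\oint \phi(x,y)\,x^{-n-1}y^{-n-1}\,dx\,dy,
\]
on small circles around the origin. I would then change variables to the Lambert parameters. At $\rho=1$ set $u=W_x$, $v=W_y$, so that $x=-u e^{1+u}$ and $dx=-(1+u)e^{1+u}\,du$, and similarly for $y$. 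The Jacobian factor $(1+u)(1+v)$ cancels the denominator $(W_x+1)(W_y+1)$ exactly. The integrand becomes
\[
K(uv)\,\frac{e^{-n(2+u+v)}}{(-u)^{n+1}(-v)^{n+1}}, \qquad K(p):=\frac{p-1}{p\,e^{1-p}-1},
\]
with the contours now small loops around $u=v=0$, since $W_0$ is a local biholomorphism near the origin.

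The exponent $-n\bigl(\log(-u)+1+u\bigr)$ has a \emph{double-order} saddle at $u=-1$, where $\frac{d}{du}\bigl(\log(-u)+u\bigr)=0$. This is the familiar Stirling saddle: for $K\equiv$ const each variable alone would reproduce $n^n e^{-n}/n!\sim(2\pi n)^{-1/2}$. The new feature is the kernel $K$. Since $p e^{1-p}=1-\tfrac12(p-1)^2+O((p-1)^3)$, we get
\[
K(p)=-\frac{2}{p-1}\bigl(1+O(p-1)\bigr),
\]
a simple pole on the critical line $uv=1$, which passes through the saddle point $(-1,-1)$. I would deform both contours to circles $|u|=|v|=1$ through the saddle. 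Near the saddle I would write $u=-e^{i\theta/\sqrt n}$, $v=-e^{i\psi/\sqrt n}$. Then the Gaussian factors become $\exp(-\theta^2/2-\psi^2/2)$, and $uv-1\approx i(\theta+\psi)/\sqrt n$. Consequently $K\approx 2i\sqrt n/(\theta+\psi)$, and the double integral takes the leading form $n^{-1}\cdot\sqrt n\cdot c$, i.e.\ order $n^{-1/2}$. Changing to $\sigma=\theta+\psi$ (variance $2$) and $\theta-\psi$ reduces $c$ to a one-dimensional principal-value/indented integral of $e^{-\sigma^2/4}/\sigma$. I expect this to produce the constant $1/(2\sqrt\pi)$. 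A useful sanity check is that replacing $K$ by $1$ gives $1/(2\pi n)$, the product of two Stirling terms, which correctly has a smaller order.

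The main obstacle is the pole of $K$ on the torus through the saddle. The contour must be indented consistently with the original small-contour orientation, i.e.\ to the side where $|uv|<1$, and that choice determines the sign and the real constant. Rather than handle the principal value directly, I would first try to eliminate it algebraically. Write $\frac{1}{uv-1}=-\sum_k (uv)^k$ as a geometric series, valid on the initial small contours. Then each term separates into a product of one-variable Stirling-type integrals, giving the diagonal coefficient as an explicit series. Concretely, the leading term becomes $2\sum_{k\ge0}\bigl([\,\cdot\,]\text{ single-variable coefficient}\bigr)^2$ with each factor $\approx$ a Poisson point probability, and summing the squares of Poisson$(n)$ probabilities yields $\sum_j \Pr(N=j)^2\approx 1/(2\sqrt{\pi n})$. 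This is a clean route to the constant $\tfrac{1}{2\sqrt\pi}$. Finally, the $O(p-1)$ correction in $K$ contributes an integrand without singularity at the saddle, hence of order $n^{-1}$ (two Gaussian integrals each $n^{-1/2}$). The tails of the torus away from the saddle are exponentially small because $|u e^{-u}|$ is uniquely maximized at $u=-1$ on $|u|=1$. Together these give the stated $O(n^{-1})$ error.
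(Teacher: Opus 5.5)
Your proposal is correct, and it extracts the diagonal coefficient by a genuinely different mechanism from the paper's. Write $c_n:=\lim_{d\to\infty}\frac1d\E\|A_d^n\|_F^2$.

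\textbf{The paper's route.} It stays in the original variables. Near $(x,y)=(1,1)$ it uses $W_0(-e^{-1}x)\approx-1+\sqrt{2(1-x)}$ to turn the pole of $K$ into the homogeneous singular term $\bigl(\sqrt2\sqrt{(1-x)(1-y)}\,(\sqrt{1-x}+\sqrt{1-y})\bigr)^{-1}$. It then rescales $x=e^{-s/n}$, $y=e^{-t/n}$ and evaluates the resulting double Hankel-type integral, with the lower-order terms only indicated.

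\textbf{Your route.} The substitution $x=-ue^{1+u}$ is a two-variable Lagrange inversion that removes the square-root branch points altogether. Because the integrand then depends on $(u,v)$ only through $K(uv)$ times a product, the geometric series gives the exact identity
\[
c_n=\sum_{k=0}^{n}\kappa_k\,\Pr(N=n-k)^2,\qquad K(p)=\sum_{k\ge0}\kappa_k\,p^k,\quad N\sim\mathrm{Poisson}(n),
\]
for every $n$. It reproduces $c_1=e^{-1}$ and $c_2=e^{-2}+2e^{-3}$ from Appendix~\ref{app:lr}.

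\textbf{How they relate.} At leading order the two computations coincide. Inserting $1/(\sqrt s+\sqrt t)=\int_0^\infty e^{-\lambda(\sqrt s+\sqrt t)}\,d\lambda$ turns the paper's integral into $\int_0^\infty \pi^{-1}e^{-\lambda^2/2}\,d\lambda$, the continuum analogue of your one-sided sum of squared Poisson weights. Your route buys an elementary, fully rigorous argument (Stirling plus a local CLT) and exact finite-$n$ values. The paper's buys generality: it uses only the local singular structure and does not need the integrand to collapse to a function of $uv$.

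\textbf{Next-order term.} Your formula also makes the next term checkable. From $K(p)=\frac{2}{1-p}-\frac43+O(1-p)$ and $\sum_{j\le n}\Pr(N=j)^2=\frac{1}{4\sqrt{\pi n}}+\frac{5}{12\pi n}+O(n^{-3/2})$, one gets $c_n=\frac{1}{2\sqrt{\pi n}}+\frac{1}{6\pi n}+O(n^{-3/2})$. For example, $c_8\approx 0.1073$, versus $0.0997$ from the leading term alone. This is the opposite sign to the $n^{-1}$ term in the paper's refined expansion; it does not affect the corollary.

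Two points in your write-up should be made explicit.

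\textbf{The one-sided sum.} The constraint $k\ge0$ forces $j=n-k\le n$, so the pole part of $K$ contributes $2\sum_{j=0}^{n}\Pr(N=j)^2$, a one-sided sum. It equals $\Pr(N=N')+O(n^{-1})$, where $N'$ is an independent copy of $N$ and $\Pr(N=N')=e^{-2n}I_0(2n)\sim(4\pi n)^{-1/2}$. This halving is exactly what cancels the factor $2$ from the residue. Your sentence pairing the prefactor $2$ with the full sum $\sum_j\Pr(N=j)^2\approx 1/(2\sqrt{\pi n})$ would instead give $1/\sqrt{\pi n}$.

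\textbf{The remainder bound.} Write $\tilde K(p):=K(p)-\frac{2}{1-p}=\sum_k\tilde\kappa_k p^k$. Bounding its contribution by $\max_j\Pr(N=j)^2\sum_k|\tilde\kappa_k|=O(n^{-1})$ requires $p=1$ to be the only zero of $pe^{1-p}-1$ in $|p|\le 1$. A further zero on $|p|=1$ would produce an $n^{-1/2}$ contribution and change the constant. The condition does hold:
\begin{enumerate}
\item Take $p=re^{i\theta}$ with $0<r\le1$ and $|\theta|\le\pi$. The phase of $pe^{1-p}$ is $\theta-r\sin\theta$, whose modulus is at most $\pi+1<2\pi$, so it must vanish.
\item That forces $\theta=0$, since $|r\sin\theta|<|\theta|$ for $\theta\neq0$.
\item Finally, $re^{1-r}<1$ unless $r=1$.
\end{enumerate}
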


\begin{proof}
\textbf{Step 1: Set up the recursion.}
We introduce the two-sided generating function (resolvent):
\begin{align*}
\phi_A(x,y)
&=
\frac{1}{d} \mathbb{E} \,\mathrm{Tr}\left((A - x I)^{-1}(A^\top - y I)^{-1}\right)
= \sum_{n,k\ge 0} \frac{\frac{1}{d}\mathbb{E}\,\mathrm{Tr}(A^n (A^\top)^k)}{x^{n+1} y^{k+1}}.
\end{align*}
The diagonal terms $[(xy)^{-(k+1)}]\phi_A(x,y) = \frac{1}{d}\mathbb{E}\,\mathrm{Tr}(A^k (A^\top)^k) = \frac{1}{d}\mathbb{E}\|A^k\|_F^2$ give the test loss moments.

Let $R_x = (A - x I)^{-1}$ and $R_y = (A^\top - yI)^{-1}$.
Expanding the product using Sherman-Morrison and taking traces:
\begin{align*}
\mathrm{Tr}(R'_x R'_y)
&= \mathrm{Tr} (R_x R_y)
- \frac{ u^\top R_y R_x R_y u }{ u^\top R_y u}
- \frac{ u^\top R_x R_y R_x u }{ u^\top R_x u}
\\&\quad+ \frac{ u^\top R_x R_y u }{ xy \, u^\top R_x u\, u^\top R_y u}
+ \frac{ u^\top R_y R_x u\, u^\top R_x  R_y u}{ u^\top R_x u \, u^\top R_y u}.
\end{align*}

\textbf{Step 2: Concentration and expectation.}
By the same Lipschitz concentration argument, for $|x|, |y| > 1$:
\begin{align*}
\phi_{t+1}(x,y)
&= \phi_t(x,y)
- \frac{ \frac{\partial}{\partial y}\phi_t(x,y) }{ \phi_t(y)}
- \frac{ \frac{\partial}{\partial x} \phi_t(x,y) }{ \phi_t(x)}
\\&\quad+ \frac{ \phi_t(x,y) }{ xy \phi_t(x) \phi_t(y)}
+ \frac{ \phi_t(x,y)^2}{ \phi_t(x) \phi_t(y)}
+ o(1/d).
\end{align*}

\textbf{Step 3: Transform to PDEs.}
In the limit $d\to\infty$ with $t=\rho d$:
\begin{align}
\frac{\partial \phi_x}{\partial \rho}
&= -\frac{1}{x} -  \frac{1}{\phi_x} \frac{\partial \phi_x}{\partial x},
&\phi(0, x) &= \frac{1}{1-x}
\label{eq:phi1-frob}
\\
\frac{\partial \phi_{x,y}}{\partial \rho}
&= \frac{\phi_{x,y}}{\phi_x \phi_y}\left(
\frac{1}{xy} + \phi_{x,y}
\right)
- \frac{\frac{\partial}{\partial y}\phi_{x,y}}{\phi_y}
- \frac{\frac{\partial}{\partial x} \phi_{x,y}}{\phi_x},
&\phi(0, x, y) &= \frac{1}{1-x}\frac{1}{1-y}.
\label{eq:phi2-frob}
\end{align}

\textbf{Step 4: Solve using characteristics.}
By the method of characteristics with $C_x = x\phi_x$ and $C_y = y\phi_y$ as constants of motion:
\begin{align*}
\phi_x(\rho, x) &= \frac{-1}{x (1 + W_x)}, \\
\phi_{x,y}(\rho, x, y) &= \frac{W_x W_y - 1}{xy (W_x+1)(W_y+1)} \left[ W_x W_y e^{-\rho (W_x W_y-1)} - 1 \right]^{-1},
\end{align*}
where $W_x = \rho^{-1} W(-\rho e^{-\rho}/x)$ and $W_y = \rho^{-1} W(-\rho e^{-\rho}/y)$.

\textbf{Step 5: Extract diagonal coefficients via ACSV.}
For $\rho=1$, define $f(x,y)=\phi(1, 1/x, 1/y)/(xy)$ so that $a_{n,n}=[(xy)^n]f(x,y)$ gives the Frobenius norm moments.
Near the dominant singularity $(x,y)=(1,1)$:
\[
f(x,y)
=
\frac{1}{\sqrt 2\sqrt{(1-x)(1-y)}(\sqrt{1-x}+\sqrt{1-y})}
+ O((1-x)^{-1})
+ O((1-y)^{-1}).
\]

Using the ACSV (Analytic Combinatorics in Several Variables) diagonal extraction with $x=e^{-s/n}$, $y=e^{-t/n}$:
\begin{align*}
a_{n,n}
&\approx
\frac{n^{3/2}}{n^2} \frac{1}{\sqrt{2}(2\pi i)^2} \int_{c_1-i\infty}^{c_1+i\infty} \int_{c_2-i\infty}^{c_2+i\infty} \frac{e^{s+t}}{\sqrt{st}(\sqrt{s}+\sqrt{t})} ds \, dt
= \frac{1}{2\sqrt{\pi n}}.
\end{align*}

Thus $\frac{1}{d} \mathbb{E}\|A_d^n\|_F^2 \sim \frac{1}{2\sqrt{\pi n}}$.
The refined asymptotic expansion is:
\begin{equation}
\boxed{
\frac{1}{d}\,\mathbb{E}\|A_d^{\,n}\|_F^2
=
\frac{1}{2\sqrt{\pi}}\,n^{-1/2}
-\frac{1}{6\pi}\,n^{-1}
+O(n^{-3/2}).
}
\label{eq:frob-refined}
\end{equation}
\end{proof}

\subsection{Training Loss: Proof of Theorem~\ref{thm:shuffle-once-train}}
\label{sec:proof-trainloss}

\begin{theorem}[Shuffle-Once Training Loss]
In the proportional limit $m=\rho d$ with $\rho=1$, the training loss of shuffle-once Kaczmarz after $n$ epochs is
\[
\lim_{d\to\infty}\; \frac{1}{d} \cdot \mathbb{E}\bigl[\|X A_{d}^{n}\|_F^2 \bigr]
\;=\; \frac{1}{8\sqrt{\pi}}\; n^{-3/2}\,\Bigl(1+O(n^{-1})\Bigr).
\]
\end{theorem}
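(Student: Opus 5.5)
The plan is to reuse the bivariate resolvent machinery of the test-loss proof, inserting the Gram matrix between the two resolvents. The first step is to note that for $\rho=1$ the training loss $\frac1d\E\|XA_d^n\|_F^2$ equals $\frac1d\E\Tr\bigl((A_d^n)^\top X^\top X A_d^n\bigr)$. After the normalization $\|x_i\|^2\approx d$, we have $X^\top X=\sum_{i=1}^d \|x_i\|^2 u_iu_i^\top\approx d\sum_i u_iu_i^\top$ up to concentration errors of relative order $d^{-1/2}$. The resulting quantity should be compared with $\frac{1}{m}\|X e\|^2$ as used elsewhere, and the constant fixed accordingly.

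The key observation for shuffle-once is an exact telescoping identity. Write $A=P_d\cdots P_1$ with $P_i=I-u_iu_i^\top$, and let $A^{(i)}=P_i\cdots P_1$ be the partial products. Then $\sum_i u_iu_i^\top$ can be traded for the one-epoch decrement. For any vector $w$, projecting onto $u_i^\perp$ removes exactly $(u_i^\top A^{(i-1)}w)^2$ of squared norm. This suggests the identity
\[
\Tr(B^\top B)-\Tr(B^\top A^\top A B)=\sum_{i}\Tr\bigl(B^\top A^{(i-1)\top}u_iu_i^\top A^{(i-1)}B\bigr)
\]
with $B=A^n$. Because $u_iu_i^\top$ is sandwiched by partial products rather than by identities, this is not exactly $\Tr(B^\top HB)$. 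I would therefore handle it with the resolvent flow instead.

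The main step is to track the trilinear generating function
\[
\psi_t(x,y)=\tfrac1d\E\Tr\bigl(R_x\,G_t\,R_y\bigr),\qquad G_t=\sum_{s\le t}u_su_s^\top ,
\]
where $R_x=(A_t-xI)^{-1}$ and $R_y=(A_t^\top-yI)^{-1}$. Its coefficient of $(xy)^{-(n+1)}$ at $t=d$ gives the training moment. Applying Sherman--Morrison to both resolvents at each step, as in Steps 1--2 of the test-loss proof, and using Lipschitz concentration on the sphere, every quadratic form $u^\top(\cdot)u$ is replaced by its normalized trace. The new term $u_{t+1}u_{t+1}^\top$ added to $G$ contributes a source of the form $\frac1d\,u^\top R_y'R_x'u\approx \phi_{x,y}$-type products. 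The quantities $u^\top R\,G_t\,R\,u$ are replaced by $\psi_t$, corrected by the same rational factors in $\phi_x,\phi_y,\phi_{x,y}$. This yields a linear first-order PDE in $(\rho,x,y)$ for $\psi$, whose transport part matches \eqref{eq:phi2-frob} and whose inhomogeneity is built from the already known $\phi_x$, $\phi_y$ and $\phi_{x,y}$. I would solve it along the same characteristics $C_x=x\phi_x$, $C_y=y\phi_y$ (Lambert-$W$ curves), obtaining $\psi$ in closed form in $W_x,W_y$ at $\rho=1$.

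Finally, as in Step 5, I would set $f(x,y)=\psi(1,1/x,1/y)/(xy)$ and expand near $(1,1)$. I expect the leading singular part to be homogeneous of degree $-1/2$ in $(1-x,1-y)$, one half-order milder than the test-loss kernel $\bigl[\sqrt{(1-x)(1-y)}(\sqrt{1-x}+\sqrt{1-y})\bigr]^{-1}$, which is degree $-3/2$. This would give $n^{-3/2}$ after the scaling $x=e^{-s/n}$, $y=e^{-t/n}$. The constant then comes from a double inverse Laplace integral, which should evaluate to $1/(8\sqrt\pi)$. A useful check is that the training kernel ought to equal, up to the $2/(2-\eta)=2$ factor, minus the discrete derivative of the test-loss kernel, which would give $\tfrac12\cdot\tfrac{1}{2\sqrt\pi}\cdot\tfrac12 n^{-3/2}\cdot\tfrac12$-type bookkeeping.

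The main obstacle is the PDE step. The source terms involving $G_t$ do not close on $\psi$ alone without a careful freeness or concentration argument for the correlation between $u_{t+1}$ and $G_t$, which is independent, whereas earlier $u_s$ are correlated with $A_t$. I would first verify closure by checking the first few moments against the exact free-probability moments of Appendix~\ref{app:freeprop}, and only then trust the ACSV extraction and the $O(n^{-1})$ correction.
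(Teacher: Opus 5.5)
Your plan follows the paper's proof. You track the weighted two-resolvent trace $\psi_t(x,y)$ with $G_t=\sum_{s\le t}u_su_s^\top$ and close it by Sherman--Morrison plus sphere concentration into a linear transport PDE, solved along the same characteristics ($x\phi_x$, $y\phi_y$ constant) as the test-loss problem. You then extract the diagonal from a degree $-\tfrac12$ singularity, which the paper identifies as $\bigl(\sqrt{2(1-x)}+\sqrt{2(1-y)}\bigr)^{-1}$; its double inverse Laplace integral gives $\tfrac{1}{\sqrt2}\cdot\tfrac{1}{4\sqrt{2\pi}}=\tfrac{1}{8\sqrt\pi}$.

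Two simplifications are worth knowing. First, the closure you worry about is automatic, because $u_{t+1}$ is independent of $(A_t,G_t)$. Second, the paper orders the trace as $\Tr(G_tR_xR_y)$ rather than your $\Tr(R_xG_tR_y)$; the two agree in expectation under the row-reversal symmetry $(A_t,G_t)\mapsto(A_t^\top,G_t)$. With the paper's ordering, $u_{t+1}^\top A_{t+1}=0$ makes the source term exactly $1/(xy)$ and gives $\psi$ the same linear coefficient as in the $\phi_{x,y}$ equation. The ratio $\psi/\phi_{x,y}$ then just integrates $1/F$ along characteristics.
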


\begin{proof}
\textbf{Step 1: Set up the recursion.}
We track the cumulative outer-product matrix $S_t=\sum_{j=1}^{t}u_j u_j^{\top}$, noting that $X^\top X = S_m$.
Define the augmented resolvent:
\[
\psi_t(x,y)
=\frac{1}{d}\,\mathbb{E}\,\mathrm{Tr}\!\bigl[S_t(A_t-xI)^{-1}(A_t^{\top}-yI)^{-1}\bigr].
\]
The diagonal coefficient $[(xy)^{-(n+1)}]\psi_t(x,y) = \mathbb{E}\|X A_t^{n}\|_F^2/d$ is our target.

Using Sherman-Morrison with $R_x:=(A_t-xI)^{-1}$, $R_y:=(A_t^{\top}-yI)^{-1}$:
\begin{align}
\mathrm{Tr}\!\bigl[S_{t+1}R'_xR'_y\bigr]
= \mathrm{Tr}\!\bigl[(S_{t}+uu^\top)R'_xR'_y\bigr]
= \mathrm{Tr}\!\bigl[S_t R'_xR'_y\bigr] + u^{\top}R'_xR'_y u.
\label{eq:psi-update}
\end{align}
The second term equals $1/(xy)$ since $u$ is a null vector of the updated matrices.

\textbf{Step 2: Concentration and simplification.}
After careful expansion and application of concentration, many terms cancel, yielding:
\[
\psi_{t+1}(x,y)
= \psi_t(x,y)
- \frac{\partial_y \psi_t(x,y)}{\phi_t(y)}
- \frac{\partial_x \psi_t(x,y)}{\phi_t(x)}
+ \frac{\psi_t(x,y)}{xy \,\phi_t(x)\phi_t(y)}
+ \frac{\psi_t(x,y)\phi_t(x,y)}{\phi_t(x)\phi_t(y)}
+ \frac{1}{xy}
+ o(1/d).
\]

\textbf{Step 3: Transform to PDEs.}
In the limit $d\to\infty$, we have the system:
\begin{align}
\partial_\rho \psi_z
&=
- \frac{\psi_z}{z \phi_z}
- \frac{\partial_z \psi_z}{\phi_z}
- \frac{1}{z},
&\psi_z(0, z) &= 0
\label{eq:psi1}
\\
\partial_\rho \psi_{x,y} &=
- \frac{\partial_x \psi_{x,y}}{\phi_x}
- \frac{\partial_y \psi_{x,y}}{\phi_y}
+ \frac{\psi_{x,y}}{xy\phi_x \phi_y} + \frac{\psi_{x,y}\phi_{x,y}}{\phi_x \phi_y}
+ \frac{1}{xy},
&\psi(0,x,y) &= 0.
\label{eq:psi2}
\end{align}

\textbf{Step 4: Solve using characteristics.}
Take characteristics with $\frac{dx}{d\rho} = \frac{1}{\phi_x}$ and $\frac{dy}{d\rho} = \frac{1}{\phi_y}$.
Along these, $\alpha = x\phi_x$ and $\beta = y\phi_y$ are constants.

The single-variable weighted resolvent satisfies $\psi_z(\rho,z)=-\rho/z$.

For the two-variable case, define $F:=xy\,\phi_{x,y}$ and $A:=\frac{1}{\alpha}+\frac{1}{\beta}+\frac{1}{\alpha\beta}$.
The ODE for $F$ is:
\[
\frac{dF}{d\rho} = A\,F+\frac{1}{\alpha\beta}F^2,
\]
with solution:
\[
F(\rho)=\frac{\alpha\beta\,A\,e^{A\rho}}{A+1-e^{A\rho}}.
\]

\emph{The ratio trick.}
For $q:=\psi/\phi_{x,y}$, we find remarkably:
\[
\frac{dq}{d\rho} = \frac{1}{F},
\]
so with $q(0)=0$:
\[
\frac{\psi(\rho,x,y)}{\phi_{x,y}(\rho,x,y)}
= \int_0^\rho \frac{d\tau}{F(\tau)}
= \frac{1}{\alpha\beta\,A}\left(\frac{A+1}{A}\bigl(1-e^{-A\rho}\bigr)-\rho\right).
\]

\textbf{Step 5: Extract moments via ACSV.}
For $\rho=1$, near $(x,y)=(1,1)$:
\[
G(x,y) := \psi(1,1/x,1/y)/(xy)
\approx \frac{1}{\sqrt{2(1-x)}+\sqrt{2(1-y)}}.
\]
This is one order softer than the $\phi$ singularity, producing the extra factor of $1/n$.

Using diagonal extraction:
\[
\frac{1}{(2\pi i)^2}\iint \frac{e^{s+t}}{\sqrt{s}+\sqrt{t}}\,ds\,dt = \frac{1}{4\sqrt{2\pi}}.
\]

Therefore:
\[
[(xy)^{-(n+1)}]\psi_{1}(x,y)
= \frac{1}{\sqrt{2}} \cdot \frac{1}{4\sqrt{2\pi}} \cdot n^{-3/2}\left(1+O(n^{-1})\right)
= \frac{1}{8\sqrt{\pi}}n^{-3/2}\left(1+O(n^{-1})\right).
\]

The refined expansion is:
\begin{equation}
\boxed{
\lim_{d\to\infty}\frac{1}{d}\,\mathbb{E}\| X A_{d}^{n}\|_F^2
= \frac{1}{8\sqrt{\pi}}\;n^{-3/2}
+\frac{1}{12\pi}\;n^{-2}
+O(n^{-5/2}).
}
\label{eq:train-refined}
\end{equation}
\end{proof}

\begin{figure}[ht]
    \centering
    \includegraphics[width=0.75\linewidth]{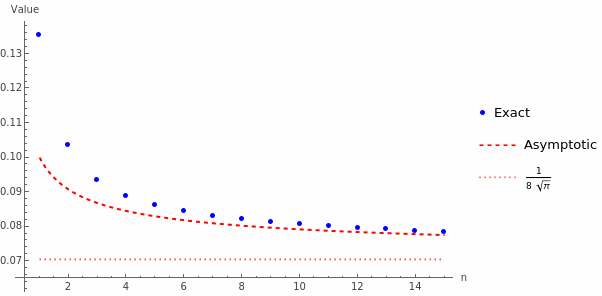}
    \caption{Training loss for shuffle-once at $\rho=1$, showing excellent agreement between empirical results and the theoretical prediction $\frac{1}{8\sqrt{\pi}}n^{-3/2}$.}
    \label{fig:single-shuffle-training}
\end{figure}

\subsection{Summary of Asymptotic Expansions}

For completeness, we collect all the refined asymptotic expansions for shuffle-once at $\rho=1$:

\begin{center}
\renewcommand{\arraystretch}{1.5}
\begin{tabular}{lcc}
\toprule
\textbf{Quantity} & \textbf{Leading Term} & \textbf{Second-Order Correction} \\
\midrule
Trace moment $m_n(1)$ & $\frac{1}{\sqrt{2\pi n}}$ & $1-\frac{1}{12n}+O(n^{-2})$ \\[0.3em]
Test loss $\frac{1}{d}\mathbb{E}\|A_d^n\|_F^2$ & $\frac{1}{2\sqrt{\pi n}}$ & $1-\frac{1}{3\sqrt{\pi n}}+O(n^{-1})$ \\[0.3em]
Training loss $\frac{1}{d}\mathbb{E}\|XA_d^n\|_F^2$ & $\frac{1}{8\sqrt{\pi}}n^{-3/2}$ & $1+\frac{2}{3\sqrt{\pi n}}+O(n^{-1})$ \\
\bottomrule
\end{tabular}
\end{center}

The factor of 4 between test and training loss leading constants ($\frac{1}{2\sqrt{\pi}}$ vs $\frac{1}{8\sqrt{\pi}}$) reflects the faster convergence on training examples that are directly used for optimization.

\section{Flip-Flop Analysis}
\label{app:flipflop}

This appendix provides detailed proofs for the flip-flop sampling scheme, where odd epochs process rows in a fixed order and even epochs process in reverse order.

\subsection{Test Loss: Proof of Theorem~\ref{thm:flipflop-test}}
\label{sec:proof-FFtestloss}

\begin{theorem}[Flip-Flop Test Loss]
Let $A_{\rho d}$ be the one-epoch dynamics matrix with $m=\rho d$, and let $n$ be even. Then,
\[
\lim_{d\to\infty} \; \frac{1}{d} \; \mathbb{E}[ \|(A_{\rho d}^\top A_{\rho d})^{n/2}\|_F^2 ]
= \frac{1}{\sqrt{2 \pi n}} + O\Big(\frac{1}{n}\Big)
\qquad\text{if $\rho=1$ and $n\to\infty$}.
\]
\end{theorem}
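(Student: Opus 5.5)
The plan is to reduce the Frobenius norm to a moment of $A_{\rho d}^\top A_{\rho d}$, use the moment equivalence of Theorem~\ref{thm:moment-equiv} to swap singular-value moments for eigenvalue moments, and then read off the answer from the trace moments already computed.

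\textbf{Step 1 (reduction to a trace moment).} Since $B:=(A_{\rho d}^\top A_{\rho d})^{n/2}$ is symmetric positive semidefinite, we have
\[
\|B\|_F^2=\Tr(B^\top B)=\Tr\bigl((A_{\rho d}^\top A_{\rho d})^{n}\bigr).
\]
So the target equals $\frac1d\E\Tr\bigl((A_{\rho d}^\top A_{\rho d})^n\bigr)$, the $n$-th moment of the squared singular values of the epoch matrix. This step is exact and holds at every finite $d$.

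\textbf{Step 2 (free limit and moment equivalence).} By the asymptotic freeness discussion in Section~\ref{sec:app:freeprop:freeness}, the factors $I-u_iu_i^\top$ are independent orthogonally invariant projections of rank $d-1$. As $d\to\infty$, normalized traces of fixed polynomials in them converge to those of free projections. I would then apply Theorem~\ref{thm:moment-equiv}, which gives $\phi((A^\top A)^n)=\phi(A^n)$ for any product of free projections. The limit is therefore the trace moment of $A$ itself, namely $\lim_d\frac1d\E\Tr(A_{\rho d}^n)$.

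\textbf{Step 3 (evaluating the moment).} For this moment I would invoke Theorem~\ref{thm:ss-trace}, or directly the warm-up computation in Section~\ref{sec:warmup}. These give $e^{-\rho n}\sum_{k\le n}\frac{n-k}{n}\frac{(\rho n)^k}{k!}$, which at $\rho=1$ equals $\frac{n^n}{e^n n!}$. Stirling's formula \eqref{eq:warmup-refined} then yields $\frac{1}{\sqrt{2\pi n}}\bigl(1-\frac{1}{12n}+\dots\bigr)$. This is $\frac{1}{\sqrt{2\pi n}}+O(n^{-3/2})$, well within the claimed $O(1/n)$.

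\textbf{Main obstacle.} The delicate point is the interchange of limits in Step 2. The number of factors $m=\rho d$ grows with $d$, whereas Voiculescu's theorem covers a fixed finite family. Theorem~\ref{thm:moment-equiv} also relies on the inductive S-transform argument, whose free factors have trace $1-1/d\to1$. I would handle this in one of two ways. The first is to use the iterated limit adopted in Section~\ref{sec:app:freeprop:freeness}: take $d\to\infty$ with $m$ blocks of size $b=\lceil\rho d/m\rceil$, then $m\to\infty$. The second is to bypass freeness and verify directly that the moments coincide. For the latter, I would run the Stieltjes/PDE approach of Section~\ref{sec:warmup} on the resolvent of $A_t^\top A_t$ and check that it satisfies the same transport equation \eqref{eq:pde-stieltjes} with the same initial data. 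I would also check that the case of $n$ even versus the $(n/2)$-fold product introduces no parity issue, which holds because only $\Tr((A^\top A)^n)$ enters.
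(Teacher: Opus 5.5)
Your proposal is correct and matches the paper. Steps 1--3 reproduce the main-text proof sketch: reduce to $\frac1d\E\Tr((A^\top A)^n)$, apply Theorem~\ref{thm:moment-equiv}, then read off $n^n/(e^n n!)$ from Theorem~\ref{thm:ss-trace} and Stirling. Your second fix for the limit interchange is exactly the appendix proof the paper relies on: it writes $B_{t+1}=B_t-vv^\top$ with $v=A_t^\top u$, uses $RB_t=I+\lambda R$ to get the same transport equation \eqref{eq:pde-stieltjes} with initial data $1/(1-\lambda)$, and so works directly with $m=\rho d$ rank-one steps, avoiding the fixed-family restriction you flagged.
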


\begin{proof}
Under the flip-flop schedule, two epochs multiply by the same $B_m := A_m^{\top}A_m$.
Thus after $n$ (even) epochs the squared test loss is
\[
\frac{1}{d}\,\mathbb{E}\bigl\|(A_m^{\top}A_m)^{n/2}\bigr\|_F^2
=\frac{1}{d}\,\mathbb{E}\,\mathrm{Tr}\!\bigl[(A_m^{\top}A_m)^n\bigr].
\]
We analyze these moments via the Stieltjes transform of $B_t:=A_t^{\top}A_t$.

\paragraph{Step 1: Set up the recursion.}
Let $u=u_{t+1}$ be the next unit row direction:
\[
A_{t+1}=(I-uu^\top)A_t,
\qquad
B_{t+1}=A_{t+1}^{\top}A_{t+1}
=A_t^{\top}(I-uu^\top)A_t
=B_t-vv^\top,
\]
where $v:=A_t^{\top}u$.
For $|\lambda|>1$, denote $R:=(B_t-\lambda I)^{-1}$ and $R':=(B_{t+1}-\lambda I)^{-1}$.
By Sherman-Morrison:
\[
R' = R+\frac{Rv v^\top R}{1-v^\top Rv}.
\]
Taking traces:
\begin{equation}
\mathrm{Tr}\, R' - \mathrm{Tr}\, R = \frac{v^\top R^2 v}{1 - v^\top R v}.
\label{eq:FF-test-one-step}
\end{equation}

\paragraph{Step 2: Concentration argument.}
Since $v=A_t^{\top}u$ with $u$ uniform on the sphere and $\|A_t\|\le 1$, the quadratic forms $f(u):=v^\top R^k v = u^\top (A_t R^k A_t^{\top})u$ are $2\|R^k\|$-Lipschitz.
By concentration on the sphere:
\[
v^\top R^k v
=\frac{1}{d}\mathrm{Tr}(A_t R^k A_t^{\top})+o_{\mathbb{P}}(1)
=\frac{1}{d}\mathrm{Tr}(R^k B_t)+o_{\mathbb{P}}(1).
\]
Using the identity $RB_t=I+\lambda R$:
\[
v^\top R v = 1+\lambda\frac{1}{d}\mathrm{Tr}\, R+o(1),
\qquad
v^\top R^2 v = \frac{1}{d}\mathrm{Tr}\, R+\lambda\frac{1}{d}\mathrm{Tr}\, R^2+o(1).
\]
Substituting into~\eqref{eq:FF-test-one-step}:
\[
\mathbb{E}[\mathrm{Tr}\, R' - \mathrm{Tr}\, R]
= -\frac{1}{\lambda}
- \frac{\mathbb{E}\,\mathrm{Tr}\, R^2}{\mathbb{E}\,\mathrm{Tr}\, R} + o(1).
\]

\paragraph{Step 3: Transport equation.}
Define the Stieltjes transform $s(\rho,\lambda):=\lim_{d\to\infty}\frac{1}{d}\mathbb{E}\,\mathrm{Tr}(B_{\rho d}-\lambda I)^{-1}$.
The transport equation is:
\begin{equation}
\partial_\rho s + \frac{1}{s}\,\partial_\lambda s = -\frac{1}{\lambda},
\qquad
s(0,\lambda)=\frac{1}{1-\lambda}.
\label{eq:PDE-FF-test}
\end{equation}

\paragraph{Step 4: Connection to shuffle-once trace moments.}
Equation~\eqref{eq:PDE-FF-test} is \emph{identical} to the scalar PDE for shuffle-once trace moments (Section~\ref{sec:warmup} of Appendix~\ref{app:stieltjes}).
Since the PDE and initial conditions are the same, the solutions coincide.
Hence the flip-flop test loss moments $b_n(\rho) := \lim_{d\to\infty}\tfrac{1}{d}\mathbb{E}\,\mathrm{Tr}(B_{\rho d}^n)$ equal the shuffle-once trace moments $m_n(\rho)$.

At criticality $\rho=1$:
\begin{equation}
\boxed{
\lim_{d\to\infty}\frac{1}{d}\,\mathbb{E}\bigl\|(A_d^{\top}A_d)^{n/2}\bigr\|_F^2
=\frac{1}{\sqrt{2\pi n}}\Bigl(1+O(n^{-1})\Bigr).
}
\label{eq:FF-test-main}
\end{equation}

The refined asymptotic expansion from Stirling's series:
\begin{equation}
\boxed{
\frac{1}{d}\,\mathbb{E}\bigl\|(A_d^{\top}A_d)^{n/2}\bigr\|_F^2
=\frac{1}{\sqrt{2\pi n}}\left(
1-\frac{1}{12n}+\frac{1}{288n^2}+O(n^{-3})
\right).
}
\label{eq:FF-test-refined}
\end{equation}
\end{proof}

\subsection{Training Loss: Proof of Theorem~\ref{thm:flipflop-train}}
\label{sec:proof-FFtrainloss}

\begin{theorem}[Flip-Flop Training Loss]
In the proportional regime $m=\rho d$ with $\rho=1$, the flip-flop training loss after $n$ epochs (with $n$ even) is
\[
\lim_{d\to\infty}\frac{1}{d}\,\mathbb{E}\bigl\|X\,(A_d^{\!\top}A_d)^{n/2}\bigr\|_F^2
\;=\;
\frac{\sqrt{2}}{3\sqrt{\pi}}\,n^{-3/2}\Bigl(1+O(n^{-1})\Bigr).
\]
\end{theorem}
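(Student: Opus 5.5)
The plan mirrors the flip-flop test-loss proof, augmented by the weight-tracking device from the shuffle-once training-loss proof. Write $B_t:=A_t^\top A_t$ and $\Sigma_t:=\sum_{j\le t}u_ju_j^\top$, so that $X^\top X$ corresponds to $\Sigma_m$ up to the row-norm normalization. The target is
\[
\frac1d\,\mathbb E\bigl\|X B^{n/2}\bigr\|_F^2=\frac1d\,\mathbb E\,\mathrm{Tr}\bigl[\Sigma_m B_m^{n}\bigr].
\]
Since $B$ is symmetric, a one-variable augmented resolvent should suffice, in contrast to the two-variable $\psi$ needed for shuffle-once. Define
\[
\chi_t(\lambda):=\frac1d\,\mathbb E\,\mathrm{Tr}\bigl[\Sigma_t(B_t-\lambda I)^{-1}\bigr],
\]
whose coefficient of $\lambda^{-n-1}$ (up to sign) is the desired moment.

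\textbf{One-step recursion.} With $v=A_t^\top u$, we have $B_{t+1}=B_t-vv^\top$ and $\Sigma_{t+1}=\Sigma_t+uu^\top$. Sherman--Morrison as in \eqref{eq:FF-test-one-step} gives
\[
\mathrm{Tr}[\Sigma_{t+1}R']=\mathrm{Tr}[\Sigma_tR]+\frac{v^\top R\Sigma_tRv}{1-v^\top Rv}+u^\top R'u .
\]
The quadratic forms in $u$ are then replaced by normalized traces using the same Lipschitz concentration on the sphere. This uses $v^\top Rv=1+\lambda s+o(1)$ and the identity $RB=I+\lambda R$, which reduces $\frac1d\mathrm{Tr}(A R\Sigma R A^\top)=\frac1d\mathrm{Tr}(\Sigma RBR)$ to $\chi$ and $\partial_\lambda\chi$. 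The new ingredient is the term $u^\top R'u$. Here $u$ is not independent of $B_{t+1}$, but $u^\top R' u$ can be expressed through $u^\top(B_t-\lambda)^{-1}u$, $u^\top R A_t^\top u$ and $u^\top A_tRA_t^\top u$ via Sherman--Morrison. These equal $s$, a mixed transform $\frac1d\mathrm{Tr}[(B-\lambda)^{-1}A^\top]$, and $1+\lambda s$ respectively. So I expect to need one auxiliary transform $r(\rho,\lambda)=\lim\frac1d\mathbb E\mathrm{Tr}[A_{\rho d}^\top(B_{\rho d}-\lambda)^{-1}]$ with its own transport equation, derived the same way.

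\textbf{Passage to PDEs.} This yields a first-order linear PDE system for $(\chi,r)$ driven by the already-known $s$ from \eqref{eq:PDE-FF-test}, whose solution equals the Lambert-$W$ form of Section~\ref{sec:warmup}. I would integrate along the same characteristics $\lambda'=1/s$, on which $\lambda s$ is constant. Along these characteristics the equations become linear ODEs in $\rho$ with explicit coefficients, solved by integrating factors, as in the ratio trick of the shuffle-once training proof. Setting $\rho=1$ and inverting the characteristic map via Lambert $W$ should give a closed form for $\chi(1,\lambda)$.

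\textbf{Singularity analysis (main obstacle).} Set $z=1/\lambda$ and expand near the dominant singularity $z=1$, where $W_0(-e^{-1}z)\approx-1+\sqrt{2(1-z)}$. I expect the leading singular term to be $c\,(1-z)^{1/2}$, with constant terms cancelling. Transfer theorems then give coefficients $\frac{c}{\Gamma(-1/2)}n^{-3/2}$. Matching to $\frac{\sqrt2}{3\sqrt\pi}$ requires $c=-\frac{2\sqrt2}{3}$, a useful check. The hardest step is deriving the $u^\top R'u$ correction exactly: the rank-one dependence between $u$ and $B_{t+1}$ means a sloppy concentration step would corrupt the constant. I would verify it against small-$n$ moments computed by free-probability moment expansion of free projections, before trusting the coefficient $c$. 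The $O(n^{-1})$ relative error then follows from the next term $(1-z)^{3/2}$ in the same expansion.
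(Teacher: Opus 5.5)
Your plan is essentially the paper's proof. Your $\chi$ and auxiliary $r$ are exactly the paper's $\gamma_t$ and $\beta_t=\frac1d\mathbb{E}\,\mathrm{Tr}(A_tR_t)$, and the remaining steps coincide with the paper's Steps 1--5: the Sherman--Morrison treatment of $u^\top R'u$ (source term $s-r^2/(\lambda s)$), the characteristics $\lambda'=1/s$ with $\lambda s$ conserved, the integrating-factor solution, and the transfer-theorem check $c=-\tfrac{2\sqrt2}{3}$. One small correction: the constant term does not cancel (the paper finds $\mathcal{G}_1(z)=\tfrac32-\tfrac{2\sqrt2}{3}\sqrt{1-z}+O(1-z)$), but an analytic constant has no effect on the coefficient asymptotics.
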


\begin{remark}
The flip-flop training loss constant $\frac{\sqrt{2}}{3\sqrt{\pi}}$ is a factor $\frac{8\sqrt{2}}{3} \approx 3.77$ worse than the shuffle-once constant $\frac{1}{8\sqrt{\pi}}$.
The ``flop'' in flip-flop is actively harmful for training loss.
\end{remark}

\begin{proof}
Two flip-flop epochs multiply by $B_m:=A_m^{\!\top}A_m$, so
\[
\frac{1}{d}\,\mathbb{E}\|X(B_m)^{n/2}\|_F^2
=\frac{1}{d}\,\mathbb{E}\,\mathrm{Tr}\!\bigl[S_m B_m^{n}\bigr],
\]
where $S_m:=X^\top X=\sum_{j=1}^m u_ju_j^\top$.
We study the resolvent-weighted trace:
\[
\gamma_t(\lambda)\ :=\ \frac{1}{d}\,\mathbb{E}\,\mathrm{Tr}\!\bigl[S_t (B_t-\lambda I)^{-1}\bigr].
\]

\paragraph{Step 1: Resolvent update.}
With $B_{t+1}=B_t - v v^\top$ where $v=A_t^{\top}u$, Sherman-Morrison gives:
\[
R_{t+1} = R_t + \frac{R_tv v^\top R_t}{1-v^\top R_t v}.
\]
Key identities:
\[
A_t R_t A_t^\top = I + \lambda\,(A_tA_t^\top - \lambda I)^{-1},
\qquad
R_t B_t = B_t R_t = I + \lambda R_t.
\]

\paragraph{Step 2: Transport system.}
Introduce three auxiliary functions:
\[
\alpha_t(\lambda):=\frac{1}{d}\,\mathbb{E}\,\mathrm{Tr}\, R_t,\quad
\beta_t(\lambda):=\frac{1}{d}\,\mathbb{E}\,\mathrm{Tr}(A_tR_t),\quad
\gamma_t(\lambda):=\frac{1}{d}\,\mathbb{E}\,\mathrm{Tr}(S_tR_t).
\]

In the limit $d\to\infty$ with $\rho=t/d$, we obtain the system:
\begin{align}
\partial_\rho \alpha &= -\frac{1}{\lambda} - \frac{\partial_\lambda \alpha}{\alpha},
&\alpha(0,\lambda)&=\frac{1}{1-\lambda},
\label{eq:PDE-alpha}
\\
\partial_\rho \beta &= -\frac{1}{\alpha}\,\partial_\lambda \beta,
&\beta(0,\lambda)&=\frac{1}{1-\lambda},
\label{eq:PDE-beta}
\\
\partial_\rho \gamma &= \alpha - \frac{\gamma + \lambda\,\partial_\lambda\gamma}{\lambda\,\alpha}
- \frac{\beta^2}{\lambda\,\alpha},
&\gamma(0,\lambda)&=0.
\label{eq:PDE-gamma}
\end{align}

\paragraph{Step 3: Solution by characteristics.}
Equation~\eqref{eq:PDE-alpha} is solved by characteristics with conserved quantity $C:=\lambda\,\alpha(\rho,\lambda)$:
\[
\lambda(\rho)=\lambda_0\,e^{\rho/C},\qquad
\alpha(\rho,\lambda)=\frac{C}{\lambda},\qquad
C=\frac{\lambda_0}{1-\lambda_0}.
\]
In closed form (principal branch $W_0$):
\begin{equation}
\alpha(\rho,\lambda)
=-\frac{1}{\bigl(1+\rho^{-1} W_0(-\rho e^{-\rho}/\lambda)\bigr)\,\lambda}.
\label{eq:alpha-Lambert}
\end{equation}

For $\beta$, equation~\eqref{eq:PDE-beta} implies $\beta$ is constant along characteristics:
\[
\beta(\rho,\lambda)=\beta(0,\lambda_0)=\frac{1}{1-\lambda_0}=1+C
\quad\Longrightarrow\quad
\beta(\rho,\lambda)=1+\lambda\,\alpha(\rho,\lambda).
\]

For $\gamma$, equation~\eqref{eq:PDE-gamma} becomes a linear ODE along characteristics:
\[
\frac{d}{d\rho}\gamma(\rho)+\frac{1}{C}\,\gamma(\rho)
=\alpha(\rho)-\frac{\beta^2}{C}.
\]
With $\gamma(0)=0$:
\begin{equation}
\gamma(\rho,\lambda)
=\Bigl[(1+C)^2+(1+C)\,\rho\Bigr]\,e^{-\rho/C}-(1+C)^2,
\qquad C=\lambda\,\alpha(\rho,\lambda).
\label{eq:gamma-solved}
\end{equation}

\paragraph{Step 4: Closed-form generating function.}
The training loss coefficients $a_n(\rho) := \lim_{d\to\infty}\frac{1}{d}\,\mathbb{E}\,\mathrm{Tr}[S_m B_m^{\,n}]$ are encoded by:
\[
\mathcal{G}_\rho(z) := \sum_{n\ge0} a_n(\rho)\,z^n
= -\frac{1}{z}\,\gamma\!\left(\rho,\frac{1}{z}\right).
\]

Setting $W := W_0(-\rho e^{-\rho} z)$:
\begin{equation}
\boxed{
\mathcal{G}_\rho(z)
= \frac{ W^2 + \rho z\bigl(\rho^2+(1+\rho)W\bigr)}{z\,(\rho+W)^2},
\qquad
W=W_0\!\bigl(-\rho e^{-\rho} z\bigr).
}
\label{eq:G-closed}
\end{equation}

\paragraph{Step 5: Edge asymptotics at $\rho=1$.}
As $z\uparrow 1$, using $W_0(-e^{-1}+u)=-1+\sqrt{2u}-\frac{2}{3}u+O(u^{3/2})$:
\begin{equation}
\mathcal{G}_1(z)
= \frac{3}{2} - \frac{2\sqrt{2}}{3}\,\sqrt{1-z} + O(1-z).
\label{eq:G-edge}
\end{equation}
By the Flajolet-Odlyzko transfer theorem, $[z^n](1-z)^{1/2}\sim -\frac{1}{2\sqrt{\pi}}\,n^{-3/2}$, hence:
\begin{equation}
\boxed{
a_n(1) = \frac{\sqrt{2}}{3\sqrt{\pi}}\, n^{-3/2}\Bigl(1+O(n^{-1})\Bigr).
}
\label{eq:an-asymp-closure}
\end{equation}

The refined expansion:
\begin{equation}
\boxed{
a_n(1)=\frac{\sqrt{2}}{3\sqrt{\pi}}\,n^{-3/2}
+\frac{203\sqrt{2}}{360\sqrt{\pi}}\,n^{-5/2}
+O(n^{-7/2}).
}
\label{eq:an-asymp-refined}
\end{equation}
\end{proof}

\subsection{Exact Coefficient Formulas}

The generating function~\eqref{eq:G-closed} yields closed-form expressions for the coefficients.

\paragraph{First few coefficients.}
For general $\rho>0$:
\begin{align*}
a_0(\rho)&=\rho,\\
a_1(\rho)&=e^{-\rho}(\rho-1)+e^{-2\rho},\\
a_2(\rho)&=\bigl(2\rho+(\rho^2-2)e^{\rho}+2\bigr)e^{-3\rho},\\
a_3(\rho)&=\tfrac{1}{2}\Bigl(8\rho^2+12\rho+\bigl(3\rho^3+\rho^2-6\rho-6\bigr)e^{\rho}+6\Bigr)e^{-4\rho}.
\end{align*}

\paragraph{Probabilistic representation.}
Let $\lambda:=\rho n$ and $\mu:=\rho(n+1)$. For $n\ge1$, if $J\sim\mathrm{Poisson}(\lambda)$ and $K\sim\mathrm{Poisson}(\mu)$:
\[
a_n(\rho)=\frac{1}{n}\,\mathbb{E}\!\big[(n-J)(\rho-n+J)\,\mathbf{1}_{\{J\le n-1\}}\big]
+\frac{1}{n+1}\,\mathbb{E}\!\big[(n+1-K)(n-K)\,\mathbf{1}_{\{K\le n-1\}}\big].
\]

\paragraph{Poisson CDF form.}
Writing the Poisson CDF $Q_m(x):=\Pr\{\mathrm{Poisson}(x)\le m\} = e^{-x}\sum_{j=0}^{m} \frac{x^j}{j!}$:
\[
\begin{aligned}
a_n(\rho)=&
\frac{1}{n}\Big[n(\rho-n)\,Q_{n-1}(\lambda)
+ (2n-\rho-1)\,\lambda\,Q_{n-2}(\lambda)
- \lambda^{2}\,Q_{n-3}(\lambda)\Big]\\
&+\frac{1}{n+1}\Big[n(n+1)\,Q_{n-1}(\mu)
- 2n\,\mu\,Q_{n-2}(\mu)
+ \mu^{2}\,Q_{n-3}(\mu)\Big],
\end{aligned}
\]
where $\lambda=\rho n$ and $\mu=\rho(n+1)$.

\subsection{Comparison of Shuffling Methods}

We summarize the asymptotic constants for all three sampling methods at $\rho=1$:

\begin{center}
\renewcommand{\arraystretch}{1.5}
\begin{tabular}{lccc}
\toprule
\textbf{Method} & \textbf{Test Loss} & \textbf{Training Loss} & \textbf{Test Loss Ratio} \\
\midrule
Shuffle-Once & $\frac{1}{2\sqrt{\pi n}}$ & $\frac{1}{8\sqrt{\pi}}n^{-3/2}$ & 1 (baseline) \\[0.3em]
IID Sampling & $\frac{1}{\sqrt{2\pi n}}$ & $\frac{1}{2\sqrt{2\pi}}n^{-3/2}$ & $\sqrt{2} \approx 1.41$ \\[0.3em]
Flip-Flop & $\frac{1}{\sqrt{2\pi n}}$ & $\frac{\sqrt{2}}{3\sqrt{\pi}}n^{-3/2}$ & $\sqrt{2} \approx 1.41$ \\
\bottomrule
\end{tabular}
\end{center}

Key observations:
\begin{itemize}
\item Shuffle-once achieves a $\sqrt{2}$ improvement in test loss over both IID and flip-flop.
\item For training loss, shuffle-once is $2\sqrt{2}\approx 2.83\times$ better than IID and $\approx 3.77\times$ better than flip-flop.
\item Flip-flop has the same test loss constant as IID, but worse training loss---the reverse pass provides no benefit and actively harms training convergence.
\end{itemize}

\section{Learning Rate Generalizations}
\label{app:lr}

This appendix extends the main results to general learning rates $\eta \in (0, 2)$.
The update rule becomes:
\[
A_{t+1}=(I-\eta\,u_{t+1}u_{t+1}^\top)\,A_t,\qquad \|u_{t+1}\|=1.
\]
For $\eta \in (0, 2)$, we have $\|I-\eta uu^\top\|\le 1$, ensuring $\|A_t\|\le 1$ for all $t$.

\subsection{Trace Moments with Learning Rate}
\label{sec:warmup-eta}

We first generalize the trace moment analysis.
Define the Stieltjes transform:
\begin{align}
s_t(\lambda;\eta)
=\frac{1}{d}\,\mathbb{E}\,\mathrm{Tr}\big((A_t-\lambda I)^{-1}\big)
=-\sum_{n\ge 0}\frac{1}{d}\,\mathbb{E}\,\mathrm{Tr}(A_t^n)\,\lambda^{-n-1}.
\label{eq:stieltjes-simple-eta}
\end{align}

\begin{theorem}[Trace Moments with Learning Rate]
For $\rho=1$ and $\eta \in (0,2)$:
\begin{equation}
m_n(\eta,1)=\frac{\sqrt{a(\eta)}}{\sqrt{\pi}}\;n^{-1/2}\Big(1+O(n^{-1/2})\Big),
\qquad a(\eta)=\frac{1-\eta/2}{\eta}.
\label{eq:crit-asymp-eta}
\end{equation}
For $\eta=1$, this recovers $m_n(1,1)\sim (2\pi n)^{-1/2}$.
\end{theorem}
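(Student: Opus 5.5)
The plan is to rerun the Stieltjes-transform argument of Section~\ref{sec:warmup} with a general step size and then do singularity analysis at $z=1$.

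\textbf{Step 1: transport equation.} For $A_{t+1}=(I-\eta uu^\top)A_t$ we have $A_{t+1}-\lambda I=(A_t-\lambda I)-\eta\,u\,(u^\top A_t)$. Sherman--Morrison with $R=(A_t-\lambda I)^{-1}$ gives
\[
\Tr R'-\Tr R=\frac{\eta\,u^\top A_tR^2u}{1-\eta\,u^\top A_tRu}.
\]
Use $A_tR=I+\lambda R$ and the sphere concentration of Step~2 of Section~\ref{sec:warmup}. It applies verbatim for $|\lambda|>1$, since $\|A_t\|\le1$ when $\eta\in(0,2)$. Then $u^\top A_tRu\approx 1+\lambda s$ and $u^\top A_tR^2u\approx s+\lambda\partial_\lambda s$. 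This yields
\[
\partial_\rho s=\frac{\eta\,(s+\lambda\partial_\lambda s)}{1-\eta-\eta\lambda s},\qquad s(0,\lambda)=\frac{1}{1-\lambda}.
\]
At $\eta=1$ this reduces to~\eqref{eq:pde-stieltjes}, which is a consistency check.

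\textbf{Step 2: characteristics.} Set $g:=\lambda s$, so that $s+\lambda\partial_\lambda s=\partial_\lambda g$. Then $g$ solves $\partial_\rho g+\frac{\eta\lambda}{\eta g+\eta-1}\,\partial_\lambda g=0$. Hence $g$ is constant along the curves
\[
\lambda(\rho)=\lambda_0\exp\!\Big(\frac{\eta\rho}{\eta g+\eta-1}\Big),\qquad g=\frac{\lambda_0}{1-\lambda_0}.
\]
Put $z=1/\lambda$ and $f(z):=\sum_n m_n(\eta,\rho)z^n=-g$. Eliminating $\lambda_0=g/(1+g)$ gives the implicit equation
\[
z\,\frac{g}{1+g}\,\exp\!\Big(\frac{\eta\rho}{\eta g+\eta-1}\Big)=1 .
\]
This is a generalized (r-)Lambert equation, consistent with Section~\ref{sec:app-sgd}. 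I will work with the implicit form and not a closed form.

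\textbf{Step 3: singularity analysis at $\rho=1$.} Write the equation as $\Phi(f,z)=0$ and find the point where $\partial_f\Phi=0$. As $z\uparrow1$, I expect $f\to\infty$: indeed $g\to-1$ makes $\lambda_0\to\infty$, matching the Poisson picture in which $m_n$ decays only like $n^{-1/2}$. So the right variable is $h:=1/(1+g)$, or equivalently the characteristic constant. I will expand $\log z=-\log(\cdot)-\eta/(\eta g+\eta-1)$ around the critical point. I expect the linear term to cancel exactly at $\rho=1$, since that is the criticality condition. This leaves $1-z\approx \kappa(\eta)\,h^{-2}$, hence
\[
f(z)\sim\sqrt{a(\eta)}\,(1-z)^{-1/2}.
\]
The target is $\kappa=1/a(\eta)=\eta/(1-\eta/2)$, which is $2$ at $\eta=1$, matching $W_0(-e^{-1}+u)=-1+\sqrt{2u}$. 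The Flajolet--Odlyzko transfer $[z^n](1-z)^{-1/2}\sim(\pi n)^{-1/2}$ then gives~\eqref{eq:crit-asymp-eta}. The next term in the Puiseux expansion produces the $O(n^{-1/2})$ relative error.

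\textbf{Main obstacle.} Transfer requires $z=1$ to be the unique dominant singularity on $|z|=1$, with $f$ analytic in a $\Delta$-domain. For $\eta=1$ this is inherited from $W_0$. For $\eta\neq1$ the spectrum of $A$ also carries the $(1-\eta)$ eigenvalue, which could create a singularity near $z=1/(1-\eta)$. I will show that this point lies outside the closed unit disk precisely because $|1-\eta|<1$ for $\eta\in(0,2)$. I will also rule out other branch points on $|z|=1$ by a monotonicity argument on the real characteristic map. A second, smaller check is that the second derivative $\partial_f^2\Phi$ at the branch point is nonzero and has the claimed value; this is where $a(\eta)$ is pinned down, so I will verify it against the $\eta=1$ Stirling value.
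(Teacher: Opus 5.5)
Your route is the paper's own. You use the same Sherman--Morrison step and the transport PDE $\partial_\rho s=\eta(s+\lambda\partial_\lambda s)/(1-\eta-\eta\lambda s)$. You then use conservation of $\lambda s$ along characteristics, the same implicit generalized-Lambert relation, and the expansion $z=1-a(\eta)f^{-2}+O(f^{-3})$ at $f=\infty$ followed by transfer. The paper's Lagrange--B\"urmann coefficient formula is not needed for the asymptotic.

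Two slips in Step~3 need fixing:
\begin{itemize}
\item As $z\uparrow1$ you have $g=-f\to-\infty$ and $\lambda_0=g/(1+g)\to1$. The point $g=-1$, $\lambda_0=\infty$ is the trivial point $z=0$, not the branch point.
\item With $h=1/(1+g)\approx-1/f$, the local relation is $1-z\approx a(\eta)\,h^{2}$, i.e.\ $h^{2}\approx\kappa(1-z)$ with $\kappa=1/a(\eta)$. It is not $1-z\approx\kappa h^{-2}$.
\end{itemize}
Also, the term after $\sqrt{a}\,(1-z)^{-1/2}$ in the Puiseux expansion is a constant. So the relative error is in fact $O(n^{-1})$, consistent with the $1-\frac{1}{12n}$ correction at $\eta=1$.

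Your $\Delta$-analyticity check is something the paper omits. However, the danger is not at $1/(1-\eta)$, and monotonicity on the real axis cannot exclude non-real singularities on $|z|=1$. Here is an argument that does work, for $\eta\neq1$ and with $\tau=\eta(2-\eta)$.
\begin{itemize}
\item Write $f=1+k$. The relation becomes $z=\frac{k}{1+k}\,e^{\eta/(1+\eta k)}$.
\item Apart from $k=\infty$ (critical value $1$), this map has exactly one critical point, $k_*=-1/\tau$. Its critical value is $z_*=(1-\eta)^{-2}e^{\tau/(1-\eta)}$.
\item The essential singularity at $k=-1/\eta$ has only the asymptotic values $0$ and $\infty$.
\item Hence any inverse branch can be singular at finite nonzero $z$ only at $1$ or $z_*$.
\end{itemize}
Note that $z_*\in(0,1)$ when $\eta\in(1,2)$. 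So you also need that your branch is analytic in $|z|<1$, which follows from $|m_n|\le1$; this places $z_*$ on another sheet. Since $|z_*|\neq1$ for $\eta\neq1$, this yields the $\Delta$-domain at $z=1$ that transfer requires.
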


\begin{proof}
\textbf{Step 1: One-step update with $\eta$.}
Write $R=(A_t-\lambda I)^{-1}$. Sherman-Morrison gives:
\begin{equation}
\mathrm{Tr}\big((A_{t+1}-\lambda I)^{-1}\big)-\mathrm{Tr}(R)
=\frac{\eta\,u^\top R u+\eta\lambda\,u^\top R^2 u}{\,1-\eta-\eta\lambda\,u^\top R u\,}.
\label{eq:onestep-eta}
\end{equation}

\textbf{Step 2: Concentration and PDE.}
Taking expectations and passing to the scaling limit $\rho=t/d$:
\begin{equation}
\partial_\rho s(\rho,\lambda;\eta)
=\frac{\,\eta\,s(\rho,\lambda;\eta)\;+\;\eta\lambda\,\partial_\lambda s(\rho,\lambda;\eta)\,}
{\,1-\eta-\eta\lambda\,s(\rho,\lambda;\eta)\,},
\qquad
s(0,\lambda;\eta)=\frac{1}{1-\lambda}.
\label{eq:pde-eta}
\end{equation}
For $\eta=1$ this reduces to the standard transport equation.

\textbf{Step 3: Method of characteristics.}
The product $C_*:=\lambda(\rho)\,s(\rho)$ is conserved along characteristics:
\[
\frac{d}{d\rho}\big(\lambda s\big)=0
\qquad\Rightarrow\qquad \lambda(\rho)\,s(\rho)=C_* \text{ (constant).}
\]
The implicit solution is:
\begin{equation}
\lambda=\frac{C_*}{1+C_*}\,\exp\!\Big(-\frac{\eta\rho}{\,1-\eta-\eta C_*\,}\Big),
\qquad
s(\rho,\lambda;\eta)=\frac{C_*}{\lambda}.
\label{eq:implicit-s-eta}
\end{equation}

\textbf{Step 4: Exact moments via Lagrange-Bürmann.}
Setting $u=1+h:=-C_*$, the Lagrange-Bürmann formula gives for $n\ge1$:
\begin{equation}
m_n(\eta,\rho)=\frac{1}{n}\,[h^{\,n-1}]\,(1+h)^n\,\exp\!\Big(-\frac{n\,\eta\rho}{\,1+\eta h\,}\Big).
\label{eq:LB-mn-eta}
\end{equation}
The first few moments are:
\[
\begin{aligned}
m_1(\eta,\rho)&=e^{-\eta\rho},\\
m_2(\eta,\rho)&=e^{-2\eta\rho}\big(1+\eta^2\rho\big),\\
m_3(\eta,\rho)&=e^{-3\eta\rho}\Big(1+3\eta^2\rho-\eta^3\rho+\tfrac{3}{2}\eta^4\rho^2\Big).
\end{aligned}
\]

\textbf{Step 5: Critical asymptotics.}
For $\rho=1$, as $u\to\infty$:
\[
\Psi(u;\eta)=1-\frac{a(\eta)}{u^2}+O(u^{-3}),
\qquad
a(\eta)=\frac{1-\eta/2}{\eta}.
\]
Hence $M(z;\eta)\sim \sqrt{a(\eta)}\,(1-z)^{-1/2}$ as $z\uparrow 1$, giving~\eqref{eq:crit-asymp-eta}.
\end{proof}

\begin{proposition}[Monotonicity in Learning Rate]
For fixed $n$ and $\rho$, $m_n(\eta,\rho)$ is strictly decreasing in $\eta \in (0, 2)$.
\end{proposition}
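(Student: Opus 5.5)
The plan is to work directly from the exact Lagrange--B\"urmann representation \eqref{eq:LB-mn-eta}, not from the random-matrix model. That formula gives $m_n(\eta,\rho)$ for every $n\ge1$ as a single coefficient extraction whose only $\eta$-dependence sits in the exponential. Write it as a contour integral,
\[
m_n(\eta,\rho)=\frac{1}{2\pi i\,n}\oint_{|h|=r}\frac{(1+h)^n}{h^{n}}\exp\!\Big(-\frac{n\eta\rho}{1+\eta h}\Big)\,dh ,
\]
with $r<1/\eta$ so that the integrand is analytic in $\eta$. Differentiating under the integral gives
\[
\partial_\eta m_n(\eta,\rho)=-\rho\,[h^{n-1}]\,\frac{(1+h)^n}{(1+\eta h)^{2}}\exp\!\Big(-\frac{n\eta\rho}{1+\eta h}\Big)=: -\rho\,T_n(\eta,\rho).
\]
The claim is therefore equivalent to $T_n(\eta,\rho)>0$ for all $\eta\in(0,2)$.

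\textbf{Sanity check, and a caveat I would settle first.} From the listed moments, $\partial_\eta m_2=2\rho e^{-2\eta\rho}(\eta-1-\eta^2\rho)$. This is negative for every $\eta$ exactly when $\rho\ge 1/4$, since the quadratic $\eta^2\rho-\eta+1$ has negative discriminant. For small $\rho$, however, it is positive somewhere in $(0,2)$; for example $\rho=0.1$, $\eta=1.5$ gives $\eta-1-\eta^2\rho=0.275>0$. So as literally stated for all $\rho$, the proposition fails. I would prove it under a hypothesis $\rho\ge\rho_0$, and in particular for the critical case $\rho=1$ used throughout Section~\ref{sec:shuffling}. The first task is to pin down the correct threshold by computing $T_n$ for the first few $n$ symbolically.

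\textbf{Positivity of $T_n$.} Factor out $e^{-n\eta\rho}$ and write $-\frac{n\eta\rho}{1+\eta h}=-n\eta\rho+\frac{n\eta^2\rho h}{1+\eta h}$. The remaining exponential, expanded in powers of $\eta^2 n\rho\,h$, gives
\[
T_n=e^{-n\eta\rho}\sum_{k\ge0}\frac{(n\rho\eta^2)^k}{k!}\,[h^{n-1-k}]\,(1+h)^n(1+\eta h)^{-k-2}.
\]
The inner coefficients are alternating in $\eta$, so termwise positivity fails and some regrouping is needed. I would try two routes in order.

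The first route is the substitution $w=h/(1+\eta h)$, with $h=w/(1-\eta w)$. This turns $(1+\eta h)^{-1}$ into $1-\eta w$ and the exponential into $e^{n\eta^2\rho w}$. It changes $(1+h)^n$ into $(1+(1-\eta)w)^n(1-\eta w)^{-n}$, and the Jacobian together with $h^{-n}$ combines cleanly. The coefficient should then become an extraction from
\[
\big(1+(1-\eta)w\big)^n(1-\eta w)^{\,1-?}e^{n\eta^2\rho w},
\]
where the exponent of $(1-\eta w)$ still has to be determined. That expression can be read probabilistically, as a binomial mixed with a Poisson variable of mean $n\eta^2\rho$, in the same way the $\eta=1$ case reduced to the Poisson formula $m_n(\rho)=\E[(1-N/n)\mathbf 1_{N\le n}]$. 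I would aim to write $T_n$ as $e^{-n\eta\rho}\,\E[\,\cdot\,]$ of a quantity that is visibly nonnegative once $\rho\ge\rho_0$.

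The second route, as a fallback, splits by size of $n$. For large $n$, a saddle-point analysis of the contour integral for $T_n$, paralleling Step~5 of the trace-moment theorem, should give $T_n\sim c(\eta,\rho)\,n^{-1/2}$ with an explicit $c>0$ at $\rho=1$. This is consistent with $\sqrt{a(\eta)}$ being decreasing in $\eta$. The finitely many small $n$ would then be checked directly.

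The main obstacle is this positivity step. The alternating signs from $(1+\eta h)^{-k-2}$ when $\eta>1$ mean that no naive expansion works, and the threshold in $\rho$ has to enter the argument somewhere. The change of variables that produces a genuine probabilistic, manifestly positive representation is where I expect the real work to lie.
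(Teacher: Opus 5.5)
Your reduction $\partial_\eta m_n=-\rho\,T_n$ is the same differentiation the paper performs. The paper then asserts that $T_n>0$ ``since the integrand is a positive real-analytic function.'' That inference is invalid, because positivity of a function says nothing about the signs of its Taylor coefficients. Your $n=2$ counterexample is correct. It agrees with the paper's S-transform $S_A(w)=\exp\bigl(\rho\eta/(1+\eta w)\bigr)$, and it is visible directly from $m_2=1-\rho\,\eta(2-\eta)+O(\rho^2)$, which increases on $(1,2)$. So the proposition is false as stated, and adding a hypothesis on $\rho$ is the right move.

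\textbf{The gap.} You never prove $T_n>0$ for any range of $\rho$, and neither route works as you envisage.

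\emph{Route 1.} If you carry out your substitution $h=w/(1-\eta w)$, the factor $(1-\eta w)$ drops out entirely. The Jacobian cancels $(1+\eta h)^{-2}$, and $h^{-n}$ cancels the denominator of $(1+h)^n$. The result is
\[
T_n=e^{-n\eta\rho}\,[w^{n-1}]\,\bigl(1+(1-\eta)w\bigr)^n e^{n\eta^2\rho w}.
\]
For $\eta\le 1$ every term is nonnegative and the top one is positive. For $\eta\in(1,2)$ the binomial parameter is negative, so no manifestly positive binomial--Poisson reading exists. Instead one gets
\[
T_n=e^{-n\eta\rho}(1-\eta)^{n-1}L^{(1)}_{n-1}(x),\qquad x=\frac{n\eta^2\rho}{\eta-1},
\]
and as $\eta$ runs over $(1,2)$, $x$ sweeps exactly $(4n\rho,\infty)$. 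Hence $T_n>0$ on $(0,2)$ if and only if $4n\rho$ is at least the largest zero $x_{\max}$ of $L^{(1)}_{n-1}$.

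\emph{The missing fact} is $x_{\max}<4n$. The function $u=e^{-x/2}x\,L^{(1)}_{n-1}(x)$ solves $u''+(n/x-\tfrac14)u=0$. On $[4n,\infty)$ the coefficient is $\le 0$, and a solution decaying at infinity cannot vanish there. This proves the corrected statement for every $\rho\ge 1$, including the critical case.

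\emph{The threshold depends on $n$.} The exact threshold $x_{\max}/(4n)$ equals $\tfrac14$, $(3+\sqrt3)/12\approx0.39$, and $\approx0.485$ for $n=2,3,4$. It tends to $1$, since $x_{\max}=4n-O(n^{1/3})$. So calibrating a uniform $\rho_0$ from the first few $n$ would give a wrong answer, and for every fixed $\rho<1$ monotonicity fails for all large $n$.

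\emph{Route 2.} Your fallback has the same blind spot. At $\rho=1$, as $\eta\to2$, $x$ approaches the turning point $4n$ of the Laguerre equation. There the bulk saddle-point asymptotics for $T_n$ are not uniform in $\eta$, so they cannot reduce the problem to finitely many $n$.
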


\begin{proof}
Differentiating~\eqref{eq:LB-mn-eta}:
\[
\partial_\eta m_n(\eta,\rho)
=-\,\rho\,[h^{\,n-1}]\,\frac{(1+h)^n}{(1+\eta h)^2}\,
\exp\!\Big(-\frac{n\,\eta\rho}{1+\eta h}\Big)\;<\;0.
\]
The coefficient is strictly positive since the integrand is a positive real-analytic function.
\end{proof}

\subsection{Shuffle-Once Test Loss with Learning Rate}
\label{sec:proof-SStestloss-eta}

We generalize the Frobenius norm analysis to $\eta \in (0,2)$.
Define the two-resolvent function:
\[
\phi(\rho;x,y;\eta)
:=\lim_{d\to\infty}\frac{1}{d}\,\mathbb{E}\,\mathrm{Tr}\!\Big((A_{\rho d}-xI)^{-1}(A_{\rho d}^\top-yI)^{-1}\Big).
\]

\begin{theorem}[Shuffle-Once Test Loss with Learning Rate]
For $\rho=1$ and fixed $\eta\in(0,2)$, as $n\to\infty$:
\[
c_{n}(\eta,1) \sim \sqrt{\frac{\,\tfrac{1}{\eta}-\tfrac{1}{2}\,}{2\pi}}\;\frac{1}{\sqrt{n}}.
\]
\end{theorem}

\begin{proof}
\textbf{Step 1: Two-resolvent update.}
Write $R_t=(A_t-xI)^{-1}$ and $S_t=(A_t^\top-yI)^{-1}$.
Sherman-Morrison gives update formulas for $\mathrm{Tr}(R_{t+1}S_{t+1})$.

\textbf{Step 2: Transport PDEs.}
Define $\alpha:=x\,\phi_x$, $\beta:=y\,\phi_y$, $\Delta_x:=1-\eta-\eta\alpha$, $\Delta_y:=1-\eta-\eta\beta$.
Let $\tau=\eta(2-\eta)$ and $F:=xy\,\phi$. The key equation is:
\begin{equation}
\frac{dF}{d\rho} = \kappa F + \gamma F^2,
\label{eq:phi-deriv-eta}
\end{equation}
where $\kappa:=\frac{\eta^2(1+\alpha+\beta)}{\Delta_x\Delta_y}$ and $\gamma:=\frac{\eta^2}{\Delta_x\Delta_y}$.

\textbf{Step 3: Solution along characteristics.}
With initial condition $F(0)=\alpha\beta$:
\[
F(\rho;\alpha,\beta)
=\frac{\kappa\,\alpha\beta\,e^{\kappa\rho}}{\kappa+\gamma\,\alpha\beta\,(1-e^{\kappa\rho})}.
\]
The characteristic equations:
\begin{align}
x=\frac{\alpha}{1+\alpha}\,\exp\!\Big(-\frac{\eta\rho}{\Delta_x}\Big),
\qquad
y=\frac{\beta}{1+\beta}\,\exp\!\Big(-\frac{\eta\rho}{\Delta_y}\Big).
\label{eq:C1}
\end{align}

\textbf{Step 4: Coefficient extraction.}
For $\eta\neq 1$, the two-variable Lagrange-Bürmann formula gives for $n\ge 1$:
\begin{equation*}
c_n(\eta,\rho)
=\frac{1}{n^2}\,[\alpha^{n-1}\beta^{n-1}]\,
\partial_\alpha\partial_\beta\!\Big(F(\rho;\alpha,\beta)\,\Psi(\alpha)^n\,\Psi(\beta)^n\Big).
\end{equation*}

\textbf{Step 5: Diagonal asymptotics.}
As $(x,y)\to(1,1)$, the characteristic inversion yields a square-root cusp, giving:
\begin{equation}
c_n(\eta,1)\;=\;\sqrt{\frac{\tfrac{1}{\eta}-\tfrac{1}{2}}{2\pi}}\;n^{-1/2}\Bigl(1+O(n^{-1/2})\Bigr).
\label{eq:crit-asymp}
\end{equation}
\end{proof}

The first few test loss coefficients (at general $\rho$) are
$c_0=1$,
$c_1=e^{-(2-\eta)\eta\rho}$,
and
$c_2=e^{-(4-\eta)\eta\rho + \eta^2\rho}
+ e^{-(4-\eta)\eta\rho} (2-\eta)^2 \eta^2 \rho (1+\eta^2\rho)$.

\begin{figure}[ht]
   \centering
   \includegraphics[width=0.7\textwidth]{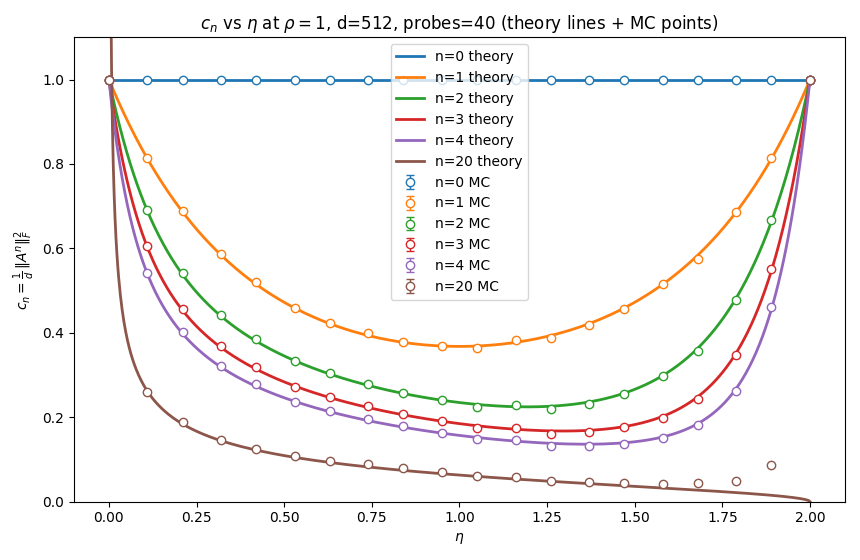}
   \caption{Single-shuffle test loss with learning rate $\eta\in[0,2]$ at $\rho=1$.
   Interestingly, the optimal $\eta$ is not at 1 when $n\ge 2$.
   For $n=20$ we use the asymptotic formula derived above. Note: Legend label $\alpha$ corresponds to step size $\eta$ used in the text.}
   \label{fig:single-shuffle-test-loss-lr}
\end{figure}

\subsection{Flip-Flop Test Loss with Learning Rate}
\label{sec:FF-eta-proof}

For flip-flop, the key quantity is:
let $\tau:=\eta(2-\eta)\in(0,1]$ for $\eta\in(0,2)$.
This arises because $(I-\eta uu^\top)^2 = I - \tau uu^\top$.

\begin{theorem}[Flip-Flop Test Loss with Learning Rate]
For $\rho=1$ and $\eta\in(0,2)$:
\begin{equation}
b_n(1,\eta)\sim
\sqrt{\frac{\,\tfrac{1}{\tau}-\tfrac{1}{2}\,}{\pi}}\;\frac{1}{\sqrt{n}},
\qquad \tau=\eta(2-\eta).
\label{eq:bn-asymp-critical}
\end{equation}
At $\eta=1$ ($\tau=1$), this gives $b_n(1,1)\sim (2\pi n)^{-1/2}$.
\end{theorem}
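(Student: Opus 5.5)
The plan is to reduce the flip-flop test loss at step size $\eta$ to the shuffle-once \emph{trace} moments at the modified step size $\tau=\eta(2-\eta)$. The key identity is $(I-\eta uu^\top)^2=I-\tau uu^\top$. As in the $\eta=1$ proof of Theorem~\ref{thm:flipflop-test}, two flip-flop epochs multiply by $B_m:=A_m^\top A_m$, so
\[
b_n(\rho,\eta)=\lim_{d\to\infty}\tfrac1d\E\Tr(B_{\rho d}^n).
\]
Writing $v:=A_t^\top u$ for the next unit row $u$, the recursion becomes the rank-one downdate
\[
B_{t+1}=A_t^\top (I-\eta uu^\top)^2A_t=B_t-\tau vv^\top .
\]
The only change from the $\eta=1$ case is the weight $\tau\in(0,1]$ on the rank-one term.

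\textbf{Step 1: one-step trace identity.} For $|\lambda|>1$, set $R=(B_t-\lambda I)^{-1}$. Sherman--Morrison gives
\[
\Tr R'-\Tr R=\frac{\tau\,v^\top R^2v}{1-\tau\,v^\top Rv}.
\]

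\textbf{Step 2: concentration.} I would use spherical concentration of the quadratic forms $u^\top A_tR^kA_t^\top u$, exactly as in the flip-flop proof, together with $RB_t=I+\lambda R$. This yields
\[
v^\top Rv=1+\lambda s+o(1),\qquad v^\top R^2v=s+\lambda\partial_\lambda s+o(1),
\]
where $s=\frac1d\E\Tr R$.

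\textbf{Step 3: transport equation.} Passing to $\rho=t/d$ gives
\[
\partial_\rho s=\frac{\tau\,(s+\lambda\partial_\lambda s)}{1-\tau-\tau\lambda s},\qquad s(0,\lambda)=\frac1{1-\lambda}.
\]
This is \emph{literally} equation~\eqref{eq:pde-eta}, the shuffle-once trace-moment PDE, with $\eta$ replaced by $\tau$, and the initial condition is the same. Uniqueness of the characteristic solution \eqref{eq:implicit-s-eta} then gives $b_n(\rho,\eta)=m_n(\tau,\rho)$ for every $n$. As a sanity check I would verify this against the first moments, e.g. $m_1(\tau,\rho)=e^{-\tau\rho}$, which should equal $\frac1d\E\Tr(A^\top A)$ at first order.

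\textbf{Step 4: asymptotics.} Since $\tau\in(0,1]\subset(0,2)$, the Trace Moments with Learning Rate theorem applies at step size $\tau$. It gives
\[
m_n(\tau,1)\sim\sqrt{a(\tau)/\pi}\;n^{-1/2},\qquad a(\tau)=\frac{1-\tau/2}{\tau}=\frac1\tau-\frac12,
\]
which is exactly \eqref{eq:bn-asymp-critical}. At $\eta=1$ we have $\tau=1$ and $a=1/2$, recovering $(2\pi n)^{-1/2}$.

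\textbf{Main obstacle.} I expect the main difficulty to be Step 2: justifying the $d\to\infty$ limit uniformly over $t\le\rho d$. This requires keeping the Sherman--Morrison denominator $1-\tau v^\top Rv$ bounded away from zero. For $|\lambda|$ large, $\|R\|\le 1/(|\lambda|-1)$ makes $\tau|v^\top Rv|<1$, so I would first prove the PDE on $\{|\lambda|>\lambda_*\}$ and then extend by analyticity. Here $\|A_t\|\le1$ holds because $\|I-\eta uu^\top\|\le1$ for $\eta\in(0,2)$. The remaining ingredients are algebraic or already established earlier in the paper.
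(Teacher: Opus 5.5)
Your proposal is correct and follows essentially the paper's argument: the paper uses the same downdate $B_{t+1}=B_t-\tau vv^\top$ and the same transport PDE (written for $y=1+\lambda s$ in \eqref{eq:PDE-y}), whose characteristic solution is exactly \eqref{eq:implicit-s-eta} with $\eta\to\tau$. The only difference is that the paper re-expands the implicit relation as $y\to\infty$ to obtain the constant $\sqrt{(1/\tau-1/2)/\pi}$, whereas you cite the trace-moment theorem at step size $\tau$; this is the same reduction the paper itself makes in the $\eta=1$ flip-flop proof, and it is consistent with the paper's small-$n$ formulas, which show $b_n(\rho,\eta)=m_n(\tau,\rho)$.
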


\begin{proof}
\textbf{Step 1: Modified dynamics.}
With $v:=A_t^\top u$:
\[
B_{t+1}
=A_t^\top(I-\eta uu^\top)^2A_t
=B_t-\tau\,vv^\top,
\quad \tau:=\eta(2-\eta).
\]

\textbf{Step 2: Transport PDE.}
Setting $y:=1+\lambda s$:
\begin{equation}
\partial_\rho y
=
\frac{\tau\,\lambda}{1-\tau y}\,\partial_\lambda y,
\qquad
y(0,\lambda)=\frac{1}{1-\lambda}.
\label{eq:PDE-y}
\end{equation}

\textbf{Step 3: Implicit solution.}
\begin{equation}
\lambda
=\Bigl(1-\frac{1}{y}\Bigr)\exp\!\Bigl(-\frac{\tau\,\rho}{1-\tau y}\Bigr),
\qquad
y=1+\lambda s(\rho,\lambda).
\label{eq:implicit-y}
\end{equation}

\textbf{Step 4: Exact coefficient formula.}
Setting $w:=1-\Phi$ (where $\Phi(z)=\sum_{n\ge 0}b_n z^n$), the Lagrange-Bürmann theorem gives:
\begin{equation}
\boxed{
b_n(\rho,\eta)
=\frac{(-1)^{\,n+1}}{n}\,[u^{\,n-1}]\,
(1-u)^n\exp\!\Bigl(-\frac{n\,\tau\,\rho}{\,1-\tau u\,}\Bigr),
\qquad n\ge 1.
}
\label{eq:bn-coeff}
\end{equation}

\textbf{Step 5: Verification.}
For small $n$:
\begin{align*}
b_1(\rho,\eta)&=e^{-\tau\rho}, \\
b_2(\rho,\eta)&=(1+\tau^2\rho)\,e^{-2\tau\rho}, \\
b_3(\rho,\eta)&=\Bigl(1+3\tau^2\rho-\tau^3\rho+\tfrac{3}{2}\tau^4\rho^2\Bigr)e^{-3\tau\rho}.
\end{align*}
At $\eta=1$ ($\tau=1$, $\rho=1$): $b_1=e^{-1}$, $b_2=2e^{-2}$, $b_3=\tfrac{9}{2}e^{-3}$, matching the exact formula $b_n(1,1)=e^{-n}n^n/n!$.

\begin{figure}[ht]
   \centering
   \includegraphics[width=0.7\textwidth]{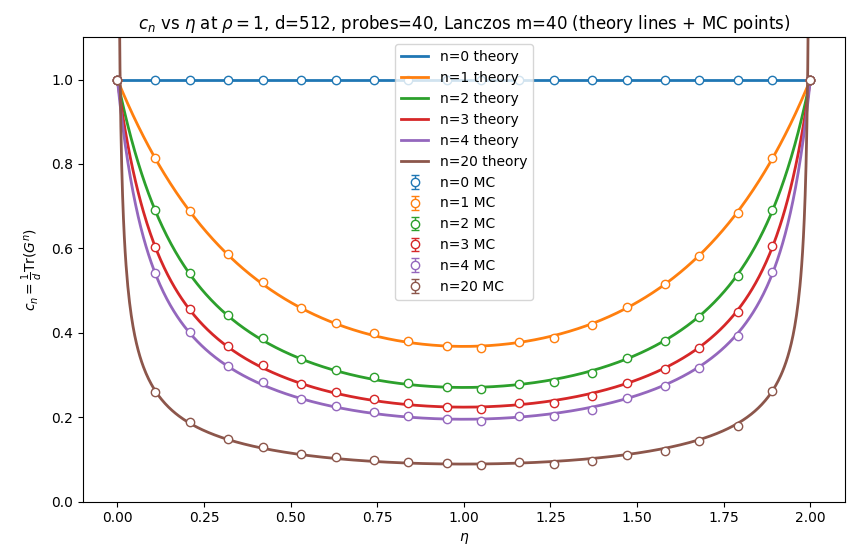}
   \caption{Flip-flop test loss with learning rate $\eta\in[0,2]$ at $\rho=1$.}
   \label{fig:flip-flop-test-loss-lr}
\end{figure}

\textbf{Step 6: Asymptotics.}
For $\rho=1$, expanding~\eqref{eq:implicit-y} as $y\to\infty$:
\[
z=\frac{1}{\lambda}
=1+\Bigl(\frac{1}{2}-\frac{1}{\tau}\Bigr)t^2+O(t^3),
\quad\Longrightarrow\quad
t=\sqrt{\frac{1-z}{\,\frac{1}{\tau}-\frac{1}{2}\,}}.
\]
The singular part gives~\eqref{eq:bn-asymp-critical}.
\end{proof}

\subsection{Summary: Learning Rate Dependence}

For all three quantities at $\rho=1$, the dependence on $\eta$ appears through a common factor:

\begin{center}
\renewcommand{\arraystretch}{1.5}
\begin{tabular}{lcc}
\toprule
\textbf{Quantity} & \textbf{Asymptotic Form} & \textbf{Coefficient} \\
\midrule
Trace moment & $\sqrt{\frac{a(\eta)}{\pi}}\,n^{-1/2}$ & $a(\eta) = \frac{1-\eta/2}{\eta}$ \\[0.5em]
Shuffle-once test & $\sqrt{\frac{a(\eta)}{2\pi}}\,n^{-1/2}$ & $a(\eta) = \frac{1}{\eta}-\frac{1}{2}$ \\[0.5em]
Flip-flop test & $\sqrt{\frac{a(\tau)}{\pi}}\,n^{-1/2}$ & $a(\tau) = \frac{1}{\tau}-\frac{1}{2}$, $\tau=\eta(2-\eta)$ \\
\bottomrule
\end{tabular}
\end{center}

Key observations:
\begin{itemize}
\item All convergence rates remain $n^{-1/2}$ regardless of $\eta$---only the leading constant changes.
\item The trace moment coefficient $a(\eta) = (1-\eta/2)/\eta$ is strictly decreasing in $\eta$ for $\eta \in (0, 2)$.
\item For flip-flop, the effective learning rate is $\tau = \eta(2-\eta)$, which is maximized at $\eta=1$.
\item At $\eta=1$: all formulas reduce to the standard Kaczmarz results derived in the main text.
\end{itemize}

The monotonicity in $\eta$ has practical implications: larger learning rates (up to $\eta < 2$ for stability) lead to faster convergence, but with diminishing returns as $\eta \to 2$.

\end{document}